\newtheorem{theorem}{Theorem}[section]
\newtheorem{definition}{Definition}[section]
\newtheorem{proposition}{Proposition}[section]
\newtheorem{corollary}{Corollary}[theorem]
\newtheorem{lemma}[theorem]{Lemma}
\renewcommand{\P}{\mathbb{P}}
\newcommand{\E}{\mathbb{E}}
\newcommand{\ind}{\mathds{1}}
\newcommand{\TV}{\mathrm{TV}}
\newcommand{\R}{\mathbb{R}}
\newcommand{\N}{\mathbb{N}}
\newcommand{\test}{\mathrm{test}}
\newcommand{\std}{\mathrm{std}}
\newcommand{\strat}{\mathrm{strat}}
\newcommand{\stratm}{\mathrm{strat}\text{-}}
\newcommand{\leb}{\mathrm{leb}}
\newcommand{\relleb}{\mathrm{leb}}
\newcommand{\rank}{\mathrm{rank}}
\newcommand{\Unif}{\mathrm{Unif}}
\newcommand{\dif}{\mathrm{d}}
\DeclareMathOperator*{\argmax}{arg\,max}
\newcommand{\DeltaSet}{\mathcal{D}}
\begin{document}

\twocolumn[

\aistatstitle{Strategic Conformal Prediction}

\aistatsauthor{
    Daniel Csillag
    \And
    Claudio José Struchiner
    \And
    Guilherme Tegoni Goedert
}

\aistatsaddress{
    \texttt{daniel.csillag@fgv.br} \\ School of Applied Mathematics \\ Fundação Getulio Vargas
    \And
    \texttt{claudio.struchiner@fgv.br} \\ School of Applied Mathematics \\ Fundação Getulio Vargas
    \And
    \texttt{guilherme.goedert@fgv.br} \\ School of Applied Mathematics \\ Fundação Getulio Vargas
} ]

\begin{abstract}
    When a machine learning model is deployed, its predictions can alter its environment, as better informed agents strategize to suit their own interests.
    With such alterations in mind, existing approaches to uncertainty quantification break.
    In this work we propose a new framework, Strategic Conformal Prediction, which is capable of robust uncertainty quantification in such a setting.
    Strategic Conformal Prediction
    is backed by a series of theoretical guarantees spanning marginal coverage, training-conditional coverage, tightness and robustness to misspecification that hold in a distribution-free manner.
    Experimental analysis further validates our method,
    showing its remarkable effectiveness in face of arbitrary strategic alterations, whereas other methods break.
\end{abstract}

\section{Introduction}

Machine learning models are widely used to manage risks associated with the actions of individuals.
However, these individuals can leverage their knowledge about the model to strategize and align the model's predictions to their own interests, causing a distribution shift.
This is typically referred to as the \emph{strategic} (or \emph{performative}) setting, and is ubiquitous in machine learning practice, such as in credit scoring, autonomous driving, spam mail classification, and chat-based interactions.

Strategic machine learning is a topic with deep connections to not only statistics, but also game theory, economics and causality, and has been the topic of significant prior research (e.g., \cite{perdomo-performative-prediction,nir-rosenfeld-regularization,levanon-practical,performative-sgd}; see also the survey of \cite{performative-survey}).

However, to the best of our knowledge, there has been no prior work on uncertainty quantification for the strategic setting.
This is essential to the safe use of machine learning methods, and especially so in safety-critical systems, where strategic issues can be found aplenty.

In this paper, we present the first method for uncertainty quantification in the strategic setting, based on the ideas of split conformal prediction \cite{vovk-conformal,split-conformal}.
By considering the underlying ML model and conformity score fixed and leveraging the model-agnostic nature of conformal prediction, our method is able to deal with strategic alterations on arbitrary tasks (e.g.
regression, classification, structured prediction)
while achieving a strategic marginal guarantee as well as a corresponding training-conditional guarantee~\cite{group-conditional-conformal,bian-training-conditional-conformal}.

We also present extensions of our method akin to the ones of standard conformal prediction, in particular ensuring group-conditional and label-conditional coverage.
Furthermore, by leveraging change-of-measure inequalities and select modeling assumptions, we prove a series of additional theoretical results on the robustness and efficiency/tightness of our method. Finally, we explore our method empirically in experiments on various datasets, validating the efficacy of our proposed solution.

Our main contributions are:
\vspace{-0.4cm}
\begin{enumerate}
    \item A new method for uncertainty quantification based on conformal prediction that is valid under strategic alterations, with guarantees akin to type-I error control and the usual validity property of conformal prediction. To our knowledge, this is the first instance of uncertainty quantification for the strategic setting, and is directly applicable in diverse settings including regression, classification, structured prediction, etc.
    \item Extensions of our proposed method for group-conditional and label-conditional coverage, showcasing that our method is versatile and can be extended similarly to standard conformal prediction methods;
    \item We develop a rich supporting theory for our method and framework, including guarantees of precision under model misspecification, as the strategic alterations converge (e.g. due to an equilibrium state) and on the tightness of our predictive intervals, further justifying our framework.
\end{enumerate}

\section{Framework and Method}

Suppose that we have a random sample $(Z_i)_{i=1}^n = (X_i, Y_i)_{i=1}^n \subset \mathcal{Z} = \mathcal{X} \times \mathcal{Y}$, and consider the problem of learning to predict $Y_i$ using $X_i$.

In standard conformal prediction, we seek to find a set-predictor $C_t : \mathcal{X} \to 2^{\mathcal{Y}}$ (indexed by a threshold $t \in \R$) such that, for any significance level $\alpha$, there is a corresponding threshold $t^\std_\alpha$ such that
\[ \P[Y \in C_{t^\std_\alpha} (X)] \geq 1 - \alpha, \]
with $C_t$, per split conformal prediction, taking the form
\begin{equation}\label{eq:conformal-set}
    C_t (X) = \left\{ y \in \mathcal{Y} : s(x, y) \leq t \right\}
\end{equation}
where $s : \mathcal{X} \times \mathcal{Y} \to \R$ is an arbitrary function (typically referred to as the `conformity score' in the conformal prediction literature) and $t^\std_\alpha$ is given by
\begin{align*}
    t^\std_\alpha = \inf \biggl\{ t \in \R : \frac{1}{n+1} \sum_{i=1}^n \ind[s(X_i, Y_i) > t] \quad\ %
    \\ + \frac{1}{n+1} \leq \alpha \biggr\}.
\end{align*}
In the strategic setting, we must consider that the distribution of the covariates $X$ may suffer alterations in response to our model, as better informed agents strategize to suit their own interests. Such alterations can be denoted as stochastic functions $\Delta \in \DeltaSet$, with $\Delta : \mathcal{X} \to \mathcal{X}$. We thus seek guarantees of the form
\begin{equation}\label{eq:goal}
    \P[\forall \Delta \in \DeltaSet,\ Y \in C_\alpha (\Delta (X))] \geq 1 - \alpha,
\end{equation}
where the covariates $X$ are altered by these functions $\Delta : \mathcal{X} \to \mathcal{X}$, while the response $Y$ does not change.\footnote{In the scope of this paper, we consider that the outcome $Y$ does not change. Though restrictive, this is a typical assumption in the strategic ML literature, and not too unreasonable, though perhaps a bit too conservative. Our setup can also be extended to the case where the $\Delta$ alters the outcomes $Y$, but this has additional subtleties and we leave to future work.}
By considering the probability uniform over $\Delta \in \DeltaSet$ we can obtain simultaneous validity with regards to multiple plausible alterations, in particular including the case of no alteration ($\Delta(x) = x$).

These alterations will typically be in the efforts of making the algorithm confident of the some outcome, which corresponds to ensuring that some region $\Omega \subset \mathcal{Y}$ of the outcome space is \emph{not} contained in the predicted $C_\alpha (\Delta (x))$ (as it corresponds to the set of labels that, with high probability, \emph{may} be the true one). I.e., by considering the construction in Equation~\ref{eq:conformal-set}, $\Delta$ seeks to maximize $\inf_{y \in \Omega} s(\Delta (X), y)$. Indeed, one can even think of the many $\Delta \in \DeltaSet$ as ranging from no alteration to an `optimal' alteration, interpolating in terms of the cost of performing these alterations.

Crucially,
these alteration functions $\Delta$ will generally depend arbitrarily on the conformity score $s$ and on the calibrated set predictor $C_t$ (which is fully specified by threshold $t$ itself).

For tractability purposes, however, we assume that the $\Delta$ do not depend on the threshold $t$ (but can depend freely on the conformity score $s$).
Thanks to the for-all over $\Delta \in \DeltaSet$, this is not too harsh of an assumption: if we have a $\Delta^\star_t$ as the ``true'' $\Delta$ as a function of the learned threshold $t$, we can consider $\DeltaSet = \{\Delta^*_{t'} : t' \in \R\}$, and our guarantee will certainly hold for $\Delta^*_t$.
Moreover, this sort of construction is robust: we prove that as long as the choice of $\DeltaSet$ is able to approximately cover the range of the ``true'' $\Delta^\star_t$, we still get strong coverage guarantees (see Proposition~\ref{thm:robustness-2}).

With this in mind, our goal becomes to attain some set-predictor $C_\alpha : \mathcal{X} \to 2^\mathcal{Y}$ satisfying Equation~\ref{eq:goal}, where the $\Delta \in \DeltaSet$ depend freely on $s$ but are independent from $C_\alpha$.
We show that this can be achieved simply by modifying the computation of $t^\std_\alpha$, incorporating suprema over the $\Delta \in \DeltaSet$:
\begin{align*}
    t^\strat_\alpha = \inf \Biggl\{ t \in \R &: \frac{1}{n+1} \sum_{i=1}^n \ind\left[ \sup_{\Delta \in \DeltaSet} s(\Delta(X_i), Y_i) > t \right] \\ & \qquad\qquad\qquad\qquad + \frac{1}{n+1} \leq \alpha \Biggr\}.
\end{align*}
\begin{theorem}\label{thm:conformal-marginal-guarantee}
    Let $Z_1, \ldots, Z_n, Z^\test$ be $n + 1$ exchangeable random variables in $\mathcal{X} \times \mathcal{Y}$. Let $t^\strat_\alpha$ be as above. Then, for any $\alpha \in (0, 1)$,
    \[ \P\left[ \forall \Delta \in \mathcal{D}, \ Y^\test \in C_{t^\strat_\alpha} (\Delta (X^\test)) \right] \geq 1 - \alpha. \]
\end{theorem}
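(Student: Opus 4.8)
The plan is to collapse the quantifier over $\Delta \in \DeltaSet$ into a single \emph{strategic conformity score} and then invoke the ordinary split-conformal argument. I would first set $\tilde s(x,y) := \sup_{\Delta \in \DeltaSet} s(\Delta(x), y)$, which — crucially, since $\DeltaSet$ is not allowed to depend on the threshold $t$ — is a fixed function of $(x,y)$. The key observation is that for every $t \in \R$,
\[
\bigl\{\forall \Delta \in \DeltaSet,\ Y^\test \in C_t(\Delta(X^\test))\bigr\} = \bigl\{\tilde s(X^\test, Y^\test) \le t\bigr\},
\]
which is immediate from $C_t(x) = \{y : s(x,y) \le t\}$ together with the fact that a family of reals lies below $t$ iff its supremum does. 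The identical rewriting shows that the defining condition of $t^\strat_\alpha$ is $\frac{1}{n+1}\sum_{i=1}^n \ind[\tilde s(X_i,Y_i) > t] + \frac{1}{n+1} \le \alpha$; that is, $t^\strat_\alpha$ is exactly the usual split-conformal threshold for the scores $\tilde s(Z_1), \dots, \tilde s(Z_n)$. So it will suffice to show $\P[\tilde s(Z^\test) \le t^\strat_\alpha] \ge 1 - \alpha$.

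For that, I would use exchangeability: since $Z_1, \dots, Z_n, Z^\test$ are exchangeable and $\tilde s$ is a fixed measurable map, the scores $\tilde s(Z_1), \dots, \tilde s(Z_n), \tilde s(Z^\test)$ are exchangeable real random variables, and the classical conformal coverage bound applies verbatim. Concretely, if $\alpha < 1/(n+1)$ then $t^\strat_\alpha = +\infty$ and the bound is trivial; otherwise $t^\strat_\alpha$ equals the $k$-th smallest of $\tilde s(Z_1), \dots, \tilde s(Z_n)$ with $k = \lceil(n+1)(1-\alpha)\rceil \le n$, and $\{\tilde s(Z^\test) > t^\strat_\alpha\}$ is exactly the event that at least $k$ of the $n+1$ scores are strictly below $\tilde s(Z^\test)$. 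Conditioning on the unordered multiset of these $n+1$ scores, exchangeability makes $\tilde s(Z^\test)$ equally likely to be any of the $n+1$ order statistics, so the conditional probability of that event is at most $(n+1-k)/(n+1) \le \alpha$ (ties only help); taking expectations and combining with the first step yields the theorem.

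The one delicate point — and the only real obstacle — is the well-definedness and measurability of $\tilde s$. If $\DeltaSet$ is an uncountable family, or the $\Delta$ are genuinely stochastic maps, then $\sup_{\Delta \in \DeltaSet} s(\Delta(\cdot), \cdot)$ need not be measurable, and one must ensure that the ``$\forall \Delta$'' appearing in the event and the supremum appearing in $t^\strat_\alpha$ denote the same object. I expect this to be dispatched by a mild regularity assumption on $\DeltaSet$ (separability, or reading the supremum as an essential supremum) and, when the $\Delta$ carry auxiliary randomness independent of the data, by first conditioning on that randomness — which leaves $(Z_i)_{i \le n}, Z^\test$ exchangeable — and running the above argument conditionally. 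With those formalities settled, the statement is precisely the textbook split-conformal guarantee, transported through the map $\tilde s$.
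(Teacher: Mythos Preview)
Your proposal is correct and follows essentially the same route as the paper: both collapse the $\forall\Delta$ into the single score $\tilde s(x,y)=\sup_{\Delta\in\DeltaSet}s(\Delta(x),y)$, recognize $t^{\strat}_\alpha$ as the ordinary split-conformal quantile of the $\tilde s(Z_i)$, and then invoke the exchangeability/rank argument (the paper packages this last step as its Lemma~\ref{thm:quantile-lemma}). Your extra remarks on measurability and on conditioning out the auxiliary randomness of stochastic $\Delta$'s are points the paper does not address explicitly, but they do not change the argument.
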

We can also make a training-conditional PAC guarantee for the batch setting, under an i.i.d. assumption:
\begin{theorem}\label{thm:training-conditional-guarantee}
    For any $\delta \in (0, 1)$, with probability of at least $1 - \delta$ over the draw of i.i.d. random variables $Z_1, \ldots, Z_n$, it holds that
    \begin{align*}
        & \P\left[ \forall \Delta \in \DeltaSet, \ Y^\test \in C_{t^\strat_\alpha} (\Delta (X^\test)) \ \middle|\ t^\strat_\alpha \right]
        \\ &\quad \geq 1 - \alpha - \sqrt{\frac{\log 1/\delta}{2n}}.
    \end{align*}
\end{theorem}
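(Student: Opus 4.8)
The plan is to reduce the statement to an empirical-quantile concentration bound for i.i.d.\ data. First I would absorb the quantifier over $\Delta$ into a single ``effective'' conformity score: for $i = 1, \ldots, n$ set $S_i := \sup_{\Delta \in \DeltaSet} s(\Delta(X_i), Y_i)$, and likewise $S^\test := \sup_{\Delta \in \DeltaSet} s(\Delta(X^\test), Y^\test)$. Since $Y^\test \in C_t(\Delta(X^\test))$ is equivalent to $s(\Delta(X^\test), Y^\test) \le t$, the event in the theorem is exactly $\{S^\test \le t^\strat_\alpha\}$, and the defining rule for $t^\strat_\alpha$ is precisely the usual split-conformal threshold rule applied to the scores $S_1, \ldots, S_n$. (When $1 - \alpha - \sqrt{\log(1/\delta)/(2n)} \le 0$ there is nothing to prove, and when $\alpha(n+1) < 1$ one has $t^\strat_\alpha = +\infty$ and the claim is trivial, so assume otherwise.)

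Next I would pass to the conditional probability. Because $Z^\test$ is i.i.d.\ with $Z_1, \ldots, Z_n$, the score $S^\test$ is independent of the calibration data and hence of $t^\strat_\alpha$; writing $F(s) := \P[S^\test \le s]$ for the CDF of the effective score, this gives that $\P[\forall \Delta \in \DeltaSet,\ Y^\test \in C_{t^\strat_\alpha}(\Delta(X^\test)) \mid t^\strat_\alpha] = F(t^\strat_\alpha)$ almost surely. So it suffices to prove $\P[F(t^\strat_\alpha) < 1 - \alpha - \epsilon] \le \delta$, where $\epsilon := \sqrt{\log(1/\delta)/(2n)}$ and the probability is over the calibration draw.

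The core step is a quantile-inversion argument followed by Hoeffding's inequality. Put $s^\dagger := \inf\{s \in \R : F(s) \ge 1 - \alpha - \epsilon\}$. Monotonicity and right-continuity of $F$ give $F(s^\dagger) \ge 1 - \alpha - \epsilon$, so on the event $\{F(t^\strat_\alpha) < 1-\alpha-\epsilon\}$ we must have $t^\strat_\alpha < s^\dagger$; since the constraint set in the definition of $t^\strat_\alpha$ is an up-set, it then contains some $t < s^\dagger$ with $\#\{i : S_i > t\} + 1 \le \alpha(n+1)$, whence $\#\{i : S_i < s^\dagger\} \ge \#\{i : S_i \le t\} \ge (n+1)(1-\alpha) > n(1-\alpha)$. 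Writing $W := \#\{i : S_i < s^\dagger\}$, this is a sum of $n$ i.i.d.\ Bernoulli variables with $\E[W] = n\,\P[S_1 < s^\dagger] \le n(1-\alpha-\epsilon)$ (since $F(s) < 1-\alpha-\epsilon$ for all $s < s^\dagger$), so the above forces $W - \E[W] > n\epsilon$. Hoeffding's inequality bounds the probability of this by $\exp(-2(n\epsilon)^2/n) = \exp(-2n\epsilon^2) = \delta$, which closes the chain of reductions.

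I expect the only real difficulty to be bookkeeping: making the quantile inversion airtight (tracking $F$ versus its left limit, and strict versus non-strict inequalities) so that the Hoeffding step needs no continuity assumption on the law of the effective score, and verifying that the index arithmetic around $t^\strat_\alpha$ — in particular the inequality $\#\{i : S_i \le t\} \ge (n+1)(1-\alpha)$ above — is correct even when the $S_i$ have ties. Everything else is routine; alternatively, once the reduction to the effective scores $S_i$ is in place, one may simply invoke an existing training-conditional guarantee for split conformal prediction (e.g.\ of Vovk, or of Bian--Barber) verbatim.
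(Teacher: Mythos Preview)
Your proposal is correct. The reduction to the ``effective'' score $S_i = \sup_{\Delta \in \DeltaSet} s(\Delta(X_i),Y_i)$ and the observation that the conditional coverage equals $F(t^\strat_\alpha)$ are exactly the first steps the paper takes. Where you diverge is in the concentration step: the paper invokes (a slight generalization of) the Bian--Barber lemma, which internally uses the DKW inequality to control $\sup_t\bigl(\bar F(t) - \widehat{\bar F}_n(t)\bigr)$ uniformly and then reads off the bound; you instead fix the single threshold $s^\dagger$, reduce to a one-point deviation $W - \E[W] > n\epsilon$, and apply Hoeffding. Your route is strictly more elementary (Hoeffding at one point rather than DKW uniformly) and avoids importing an auxiliary lemma, at the cost of having to handle the quantile-inversion bookkeeping yourself --- which you do correctly, including the left-limit argument for $\P[S_1 < s^\dagger]$. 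You also note the alternative of quoting Bian--Barber directly, which is precisely the paper's path.
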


\subsection{Construction of the $\Delta$}

By means of Theorems~\ref{thm:conformal-marginal-guarantee} and \ref{thm:training-conditional-guarantee}, once we have our set $\DeltaSet$ of alteration functions $\Delta$, we are able to provably control coverage error in face of these specified alterations.
However, modeling these $\Delta$s can be nontrivial.

In this section, we outline some practical and pragmatic ways of constructing the $\Delta$s to be used, and exhibit how they relate to more usual constructions in the strategic classification and economics literature.

\subsubsection{From an utility-cost decomposition}\label{sec:delta-utility-cost}

In the economic and game-theoric literature it is typical to model actions as the solution of an optimization problem expressed in terms of utility and cost functions.
An utility function $u : \mathcal{X} \to \R$ in which higher values of $u(X)$ correspond to more `desired/rewarding' values of $X \in \mathcal{X}$, and a cost function $c : \mathcal{X} \times \mathcal{X} \to \R_{\geq 0}$ (which should satisfy $c(X, X) = 0$ for all $X \in \mathcal{X}$) in which a cost $c(X, X')$ corresponds to how `costly' it would be for an individual to alter themselves from covariates $X$ to covariates $X'$ (higher values indicate higher cost).

The effective actions are then modelled as the maximization\footnote{Assuming that there is an unique maximizer. If there isn't one, we can consider a random tie-breaking rule, making use of the fact that we allow for stochastic $\Delta$s.} of utility `regularized' by the cost:
\[ \Delta^{(u,c)} (X) = \argmax_{X' \in \mathcal{X}}\, \bigl[ u(X') - c(X, X') \bigr]. \]
So one can directly recover the usual case in the strategic classification and performative optimization literature by considering a singleton $\DeltaSet$ containing just $\Delta^{(u,c)}$ for some fixed known choices of $u$ and $c$.

However, it is advisable to consider not just the `fully rational' alteration $\Delta^{(u,c)}$ but rather a whole range of potential alterations ranging from no alteration at all to the fully rational one.
This can be pragmatically done by altering the cost function:
\begin{align*}
    \Delta^{(u,\lambda^{-1} c)} (X) = \argmax_{X' \in \mathcal{X}}\, \bigl[ u(X') - \lambda^{-1} c(X, X') \bigr],
    \\ \text{for } \lambda \in [0, 1],
\end{align*}
with the corner case $\lambda = 0$ (being a bit lax with infinities) corresponding to alterations that have zero cost:
\begin{align*}
    \Delta^{(u,0^{-1} c)} (X) &= \argmax_{X' \in \mathcal{X}}\, \bigl[ u(X') - \infty \ind[c(X, X') = 0] \bigr]
    \\ &= \argmax_{\substack{X' \in \mathcal{X} \\ c(X, X') = 0}}\, u(X').
\end{align*}
Taking $\lambda = 1$ recovers $\Delta^{(u,c)}$ exactly; meanwhile $\lambda = 0$ (considering that $0^{-1} = +\infty$) gives us zero-cost alterations, which typically amounts to the identity alteration (as the cost $c(X, X')$ is usually taken to be positive unless $X = X'$). Any $\lambda$ between the two effectively interpolates between them.

It is also worth noting that the Strategic Conformal Prediction setting admits a particularly natural form of utility: since the individual's goal is to fool the predicted set into not containing some particular subset of outcomes $\Omega \subset \mathcal{Y}$ -- i.e., maximizing $\inf_{y \in \Omega} s(\Delta (X), y)$, we can consider exactly that as our utility:
\[ u(X') = \inf_{y \in \Omega} s(X', y). \]
Nevertheless, our procedure is more general and can work with arbitrary utilities (e.g., ones that take into account the individuals' perceptions), not just this fully-rational one.

\subsubsection{As a stochastic simulator}\label{sec:delta-stochastic-simulator}

Another way of constructing the $\Delta$s is to leverage stochastic simulators.
There is a wide literature on modeling populations with agent-based models and stochastic simulators, ranging from more parsimonious models (e.g., \cite{parsimonious-agent-based-model-1,parsimonious-agent-based-model-2}) to more complex ones (e.g., \cite{complex-agent-based-model-1,complex-agent-based-model-2}), sometimes even leveraging learning procedures underneath.

Let $M : \mathcal{X} \to \mathcal{X}$ represent one call to an underlying stochastic simulator, which gives us the alteration proposed by it.
Note that $M$ is generally not idempotent: if we were to compute $M(M(X))$, this would generally not be the same as just $M(X)$ (even in lack of stochasticity).
This leads us to define a sequence of altered covariates starting from the initial covariates $X$:
\[ X^{(0)} = X, \qquad\qquad X^{(k+1)} = M(X^{(k)}). \]
As we consider $k \to +\infty$, $X^{(k)}$ corresponds to a ``higher level of effort'' and thus a ``higher cost'' than the previous ones. This naturally leads to a set $\DeltaSet^{(M)}$ of $\Delta$s:
\begin{align*}
    \DeltaSet^{(M,k_{\max})} = \{ (X \mapsto  X^{(k)}) : k = 0, \ldots, k_{\max} \}, \qquad
    \\ \text{for } k_{\max} \in \N.
\end{align*}
It is necessary to consider the sequence only up to $k_{\max}$ for computability purposes -- if we consider $M$ to be a black-box stochastic function, then we would not be able to compute suprema over all $k \in \N$ (as required for our Strategic Conformal calibration).

One particularly interesting instance of this construction is to use a generative foundation model with agentic capabilities (such as recent LLMs) to simulate actions of the individuals.
Such models can not only be prompted to simulate particular characteristics of subpopulations but also \emph{calibrated} to do so while better aligning with the actual populations.

\subsubsection{Via iterative random search}\label{sec:delta-proximal}

An interesting variant of the stochastic simulator setting of Section~\ref{sec:delta-stochastic-simulator} is that in which we formulate the stochastic step $M$ as a random search in a local neighborhood. In particular, consider that we have some utility function $u : \mathcal{X} \to \R$ and a cost $c : \mathcal{X} \times \mathcal{X} \to \R_{\geq 0}$ that we further assume to be a metric over $\mathcal{X}$.

Instead of considering the regularized utility maximization problem presented in Section~\ref{sec:delta-utility-cost}, we can consider an iterative procedure where in each step we evaluate the utility on a few points of the neighborhood of the current point (as determined by the metric $c$) and take the best one. I.e.,
\begin{align*}
    X^{(0)} = X, \qquad X^{(k+1)} &= \argmax_{j = 1, \ldots, m} u(X + \delta^{(k)}_j),
    \\
    \delta^{(k)}_1, \ldots, \delta^{(k)}_m &\overset{\mathrm{iid}}{\sim} \mathcal{N}_c(0, \sigma^2 I) \quad \forall k \in \N,
\end{align*}
for some $\sigma$ hyperparameter (and anisotropy given by the choice of metric $c$).\footnote{This notation suggests the common case that $\mathcal{X} = \R^d$, but the idea is more general; a normal distribution/law can be defined over any Banach space (see, e.g., \cite{probability-in-banach-spaces}), and everything still follows.}
Intuitively, this corresponds to each agent considering $m$ different actions and opting for the one they find best.
It is also reasonable to include an additional $\delta^{(k)}_0 = 0$ to allow the agent to do nothing at each step.

This approach can be seen as a middle term between the typical utility-cost formulations (Section~\ref{sec:delta-utility-cost}) and stochastic simulator formulations (Section~\ref{sec:delta-stochastic-simulator}) -- it allows one to leverage the pragmatic nature of the decomposition of actions into utility and cost while being much easier to implement than the optimization-based approach of Section~\ref{sec:delta-utility-cost} (which would require solving many optimization problems for every data point in the calibration set, and these problems could be quite tricky [e.g., highly nonconvex]).

\subsection{Extensions of Strategic Conformal Prediction}

In this section, we present extensions and variants of our base method for group-conditional and label-conditional coverage~\cite{group-conditional-conformal,label-conditional-conformal},
demonstrating that the same strategy used to adapt standard split conformal prediction to the strategic setting can also be applied to other conformal prediction algorithms.

For the sake of brevity, we present here only the extension for group-conditional coverage; the label-conditional case can be found in the supplementary material.

For group-conditional guarantees, suppose that we have some (possibly intersecting) subsets $G_1, \ldots, G_l \subset \mathcal{X}$ (the groups) such that $\bigcup_{i=1}^l G_j = \mathcal{X}$. We want to find some $C^{\stratm(G_*)}_\alpha$ such that
\begin{align*}
    \P[\forall \Delta \in \DeltaSet,\ Y \in C^{\stratm(G_*)}_\alpha (\Delta (X)) | X \in G_j] \geq 1 - \alpha,
    \\ \text{for all } j = 1, \ldots, l. \quad
\end{align*}
Similar to standard Group-Conditional Conformal Prediction~\cite{group-conditional-conformal}, it suffices to do our strategic calibration separately over each group -- thus obtaining a threshold for each group -- and then using the threshold corresponding to the group the test covariate is in (and, in the case of overlap between groups, taking the highest threshold):
\[ C^{\stratm(G_*)}_\alpha (X) = \left\{ y \in \mathcal{Y} : s(X, y) \leq \max_{j; X \in G_j} t^{\stratm(G_j)}_\alpha \right\} \]
where
\begin{align*}
    t^{\stratm(G_j)}_\alpha = \inf \Biggl\{ t \in \R : &\sum_{\substack{i=1 \\ X_i \in G_j}}^n \ind\left[ \sup_{\Delta \in \DeltaSet} s(\Delta(X_i), Y_i) > t \right]
        \\ &\qquad\qquad\quad + 1 \leq \alpha (n_{G_j}+1) \Biggr\}
\end{align*}
and $n_{G_j} = \lvert \{i = 1, \ldots, n : X_i \in G_j\} \rvert$.
\begin{theorem}\label{thm:conformal-marginal-guarantee-groupconditional}
    Let $Z_1, \ldots, Z_n, Z^\test$ be $n + 1$ exchangeable random variables in $\mathcal{X} \times \mathcal{Y}$. Let $C^{\stratm(G_j)}_\alpha$ be as above (for $j = 1, \ldots, l$). Then, for any $\alpha \in (0, 1)$,
    \begin{align*}
        \P\left[ \forall \Delta \in \DeltaSet, \ Y^\test \in C^{\stratm(G_j)}_{\alpha} (\Delta (X^\test)) \middle| X^\test \in G_j \right] \\ \geq 1 - \alpha, \quad \forall j = 1, \ldots, l. \quad
    \end{align*}
\end{theorem}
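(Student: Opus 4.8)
The plan is to reduce the group-conditional statement to the marginal guarantee of Theorem~\ref{thm:conformal-marginal-guarantee}, applied to a suitable conditional sub-population, via the standard Mondrian-style conditioning argument. Relabel $Z^\test = Z_{n+1}$ and, for a subset $A \subseteq \{1, \ldots, n+1\}$, let $E_A$ be the event that $A$ is exactly the set of indices $i$ with $X_i \in G_j$, i.e. $E_A = \bigcap_{i \in A} \{X_i \in G_j\} \cap \bigcap_{i \notin A} \{X_i \notin G_j\}$. The conditioning event of the theorem decomposes as $\{X^\test \in G_j\} = \bigsqcup_{A \ni n+1} E_A$, so it suffices to prove the $\geq 1-\alpha$ bound conditionally on each $E_A$ with $n+1 \in A$, and then average over $A$ weighted by $\P[E_A \mid X^\test \in G_j]$.

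First I would check that, conditionally on $E_A$, the sub-collection $(Z_i)_{i \in A}$ is still exchangeable: any permutation $\tau$ of $\{1,\ldots,n+1\}$ acting only within $A$ leaves $E_A$ invariant (it permutes in-group indices among themselves and fixes the rest), and the law of $(Z_1,\ldots,Z_{n+1})$ is $\tau$-invariant by hypothesis; hence the conditional law of $(Z_i)_{i\in A}$ given $E_A$ is invariant under all such $\tau$. Note also that $\DeltaSet$ and $s$ are fixed objects (not functions of the sample), so $S_i := \sup_{\Delta \in \DeltaSet} s(\Delta(X_i), Y_i)$ is a fixed measurable function of $Z_i$, and the reduction below is legitimate.

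Next, fix $A$ with $n+1 \in A$ and work conditionally on $E_A$. On this event $n_{G_j} = |A| - 1 =: m$, and $t^{\stratm(G_j)}_\alpha$ is, by its very definition, the threshold $t^\strat_\alpha$ built from the $m$ ``calibration'' points $(Z_i)_{i \in A \setminus \{n+1\}}$ — rescaling $\alpha(n_{G_j}+1)$ to $m+1$ makes the two formulas coincide verbatim. Since $(Z_i)_{i\in A}$ are $m+1$ (conditionally) exchangeable random variables in $\mathcal{X}\times\mathcal{Y}$ with $Z^\test = Z_{n+1}$ playing the role of the test point, Theorem~\ref{thm:conformal-marginal-guarantee} (whose proof uses only exchangeability, hence carries over to the conditional probability space) gives
\[ \P\bigl[\forall \Delta \in \DeltaSet,\ Y^\test \in C^{\stratm(G_j)}_\alpha(\Delta(X^\test)) \,\big|\, E_A\bigr] \geq 1 - \alpha, \]
where $C^{\stratm(G_j)}_\alpha(x) = \{y : s(x,y) \le t^{\stratm(G_j)}_\alpha\}$. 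Averaging over the $A$'s yields the group-conditional guarantee; it transfers to the practical predictor $C^{\stratm(G_*)}_\alpha$ because, for $x \in G_j$, its threshold $\max_{j';\, x \in G_{j'}} t^{\stratm(G_{j'})}_\alpha$ is at least $t^{\stratm(G_j)}_\alpha$, so $C^{\stratm(G_*)}_\alpha(x) \supseteq C^{\stratm(G_j)}_\alpha(x)$ by monotonicity of $C_t$ in $t$.

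I expect the only delicate point to be the exchangeability check of the second step — conditioning on the data-dependent event $E_A$, which constrains all $n+1$ covariates at once, and verifying it does not destroy exchangeability of the in-group sub-vector. Everything after that is a black-box invocation of Theorem~\ref{thm:conformal-marginal-guarantee} together with the bookkeeping of the rescaled count $\alpha(n_{G_j}+1)$ and the law-of-total-probability decomposition over $A$.
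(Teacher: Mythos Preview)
Your proposal is correct and follows essentially the same approach as the paper: condition on group membership, observe that the in-group sub-collection remains exchangeable, and apply the quantile/conformal argument to the $n_{G_j}+1$ in-group points. Your Mondrian-style decomposition over the index pattern $E_A$ makes the conditional-exchangeability step explicit (and you additionally note the monotonicity that carries the guarantee over to $C^{\stratm(G_*)}_\alpha$), whereas the paper's proof invokes Lemma~\ref{thm:quantile-lemma} directly under the conditioning $X^\test \in G_j$ without spelling out that step.
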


\subsection{Additional Guarantees under Modeling Assumptions}

In this section, we will present additional theoretical properties enjoyed by our procedure.
Throughout, these guarantees will either hold under specific modeling assumptions or involve some quantity that is generally inestimable in practice, but enlightening from a theoretical point of view.

\subsubsection{Under misspecification of the $\Delta$s}

Key to our procedure is the choice of the alteration functions $\Delta \subset \DeltaSet$.
As these are specified by the practitioner, there is a significant possibility of misspecification.
This is further accentuated by our constraint that the $\Delta$ do not depend on the learned threshold $t$.

Fortunately, our model is reasonably robust to such violations.
Suppose that there is a ``true'' alteration function $\Delta^\star$ -- i.e., what we are really interested in is coverage when the covariates have been modified by $\Delta^\star$.
In particular, this $\Delta^\star$ can be a function not only of $s$ (as the $\Delta \in \DeltaSet$ can) but also on the learned threshold $t$.
First, we prove a bound on the difference between the coverage of a set predictor with the true alteration function $\Delta^*$ versus with any of our $\Delta$s in $\DeltaSet$ in terms of how much the two deviate on average over $X$ (per total variation):

\begin{proposition}\label{thm:robustness-1}
    Let $C : \mathcal{X} \to 2^\mathcal{Y}$ be a set predictor.
    For any $\Delta \in \DeltaSet$,
    \begin{align*}
        &\left\lvert \P_{X,Y}[Y \in C_\alpha (\Delta (X))] - \P_{X,Y}[Y \in C_\alpha (\Delta^\star (X))] \right\rvert
        \\ &\quad \leq \E_X\left[\TV( \Delta (X) | X \ \Vert\ \Delta^\star (X) | X )\right].
    \end{align*}
\end{proposition}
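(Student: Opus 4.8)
The plan is to express both coverage probabilities as expectations of an indicator over the appropriate pushforward measures and then bound their difference pointwise in $X$ by the total variation distance. Concretely, for a fixed realization of $X$, write
\[
\P_{X,Y}[Y \in C_\alpha(\Delta(X))] = \E_X\left[ \E_{X' \sim \Delta(X)\mid X}\, \P_{Y\mid X}[Y \in C_\alpha(X')] \right],
\]
and similarly for $\Delta^\star$, using that $Y$ does not depend on the alteration (the alteration only acts on the covariate fed to $C_\alpha$, while $Y$ retains its original conditional law given $X$). This requires a small measurability bookkeeping step: conditionally on $X$, the random element $\Delta(X)$ has some law $\mu_X$ on $\mathcal{X}$, $\Delta^\star(X)$ has law $\mu^\star_X$, and I define the bounded measurable function $g_X(x') := \P_{Y\mid X}[Y \in C_\alpha(x')] \in [0,1]$.

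Then the core estimate is the variational characterization of total variation: for any measurable $g_X$ with $\|g_X\|_\infty \le 1$,
\[
\left| \int g_X \,\dif\mu_X - \int g_X \,\dif\mu^\star_X \right| \le \TV(\mu_X \,\|\, \mu^\star_X) = \TV(\Delta(X)\mid X \,\|\, \Delta^\star(X)\mid X).
\]
Taking absolute values, applying the triangle inequality after conditioning, and then Jensen (i.e. $|\E[\cdot]| \le \E|\cdot|$) over the outer expectation in $X$ gives
\[
\left| \P_{X,Y}[Y \in C_\alpha(\Delta(X))] - \P_{X,Y}[Y \in C_\alpha(\Delta^\star(X))] \right| \le \E_X\left[ \TV(\Delta(X)\mid X \,\|\, \Delta^\star(X)\mid X) \right],
\]
which is exactly the claimed bound.

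I would carry this out in three steps: (1) tower/disintegration to peel off the conditioning on $X$ and isolate the only place $\Delta$ versus $\Delta^\star$ differ, namely the conditional law of the altered covariate; (2) invoke the dual (variational) formula for $\TV$ against the $[0,1]$-valued test function $g_X$; (3) push the absolute value through the outer expectation by Jensen. The main obstacle — really the only subtle point — is step (1): one must be careful that the outcome $Y$ is coupled to the \emph{original} $X$ and is conditionally independent of the alteration given $X$, so that $g_X$ genuinely depends only on $X$ (through the conditional law of $Y$) and on the altered covariate $x'$. Once that independence structure is pinned down, the rest is the standard $\TV$-duality argument and is routine. A minor technical caveat worth a sentence in the proof is that $\TV$ here should be read with the normalization under which it upper-bounds the difference of integrals of a function bounded by $1$ (equivalently $\TV = \sup_{A}|\mu(A)-\mu^\star(A)|$); with the other common convention one simply absorbs a factor of $2$, but the statement as written matches the $\sup_A$ convention.
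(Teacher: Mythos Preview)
Your proof is correct and proceeds by a genuinely different route than the paper's. The paper first passes to the joint law $(\Delta(X),X,Y)$ versus $(\Delta^\star(X),X,Y)$, bounds the coverage gap by the total variation between these \emph{joints}, and then uses the $f$-divergence (Radon--Nikodym) representation of $\TV$ together with the factorization $\dif P_{\Delta(X),X,Y} = \dif P_{\Delta(X)\mid X}\,\dif P_{X,Y}$ to cancel the common $\dif P_{X,Y}$ and reduce to $\E_X[\TV(\Delta(X)\mid X\ \Vert\ \Delta^\star(X)\mid X)]$. You instead condition on $X$ up front, identify the $[0,1]$-valued test function $g_X(x') = \P_{Y\mid X}[Y\in C_\alpha(x')]$, apply the variational form of $\TV$ pointwise in $X$, and then average. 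Your argument is more elementary in that it never invokes Radon--Nikodym derivatives (and hence sidesteps the absolute-continuity caveat implicit in the paper's calculation); the paper's approach, on the other hand, makes the chain-rule structure of the joint $\TV$ explicit, which can be useful if one later wants analogous bounds for other $f$-divergences. Both rely on the same hidden assumption you correctly flag: that $Y$ and the alteration are conditionally independent given $X$ (the paper uses this when it replaces $\dif P_{\Delta(X)\mid X,Y}$ by $\dif P_{\Delta(X)\mid X}$).
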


This shows that as long as the two $\Delta$s are sufficiently aligned in high-probability regions of $X$, there will be little difference in the coverage of the two, showcasing a certain robustness to misspecification of our model.

Proposition~\ref{thm:robustness-1} bounds the difference in coverage between $\Delta^*$ and some particular $\Delta \in \DeltaSet$. However, we are more interested in the \emph{set} $\DeltaSet$, not just a single one of its elements. The following proposition considers just this and, for the specific case of our method, bounding the coverage with $\Delta^*$ in terms of the infimum of the average deviation terms from Proposition~\ref{thm:robustness-1}.

\begin{proposition}\label{thm:robustness-2}
    Let $C : \mathcal{X} \to 2^\mathcal{Y}$ be a set predictor satisfying Equation~\ref{eq:goal}. Then
    \begin{align*}
        &\P_{X,Y}[Y \in C_\alpha (\Delta^\star (X))]
        \\ &\quad \geq 1 - \alpha - \inf_{\Delta \in \DeltaSet} \E_X\left[\TV( \Delta (X) | X \ \Vert\ \Delta^\star (X) | X )\right].
    \end{align*}
\end{proposition}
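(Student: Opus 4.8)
The plan is to combine the for-all coverage guarantee of Equation~\ref{eq:goal} with the pointwise robustness bound of Proposition~\ref{thm:robustness-1}, and then optimize over the choice of $\Delta \in \DeltaSet$.

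First I would fix an arbitrary $\Delta \in \DeltaSet$. Since the event $\{\forall \Delta' \in \DeltaSet,\ Y \in C_\alpha(\Delta'(X))\}$ is contained in the event $\{Y \in C_\alpha(\Delta(X))\}$, the hypothesis that $C$ satisfies Equation~\ref{eq:goal} immediately yields the marginal bound
\[ \P_{X,Y}[Y \in C_\alpha(\Delta(X))] \geq 1 - \alpha. \]
Next I would apply Proposition~\ref{thm:robustness-1} with this same $\Delta$; retaining only the direction of the inequality that we need,
\[ \P_{X,Y}[Y \in C_\alpha(\Delta^\star(X))] \geq \P_{X,Y}[Y \in C_\alpha(\Delta(X))] - \E_X\left[\TV(\Delta(X)\,|\,X\ \Vert\ \Delta^\star(X)\,|\,X)\right]. \]
Chaining the two displays gives, for every $\Delta \in \DeltaSet$,
\[ \P_{X,Y}[Y \in C_\alpha(\Delta^\star(X))] \geq 1 - \alpha - \E_X\left[\TV(\Delta(X)\,|\,X\ \Vert\ \Delta^\star(X)\,|\,X)\right]. \]
Since the left-hand side does not depend on $\Delta$, I would then take the best of these lower bounds: maximizing the right-hand side over $\Delta \in \DeltaSet$ is the same as taking the infimum of the total-variation term, which produces exactly the claimed inequality.

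I do not expect any serious obstacle; the argument is essentially a two-line composition once Proposition~\ref{thm:robustness-1} is available. The only step requiring a little care is the first one — that a ``for all $\Delta$'' coverage statement specializes to marginal coverage for each individual $\Delta \in \DeltaSet$ (which tacitly uses $\DeltaSet \neq \emptyset$), together with the observation that $\Delta^\star$ itself need not lie in $\DeltaSet$, which is precisely why the residual total-variation slack cannot be removed. If one additionally wanted the infimum to be attained rather than merely approached, a compactness or lower-semicontinuity assumption on $\DeltaSet$ would be needed, but the statement as written only asserts the bound with an infimum, so nothing further is required.
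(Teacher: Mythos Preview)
Your proposal is correct and follows essentially the same route as the paper's proof: apply Proposition~\ref{thm:robustness-1} to pass from $\Delta$ to $\Delta^\star$, use the coverage hypothesis to bound $\P[Y\in C_\alpha(\Delta(X))]\ge 1-\alpha$, and then optimize over $\Delta\in\DeltaSet$ to turn the slack into an infimum. If anything, your justification of the $1-\alpha$ lower bound via the event inclusion $\{\forall\Delta'\in\DeltaSet,\,Y\in C_\alpha(\Delta'(X))\}\subset\{Y\in C_\alpha(\Delta(X))\}$ is a touch cleaner than the paper's appeal to Theorem~\ref{thm:conformal-marginal-guarantee}, since the proposition is stated for an arbitrary $C$ satisfying Equation~\ref{eq:goal}.
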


Note also that the randomness of the $\Delta$ is absolutely essential for these results to be useful, and suggests that it is even advisable to \emph{always} have them be somewhat stochastic.
If the $\Delta \in \DeltaSet$ are fully deterministic, then the total variation distances will equal 1 if there is even a slight misspecification. However, if there is randomness, then even under significant misspecification the total variation distances will be well below 1, meaning a smaller deviation from the true alterations $\Delta^*$ and more robustness.

\subsubsection{As the $\Delta$ converge}

Consider that we have a sequence of alteration functions $\Delta_k : \mathcal{X} \to \mathcal{X}$, $k = 0, 1, \ldots$, and that $\DeltaSet = \{\Delta_k : k = 0, \ldots, k_{\max}\}$ for some $k_{\max} \in \N$ (e.g., as is the case in the stochastic simulation and iterative random search constructions in Sections \ref{sec:delta-stochastic-simulator} and \ref{sec:delta-proximal}).
As mentioned in the sections prior, this $k_{\max}$ is necessary from a computational point-of-view -- for all we know, the underlying stochastic simulators process may change behavior arbitrarily and we would be unable to compute the suprema in the calibration procedure for the $t^\strat_\alpha$.
But what happens for $\Delta_k$ beyond this $k_{\max}$?

We show that if this sequence of $\Delta_k$s converges to some $\overline{\Delta}$, then as the $k_{\max}$ used for calibration increases, the coverage under $\overline{\Delta}$ improves, achieving the nominal coverage at the limit.

\begin{proposition}\label{thm:as-delta-converges}
    Let $(\Delta_k)_{k=0}^\infty$ be a sequence of alteration functions $\Delta_k : \mathcal{X} \to \mathcal{X}$, such that $\lim_{k \to \infty} \Delta_k (X) = \overline{\Delta}(X)$ according to the total variation metric.
    Then, for any $k_{\max} > 0$, consider the set predictor $C_{t^\strat_\alpha}$ generated by $\DeltaSet = \{ \Delta_0, \ldots, \Delta_{k_{\max}} \}$. There exists some $\epsilon_{k_{\max}}$ such that,
    \[ \P_{X,Y}[Y \in C_{t^\strat_\alpha} (\overline{\Delta} (X))] \geq 1 - \alpha - \epsilon, \]
    and $\lim_{k_{\max} \to \infty} \epsilon_{k_{\max}} = 0$.
\end{proposition}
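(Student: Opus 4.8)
The plan is to combine the marginal guarantee of Theorem~\ref{thm:conformal-marginal-guarantee} (applied to the finite set $\DeltaSet = \{\Delta_0, \ldots, \Delta_{k_{\max}}\}$) with the robustness bound of Proposition~\ref{thm:robustness-1}, treating $\overline{\Delta}$ as the ``true'' alteration $\Delta^\star$ and using convergence in total variation to make the error term vanish. Concretely, Theorem~\ref{thm:conformal-marginal-guarantee} tells us that $C_{t^\strat_\alpha}$ built from $\DeltaSet = \{\Delta_0, \ldots, \Delta_{k_{\max}}\}$ satisfies $\P[\forall \Delta \in \DeltaSet,\ Y^\test \in C_{t^\strat_\alpha}(\Delta(X^\test))] \geq 1 - \alpha$, hence in particular Equation~\ref{eq:goal} holds for this $\DeltaSet$. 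Then Proposition~\ref{thm:robustness-2}, invoked with $\Delta^\star = \overline{\Delta}$, gives
\[
    \P_{X,Y}[Y \in C_{t^\strat_\alpha}(\overline{\Delta}(X))] \geq 1 - \alpha - \inf_{\Delta \in \DeltaSet} \E_X[\TV(\Delta(X)\mid X \ \Vert\ \overline{\Delta}(X)\mid X)].
\]

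The next step is to name the error term: set
\[
    \epsilon_{k_{\max}} = \inf_{k \in \{0, \ldots, k_{\max}\}} \E_X[\TV(\Delta_k(X)\mid X \ \Vert\ \overline{\Delta}(X)\mid X)] \leq \E_X[\TV(\Delta_{k_{\max}}(X)\mid X \ \Vert\ \overline{\Delta}(X)\mid X)].
\]
By hypothesis, for each fixed $x$ we have $\TV(\Delta_k(x)\mid x \ \Vert\ \overline{\Delta}(x)\mid x) \to 0$ as $k \to \infty$. Since the total variation distance is bounded by $1$, the dominated convergence theorem yields $\E_X[\TV(\Delta_k(X)\mid X \ \Vert\ \overline{\Delta}(X)\mid X)] \to 0$, and therefore $\epsilon_{k_{\max}} \to 0$ as $k_{\max} \to \infty$. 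Substituting into the displayed inequality above gives exactly $\P_{X,Y}[Y \in C_{t^\strat_\alpha}(\overline{\Delta}(X))] \geq 1 - \alpha - \epsilon_{k_{\max}}$, which is the claim (with $\epsilon = \epsilon_{k_{\max}}$).

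The only subtlety — and the step I expect to require the most care — is the precise reading of the convergence hypothesis. The statement says $\lim_{k\to\infty}\Delta_k(X) = \overline{\Delta}(X)$ ``according to the total variation metric'', and one must decide whether this is pointwise-in-$x$ convergence of the conditional laws $\Delta_k(x)\mid x$ (the reading I adopt above, which then needs dominated convergence to pass to the expectation) or already convergence of the averaged quantity $\E_X[\TV(\Delta_k(X)\mid X \ \Vert\ \overline{\Delta}(X)\mid X)]$ (in which case the argument is immediate and dominated convergence is unnecessary). Either way the conclusion holds; I would state the pointwise reading explicitly and invoke boundedness of $\TV$ by $1$ to justify the interchange of limit and expectation. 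A minor additional point worth noting is that $t^\strat_\alpha$ itself depends on $k_{\max}$ through $\DeltaSet$, but this causes no difficulty since Proposition~\ref{thm:robustness-2} is applied to whatever set predictor results, and the bound is uniform in that dependence.
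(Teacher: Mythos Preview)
Your proposal is correct and follows essentially the same route as the paper: bound the coverage under $\overline{\Delta}$ by the coverage under some $\Delta_k \in \DeltaSet$ minus a total-variation penalty, invoke the marginal guarantee (Theorem~\ref{thm:conformal-marginal-guarantee}) for the latter, and let the TV term vanish by the convergence hypothesis. The only cosmetic difference is that you route the change-of-measure step through Proposition~\ref{thm:robustness-2} (and hence use the expected conditional TV, with an explicit dominated-convergence justification for the limit), whereas the paper redoes the TV inequality inline in unconditional form; your version is, if anything, the more carefully stated of the two.
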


In fact, as $\epsilon$ decreases, it can become smaller than the gap between the actual coverage $\P[Y \in C_\alpha (\overline{\Delta}(X))]$ and the nominal coverage $1 - \alpha$ -- at which point we actually attain the nominal coverage even for $\Delta_k$ with $k \gg k_{\max}$.
Indeed, this remarkable behavior does happen in practice, as can be seen in Figure~\ref{fig:curves}.

\subsubsection{Tightness of the intervals under stability assumptions}

\begin{figure*}[h]
    \centering
    \includegraphics[width=.9\textwidth]{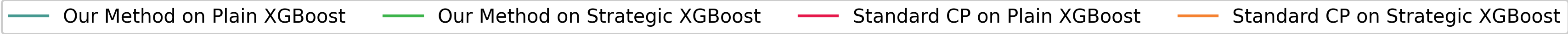}

    \begin{subfigure}[t]{0.48\textwidth}
        \centering
        \includegraphics[width=\columnwidth]{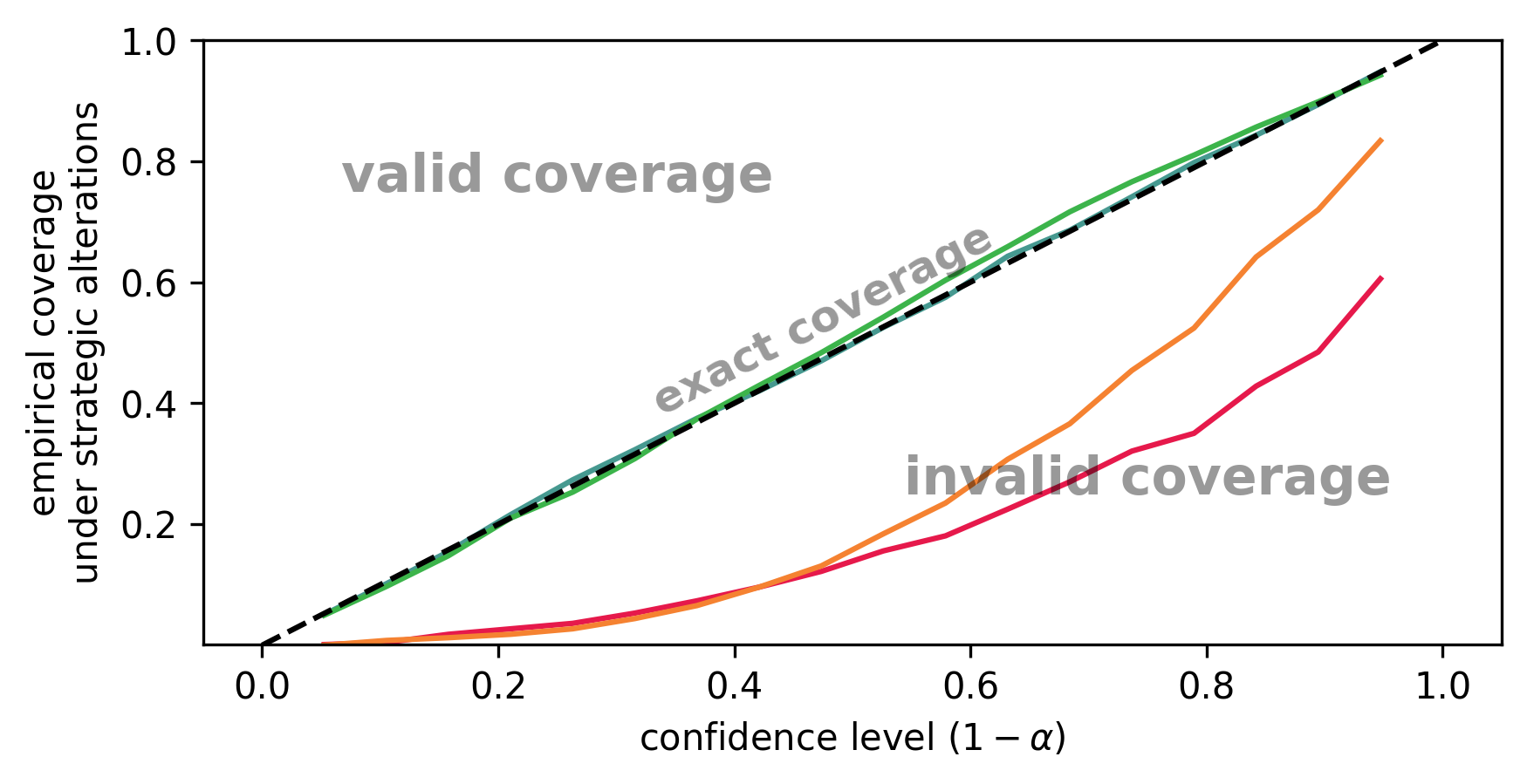}
        \caption{Coverage of standard CP \& our method under strategic alterations over varying values of $\alpha$.}
        \label{fig:coverages}
    \end{subfigure}\ \quad
    \begin{subfigure}[t]{0.48\textwidth}
        \centering
        \includegraphics[width=\columnwidth]{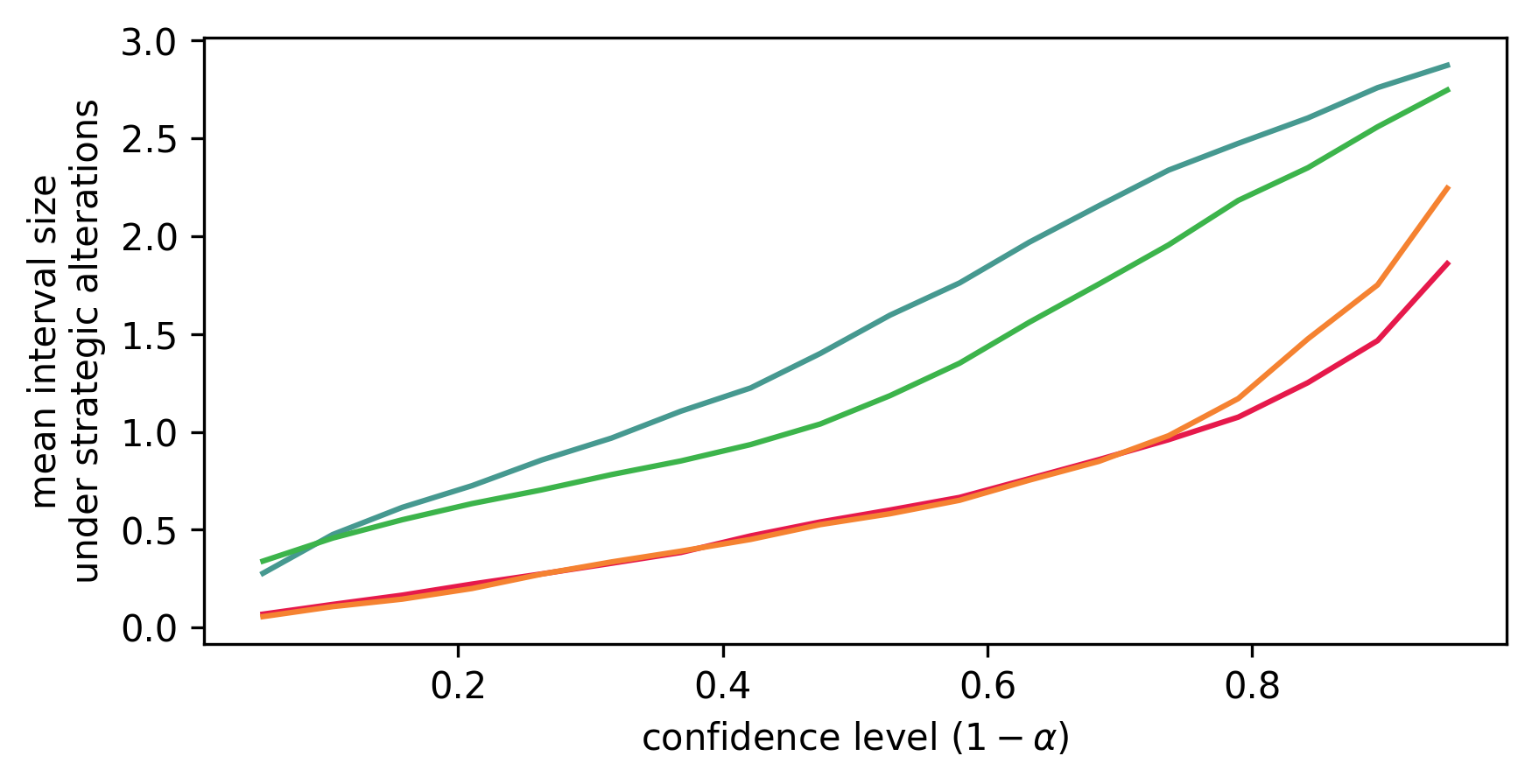}
        \caption{Average sizes of the predictive sets of standard CP \& our method under strategic alterations.}
        \label{fig:sizes}
    \end{subfigure}
    \caption{\textbf{Coverage and interval sizes for our method versus standard CP.} Figures~\ref{fig:coverages} and \ref{fig:sizes} evaluate the coverage and interval sizes under strategic alterations over varying confidence levels $1-\alpha$ on the \texttt{academic-dropout} dataset. Note that our method attains exact coverage, while the baselines have (very) invalid coverage. This comes at a cost of larger predictive sets, but using a better model (Strategic XGBoost) reduces this. For more details and more datasets, see the supplementary material.}
    \label{fig:coverages-and-sizes}
    \vspace{-1em}
\end{figure*}

So far all of our results have been only about the coverage of our predictions in the face of strategic alterations, $\P[\forall \Delta \in \DeltaSet,\ Y \in C_{t^\strat_\alpha} (\Delta (X))]$ (and variants).
However, just as important are results about \emph{how large} the sets outputted by $C_\alpha$ will be; after all, one trivial way to attain all the coverage guarantees would have been to simply produce the whole output space $\mathcal{Y}$, but that is hardly useful.

On one hand, we desire short predictive sets $C_{t^\strat_\alpha}(X)$. However, if the underlying model is itself vulnerable to strategic perturbations, then these sets will necessarily have to increase accordingly; and of course, if there is uncertainty about the outcome itself, this also adds to the size of the sets.
But the question remains: if neither of these are the case, then are our sets tight?

The answer is, fortunately, yes, as the two following propositions show.
Such matters fundamentally depend on the choice of conformity score; for the sake of concreteness, we consider here only the case of the simple conformity score $s(x, y) = \lvert \mu(x) - y \rvert$, where $\mu : \mathcal{X} \to \mathcal{Y}$ for $\mathcal{Y} = \R$ is a base regression model, but the same argument can be used for other conformity scores.
Additionally, focusing on such a conformity score and the regression setting gives us a natural way to quantify the size of the predicted sets, in the form of their Lebesgue measure.

In what follows, $\leb(A)$ corresponds to the Lebesgue measure of the set $A$, $\relleb_A(B) = \leb(B) / \leb(A)$ and $(W)_{[i]}$ corresponds to the $i$-th order statistic of the random variable $W$.
First, we show that if the model is already robust against strategic perturbations, then the interval size will be approximately equal that of standard conformal prediction:

\begin{proposition}\label{thm:tightness-1}
    Let $\mu : \mathcal{X} \to \R$ be a possibly-stochastic function representing a base model.
    Consider the predictive sets $C_{t^\strat_\alpha}$ with conformity score $s : \mathcal{X} \times \R \to \R$ given by $s(x, y) = \lvert \mu(x) - y \rvert$ with $\mathcal{Y}$ being a compact subset of $\mathcal{Y}$ and alteration functions $\Delta \in \DeltaSet$, and contrast it with the corresponding standard conformal predictive sets $C_{t^\std_\alpha}$. Then, almost surely over $X$,
    \begin{align*}
        &\E_{C_{t^\strat_\alpha}}[\relleb_{\mathcal{Y}}\left(C_{t^\strat_\alpha}(X)\right)] \leq \E_{C_{t^\std_\alpha}}[\relleb_{\mathcal{Y}}\left(C_{t^\std_\alpha}(X)\right)]
        \\ &+ 2\, \TV\left(\Bigl( s(X, Y) \Bigr)_{[j_{n,\alpha}]} \ \middle\|\ \Bigl( \sup_{\Delta \in \DeltaSet} s(\Delta (X), Y) \Bigr)_{[j_{n,\alpha}]} \right),
    \end{align*}
    for $j_{n,\alpha} = \lceil (1 - \alpha) (1 + n) \rceil$.
\end{proposition}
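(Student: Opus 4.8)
The plan is to identify both conformal thresholds as order statistics of the relevant calibration scores, note that the (expected, relative) size of a predicted interval is a bounded and monotone function of its threshold, and then apply the elementary change-of-measure bound $\lvert \E_P[g] - \E_Q[g]\rvert \le \lVert g\rVert_\infty \lVert P - Q\rVert_1 = 2\lVert g\rVert_\infty\, \TV(P,Q)$. Concretely, writing $s_i = s(X_i,Y_i)$ and $\tilde s_i = \sup_{\Delta\in\DeltaSet} s(\Delta(X_i),Y_i)$, the infimum defining $t^\std_\alpha$ (resp.\ $t^\strat_\alpha$) is exactly the usual split-conformal threshold built from $s_1,\dots,s_n$ (resp.\ from $\tilde s_1,\dots,\tilde s_n$): sorting the scores and tracking, for each level $t$, how many of them lie at or below $t$, one gets that $t^\std_\alpha$ is the $j_{n,\alpha}$-th smallest of $s_1,\dots,s_n$ and $t^\strat_\alpha$ the $j_{n,\alpha}$-th smallest of $\tilde s_1,\dots,\tilde s_n$, with $j_{n,\alpha} = \lceil (1-\alpha)(1+n)\rceil$ (interpreting the $(n{+}1)$-th order statistic as $+\infty$). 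If $j_{n,\alpha} > n$, both thresholds are $+\infty$, both predictive sets equal $\mathcal Y$, and the inequality is trivial, so assume $j_{n,\alpha}\le n$. Under the i.i.d.\ structure of the calibration sample this means the law of $t^\std_\alpha$ is the law $\mu_\std$ of $(s(X,Y))_{[j_{n,\alpha}]}$ and the law of $t^\strat_\alpha$ is the law $\mu_\strat$ of $\bigl(\sup_{\Delta} s(\Delta(X),Y)\bigr)_{[j_{n,\alpha}]}$; these laws depend only on the calibration randomness (the sample and any stochasticity in $\mu$ and the $\Delta$'s), which is independent of the test covariate $x$ and of the independent draw of $\mu(x)$.

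Now fix $x$ and set $g_x(t) = \E_{\mu(x)}\bigl[\relleb_{\mathcal Y}(C_t(x))\bigr]$, the expectation over the stochasticity of $\mu$ at $x$. Since $C_t(x) = \{y\in\mathcal Y : \lvert \mu(x)-y\rvert \le t\}$ is nondecreasing in $t$ and always contained in $\mathcal Y$, and $\mathcal Y$ is compact so that $\leb(\mathcal Y)\in(0,\infty)$, the function $g_x : \R\cup\{+\infty\}\to[0,1]$ is nondecreasing with $\lVert g_x\rVert_\infty \le 1$ — and it is this boundedness by $1$, not the monotonicity, that the argument needs. By independence of $\mu(x)$ from the calibration randomness, $\E_{C_{t^\strat_\alpha}}\bigl[\relleb_{\mathcal Y}(C_{t^\strat_\alpha}(x))\bigr] = \E_{W\sim\mu_\strat}[g_x(W)]$ and likewise $\E_{C_{t^\std_\alpha}}\bigl[\relleb_{\mathcal Y}(C_{t^\std_\alpha}(x))\bigr] = \E_{W\sim\mu_\std}[g_x(W)]$. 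Applying the change-of-measure bound with $g = g_x$, $P = \mu_\strat$, $Q = \mu_\std$ and $\lVert g_x\rVert_\infty\le 1$ yields
\[
\E_{C_{t^\strat_\alpha}}\bigl[\relleb_{\mathcal Y}(C_{t^\strat_\alpha}(x))\bigr] - \E_{C_{t^\std_\alpha}}\bigl[\relleb_{\mathcal Y}(C_{t^\std_\alpha}(x))\bigr] \le 2\,\TV(\mu_\strat,\mu_\std),
\]
and $\TV(\mu_\strat,\mu_\std)$ is precisely the total variation term in the statement. Since $x$ was arbitrary, this holds for every $x$, in particular a.s.\ over $X$, which is the claim. (If $\id\in\DeltaSet$ then $\tilde s_i \ge s_i$, hence $t^\strat_\alpha\ge t^\std_\alpha$ and the left-hand side is already nonnegative, confirming that the strategic intervals are the larger ones; but as noted this monotonicity is not used.)

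The step that needs the most care — and the main obstacle — is the probabilistic bookkeeping behind the first two moves: making precise, when $\mu$ and the $\Delta$'s are stochastic, that $\tilde s_1,\dots,\tilde s_n$ are i.i.d.\ copies of $\sup_{\Delta} s(\Delta(X),Y)$ so that the law of $t^\strat_\alpha$ genuinely is $\mu_\strat$, and that the test-point randomness $\mu(x)$ really factors out of the expectations into the function $g_x$. Once the i.i.d.\ setup and the (standard) quantile characterization of the conformal threshold are in place, the rest is just the one-line change-of-measure inequality.
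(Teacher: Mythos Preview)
Your proposal is correct and follows essentially the same approach as the paper's proof: identify the two thresholds as the $j_{n,\alpha}$-th order statistics of the respective score samples, observe that $\relleb_{\mathcal Y}(C_t(x))$ is a $[0,1]$-valued function of $t$, and apply the change-of-measure inequality (the paper's Lemma on total variation) to bound the difference of expectations. Your treatment is in fact more careful than the paper's own argument --- you explicitly handle the stochasticity of $\mu$ via $g_x$, the edge case $j_{n,\alpha}>n$, and the independence bookkeeping --- but the underlying idea is identical.
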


And, in turn, we can show that if there is little noise in the predictions of the underlying model $\mu$, then the produced intervals are near singletons.

\begin{corollary}\label{thm:tightness-2}
    In the same setting as Proposition~\ref{thm:tightness-1}, further assume that $\mu(X) = Y + \epsilon$ for some random $\epsilon$ (arbitrarily dependent on $X$ and $Y$), with $\epsilon \in [-M, M]$ almost surely. Then, almost surely over $X$,
    \begin{align*}
        &\E_{C_{t^\strat_\alpha}}[\leb_{\mathcal{Y}}\left(C_{t^\strat_\alpha}(X)\right)] \leq 2 M
        \\ &\qquad + 2\, \TV\left((\epsilon)_{[j_{n,\alpha}]} \ \middle\|\ \Bigl( \sup_{\Delta \in \DeltaSet} s(\Delta (X), Y) \Bigr)_{[j_{n,\alpha}]} \right),
    \end{align*}
    for $j_{n,\alpha} = \lceil (1 - \alpha) (1 + n) \rceil$.
\end{corollary}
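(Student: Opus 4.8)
\textbf{Proof proposal for Corollary~\ref{thm:tightness-2}.}

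The plan is to apply Proposition~\ref{thm:tightness-1} and then control the two resulting terms under the extra assumption $\mu(X) = Y + \epsilon$. First I would observe that the assumption pins down the conformity score on the calibration data exactly: $s(X,Y) = |\mu(X) - Y| = |\epsilon|$, so in particular $s(X,Y) \in [0, M]$ almost surely, and the order statistic $(s(X,Y))_{[j_{n,\alpha}]}$ appearing in Proposition~\ref{thm:tightness-1} is $(|\epsilon|)_{[j_{n,\alpha}]}$. (One should be a little careful about whether the stated TV term in the corollary uses $\epsilon$ or $|\epsilon|$, but since $\epsilon \in [-M,M]$ both are supported in a set of diameter $2M$; I'd phrase the argument so that the relevant bound only uses boundedness of the score, not its sign.)

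Next I would handle the ``standard conformal'' term $\E_{C_{t^\std_\alpha}}[\leb_{\mathcal{Y}}(C_{t^\std_\alpha}(X))]$ that comes out of Proposition~\ref{thm:tightness-1} when we un-normalize (i.e.\ multiply the relative-Lebesgue statement by $\leb(\mathcal{Y})$, or simply re-run the same argument with $\leb$ in place of $\relleb_{\mathcal{Y}}$). The standard conformal set with this score is the interval $C_{t^\std_\alpha}(X) = [\mu(X) - t^\std_\alpha,\ \mu(X) + t^\std_\alpha]$, which has Lebesgue measure exactly $2 t^\std_\alpha$. Since $t^\std_\alpha$ is (a ceiling order statistic of) the calibration scores $|\epsilon_i|$, and all of these lie in $[0,M]$, we get $t^\std_\alpha \le M$ deterministically, hence $\leb(C_{t^\std_\alpha}(X)) \le 2M$. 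This is where the compactness of $\mathcal{Y}$ and the boundedness of $\epsilon$ do their work: they ensure the standard threshold cannot exceed $M$.

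Finally I would combine: Proposition~\ref{thm:tightness-1} (in its un-normalized form) gives
\[
\E_{C_{t^\strat_\alpha}}[\leb_{\mathcal{Y}}(C_{t^\strat_\alpha}(X))] \le \E_{C_{t^\std_\alpha}}[\leb_{\mathcal{Y}}(C_{t^\std_\alpha}(X))] + 2\,\TV\!\left( (s(X,Y))_{[j_{n,\alpha}]} \ \middle\|\ \Bigl(\sup_{\Delta \in \DeltaSet} s(\Delta(X),Y)\Bigr)_{[j_{n,\alpha}]} \right),
\]
and substituting the bound $\le 2M$ for the first term and $s(X,Y) = |\epsilon|$ (or rewriting in terms of $\epsilon$, which only shifts the support and leaves the TV bound governed by a length-$2M$ interval) into the second term yields exactly the claimed inequality. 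The main obstacle I anticipate is purely bookkeeping: making sure the relative-vs-absolute Lebesgue normalization in Proposition~\ref{thm:tightness-1} is handled correctly when passing to $\leb_{\mathcal{Y}}$, and reconciling the $\epsilon$ versus $|\epsilon|$ discrepancy in the TV term (which I expect is harmless because $|\cdot|$ is $1$-Lipschitz and both variables are confined to an interval of length $2M$, so any coupling bound transfers). No delicate probabilistic estimate is needed beyond what Proposition~\ref{thm:tightness-1} already provides.
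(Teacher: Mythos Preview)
Your proposal is correct and follows essentially the same route as the paper: apply Proposition~\ref{thm:tightness-1}, use $s(X,Y)=|\mu(X)-Y|=|\epsilon|\le M$ to bound the standard-conformal size by $2M$, and substitute into the TV term. You are in fact more careful than the paper's own argument about the $\epsilon$ versus $|\epsilon|$ and the $\relleb_{\mathcal{Y}}$ versus $\leb_{\mathcal{Y}}$ bookkeeping, both of which the paper glosses over.
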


Note that as $M \to 0$ and the strategic alterations do not cause the scores to deviate much from the $\epsilon$, the size of the strategic conformal predictive sets approaches zero.
\vspace{-0.2cm}
\section{Experiments}
\vspace{-0.2cm}

In order to fully assess our method's efficacy and further verify our theoretical guarantees, we conduct an empirical investigation of our method.

\begin{figure*}[h]
    \centering
    \begin{subfigure}[t]{0.48\textwidth}
        \centering
        \includegraphics[width=\columnwidth]{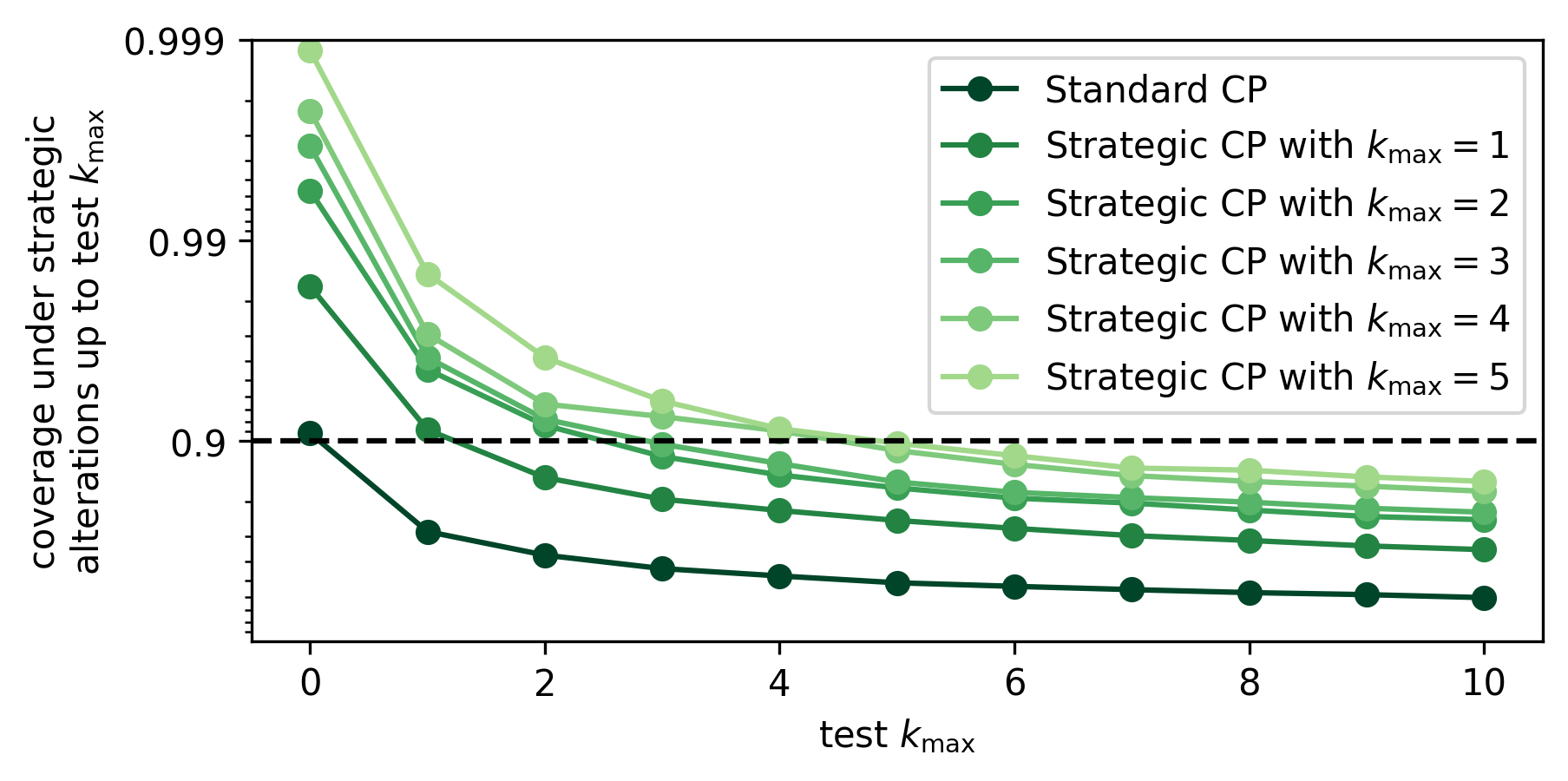}
        \caption{Strategic Conformal Prediction on top of a Plain \linebreak XGBoost Model}
    \end{subfigure}\ \quad
    \begin{subfigure}[t]{0.48\textwidth}
        \centering
        \includegraphics[width=\columnwidth]{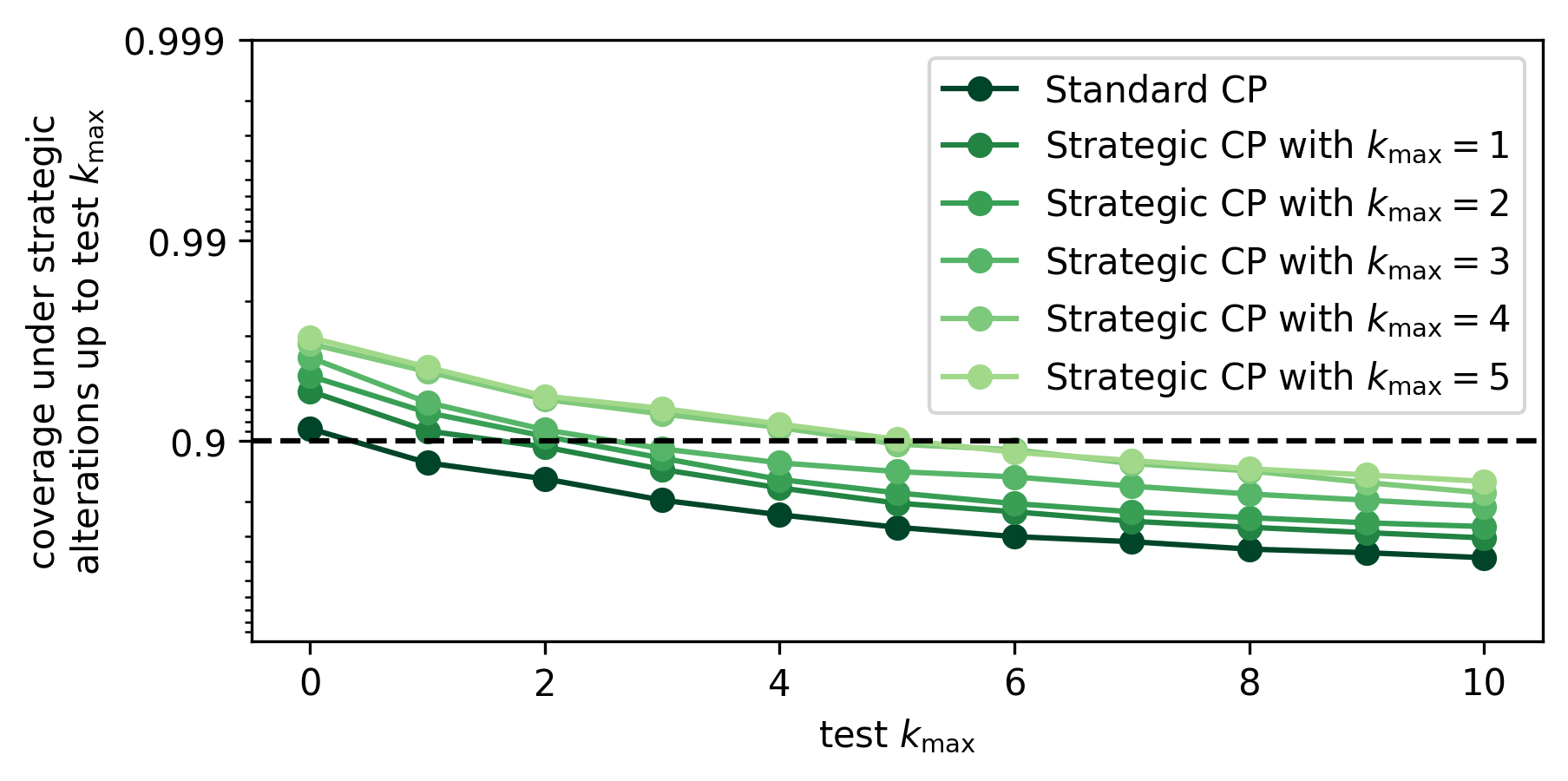}
        \caption{Strategic Conformal Prediction on top of a Strategic XGBoost Model}
    \end{subfigure}
    \caption{\textbf{Coverage of our method and standard CP for varying levels of strategic alterations.} The severity ($k_{\max}$) of strategic alterations at test time is indicated in the x-axis, while different curves correspond to conformal calibrations targeting different levels of $k_{\max}$ (along with standard CP) on the \texttt{academic-dropout} dataset, all with $\alpha = 0.1$. The y-axis indicates the test miscoverage under the strategic alterations up to the severity indicated on the x-axis. These curves follow the behaviour guaranteed by Theorems~\ref{thm:conformal-marginal-guarantee} and \ref{thm:training-conditional-guarantee} as well as Proposition~\ref{thm:as-delta-converges}. Note that using a model more well-suited to strategic alterations ``stabilizes'' these curves, making them closer together and smoother. For more details and more datasets, see the supplementary material.}
    \label{fig:curves}
    \vspace{-1em}
\end{figure*}

\textbf{Datasets} We evaluate our method and baselines on 3 datasets for briefness, with 6 more used in the supplementary material.
\vspace{-0.4cm}
\begin{itemize}
    \item \verb|academic-dropout|~\cite{dataset-academic-dropout}: A dataset for prediction of academic dropout in higher education. Students try to exclude the possibility of dropout from the predictive sets, as the possibility of such an outcome may hinder opportunities.
\vspace{-0.25cm}
    \item \verb|spambase|~\cite{dataset-spambase}: A dataset for classification of email into spam and non-spam. Spammers try to exclude the spam outcome from the predictive sets.
\vspace{-0.25cm}
    \item \verb|shoppers|~\cite{dataset-shoppers}: A dataset for prediction of revenue generated by products. Sellers have the incentive to make the system believe that there will be a larger revenue, as it would increase exposure.
\end{itemize}
\vspace{-0.4cm}
\textbf{Base models} We apply our method atop two XGBoost models: one trained in the usual way (`Plain XGBoost') and another trained by repeated risk minimization~\cite{perdomo-performative-prediction} (`Strategic XGBoost'), which is more well-suited to the strategic setting.
The pseudocode for the exact repeated risk minimization procedure we use can be found in the supplementary material.
For conformal prediction, we then consider the conformity score $s(x, y) = \lvert \mu(x) - y \rvert$ for regression (where $\mu : \mathcal{X} \to \R$ is the base XGBoost model) and $s(x, y) = 1 - p(y | x)$ for classification (where $p(y | \cdot) : \mathcal{X} \to [0, 1]$ for each $y \in \mathcal{Y}$ are the conditional probabilities from the base XGBoost model).

\textbf{Strategic Alterations} In our experiments, we consider two particular forms of strategic alterations: one based on a utility-cost decomposition (Section~\ref{sec:delta-utility-cost}) and another based on the iterative random search formulation (Section~\ref{sec:delta-proximal}).
In either case, we consider ``fully-rational'' utility functions of the form
\( u(X') = \inf_{y \in \Omega} s(X', y) \)
for some $\Omega \subset \mathcal{Y}$.
The exact choices of $\Omega$ are done on a per-dataset basis and can be found in the supplementary material.

\textbf{Baselines} As our baselines we consider Standard Conformal Prediction over (i) the plain XGBoost model, and (ii) the Strategic XGBoost model.
The latter (combining a model fitted considering the strategic setting and then using standard conformal prediction) is, to our knowledge, the closest to uncertainty quantification for the strategic setting that one can get with previously existing methods.

\begin{table*}
    \caption{\textbf{Evaluation of our method on multiple datasets and forms of strategic alterations for $\alpha = 0.1$.} Strategic coverages substantially below the specified confidence level of $90\%$ (below $60\%$) are marked in \textbf{\textcolor{purple}{purple}}. Note how our method is consistently close to the specified coverage, while standard CP never attains it, most of the time substantially so. This improvement from Strategic CP can result in small increases of interval size, but this can be managed by using a base model crafted for the strategic setting. Details and an extended version of this table with more datasets can be found in the supplementary material.}
    \label{tbl:varying}
    \centering
    \begin{tabular}{ccc cc c}
        & \multirow[b]{2}{*}{ \textbf{\shortstack{UNDERLYING \\ MODEL}}\vspace{-0.85em} }
        &
        & \multicolumn{2}{c}{ \textbf{\shortstack{STRATEGIC COVERAGE}} }
        & \multirow[b]{2}{*}{ \textbf{\shortstack{AVG SET \\ SIZE DIFF}}\vspace{-0.85em} }
        \\\cmidrule(lr){4-5}
        \textbf{DATASET}
        &
        & \textbf{$\Delta$s}
        & \textbf{OURS}
        & \textbf{STD CP}
        &
        \\
        \hline

        \multirow{4}{*}{\texttt{academic-dropout}}
        & \multirow{2}{*}{Plain XGBoost}
        & Utility-cost
        & $\mathbf{90\% \pm 2\%}$ & \textcolor{purple}{$\mathbf{57\% \pm 3\%}$} & $+1.08 \pm 0.05$ \\
        &
        & Rand. Search.
        & $\mathbf{92\% \pm 2\%}$ & \textcolor{purple}{$\mathbf{49\% \pm 3\%}$} & $+1.40 \pm 0.04$ \\
        & \multirow{2}{*}{Strategic XGBoost}
        & Utility-cost
        & $\mathbf{91\% \pm 2\%}$ & $76\% \pm 3\%$ & $+0.52 \pm 0.04$ \\
        &
        & Rand. Search.
        & $\mathbf{92\% \pm 2\%}$ & $75\% \pm 3\%$ & $+0.60 \pm 0.05$ \\[.15cm]

        \multirow{4}{*}{\texttt{spambase}}
        & \multirow{2}{*}{Plain XGBoost}
        & Utility-cost
        & $\mathbf{91\% \pm 2\%}$ & \textcolor{purple}{$\mathbf{15\% \pm 2\%}$} & $+1.06 \pm 0.04$ \\
        &
        & Rand. Search.
        & $\mathbf{87\% \pm 2\%}$ & \textcolor{purple}{$\mathbf{29\% \pm 3\%}$} & $+0.77 \pm 0.04$ \\
        & \multirow{2}{*}{Strategic XGBoost}
        & Utility-cost
        & $\mathbf{90\% \pm 2\%}$ & \textcolor{purple}{$\mathbf{40\% \pm 3\%}$} & $+0.38 \pm 0.03$ \\
        &
        & Rand. Search.
        & $\mathbf{89\% \pm 2\%}$ & \textcolor{purple}{$\mathbf{46\% \pm 3\%}$} & $+0.53 \pm 0.04$ \\[.15cm]

        \multirow{4}{*}{\texttt{shoppers}}
        & \multirow{2}{*}{Plain XGBoost}
        & Utility-cost
        & $\mathbf{89\% \pm 1\%}$ & \textcolor{purple}{$\mathbf{44\% \pm 2\%}$} & $+0.14 \pm 0.01$ \\
        &
        & Rand. Search.
        & $\mathbf{88\% \pm 1\%}$ & \textcolor{purple}{$\mathbf{47\% \pm 2\%}$} & $+0.13 \pm 0.01$ \\
        & \multirow{2}{*}{Strategic XGBoost}
        & Utility-cost
        & $\mathbf{89\% \pm 1\%}$ & $64\% \pm 2\%$ & $+0.08 \pm 0.01$ \\
        &
        & Rand. Search.
        & $\mathbf{88\% \pm 1\%}$ & $72\% \pm 2\%$ & $+0.08 \pm 0.01$
    \end{tabular}
    \vspace{-1em}
\end{table*}

\textbf{Computational budget and code} All experiments were conducted on an AMD Ryzen 9 5950X CPU (2.2GHz/5.0GHz, 32 threads) with 64GB of RAM.
The code is available at \verb|redacted URL| and the supplementary material. 

As a first target of investigation, we look in Figure~\ref{fig:coverages-and-sizes} at the strategic coverage $\P[\forall \Delta \in \DeltaSet,\ Y \in C_t (\Delta (X))]$ obtained by our method compared to baselines (Figure~\ref{fig:coverages}), along with the corresponding average sizes of the predictive sets (Figure~\ref{fig:sizes}) on the \verb|academic-dropout| dataset.
It is immediately evident that standard conformal prediction does not obtain any semblance of coverage. Using a strategically-refitted model, which is more well-adapted to the strategic setting, is an improvement, but not nearly enough.
Our method, in contrast, consistently and tightly obtains the specified nominal coverage, regardless of how well-adapted the underlying model is.

We see that our method does, unfortunately, lead to larger predicted sets. This is inevitable: the original sets were too small and thus did not cover the outcome under strategic alterations, and so such an increase is justified.
Nevertheless, we do see -- as was indicated by Proposition~\ref{thm:tightness-1} -- that using a model more well-suited to the strategic setting (namely, our Strategic XGBoost model) leads to smaller average interval sizes in general, as the underlying conformity score is then more well-adapted to the strategic alterations.

We also perform a similar experiment while varying the dataset and strategic alterations being considered -- results of which can be found in Table~\ref{tbl:varying}.
Our method consistently dominates standard conformal prediction (by a substantial amount!) throughout all of the settings explored, showcasing its reliability and versatility.

Figure~\ref{fig:curves} considers a setting in which the strategic alterations used for calibration may not match the ones that occur in practice.
In particular, it considers the case where both alterations follow the iterative random search scheme of Section~\ref{sec:delta-proximal}, but with a different number of iterations.
In practical terms, this corresponds to the actual population putting a ``different amount of effort'' into breaking the deployed model than anticipated during calibration.

We see that, as designed for in Section~\ref{sec:delta-proximal}, if the actual population does ``less effort'' than anticipated (i.e., the $k_{\max}$ in practice is \emph{smaller} than the one used for calibration), then we are completely protected. If the population does ``more effort'' (i.e., the $k_{\max}$ in practice is \emph{larger} than the one used for calibration), then we are not protected in general, \emph{unless} the $k_{\max}$ used for calibration is large enough; if that is so, then, as anticipated in Proposition~\ref{thm:as-delta-converges}, we will be sufficiently close to obtaining coverage for the accumulation point of the strategic alterations, and thus protect even against larger amounts of effort than originally foreseen.

Moreover, still in Figure~\ref{fig:curves} we can clearly see the impact of using a base model that is more well-suited to the strategic setting: such a model ``stabilizes'' the curves in the plot, making it so that the varying sizes of $k_{\max}$, both for test and calibration, have a smaller impact on the overall curve, and thus leading to more efficient and robust predictive sets.

Finally, we also analyze the behavior of our method under a more strenuous form of model specification where we alter the actual stochastic step used for the stochastic simulations.
The figures for this experiment can be found in the supplementary material.
While significant misspecification does break our guarantees, small changes approximately maintain coverage, in line with the results of Propositions~\ref{thm:robustness-1} and \ref{thm:robustness-2}.
\vspace{-0.2cm}
\section{Conclusion}
\vspace{-0.2cm}
In this work we presented Strategic Conformal Prediction, a variant of the Conformal Prediction framework that is well-suited for the strategic setting, where the individuals' covariates change in response to the deployed model.

Our method is, to our knowledge, the first instance of uncertainty quantification for the strategic setting.
We show, both theoretically and empirically, that it properly ensures coverage in spite of strategic alterations, whereas previously available methods (which were not tailored for the strategic setting) do not. This is further supplemented by a series of theoretical results on the behavior of our method under model misspecification, establishing its robustness. For these reasons, we believe our method to be not only a meaningful theoretical contribution the the landscape of strategic ML, but also a reliable tool for real-world application. 

\bibliography{paper}

\begin{thebibliography}{}

\bibitem[dat, 2020]{dataset-productivity}
 (2020).
\newblock {Productivity Prediction of Garment Employees}.
\newblock UCI Machine Learning Repository.
\newblock {DOI}: https://doi.org/10.24432/C51S6D.

\bibitem[Bian and Barber, 2022]{bian-training-conditional-conformal}
Bian, M. and Barber, R.~F. (2022).
\newblock Training-conditional coverage for distribution-free predictive
  inference.
\newblock {\em Electronic Journal of Statistics}.

\bibitem[Cortez et~al., 2009]{dataset-wine}
Cortez, P., Cerdeira, A., Almeida, F., Matos, T., and Reis, J. (2009).
\newblock {Wine Quality}.
\newblock UCI Machine Learning Repository.
\newblock {DOI}: https://doi.org/10.24432/C56S3T.

\bibitem[DeAngelis and Mooij, 2005]{parsimonious-agent-based-model-2}
DeAngelis, D.~L. and Mooij, W.~M. (2005).
\newblock Individual-based modeling of ecological and evolutionary processes.
\newblock {\em Annual Review of Ecology, Evolution, and Systematics},
  36:147--168.

\bibitem[Epstein, 1999]{parsimonious-agent-based-model-1}
Epstein, J.~M. (1999).
\newblock Agent-based computational models and generative social science.
\newblock {\em Complex.}, 4:41--60.

\bibitem[Fernandes et~al., 2015]{dataset-news}
Fernandes, K., Vinagre, P., Cortez, P., and Sernadela, P. (2015).
\newblock {Online News Popularity}.
\newblock UCI Machine Learning Repository.
\newblock {DOI}: https://doi.org/10.24432/C5NS3V.

\bibitem[Genari et~al., 2022]{complex-agent-based-model-2}
Genari, J., Goedert, G.~T., Lira, S., Oliveira, K., de~Souza~Barbosa, A., Lima,
  A., Silva, J.~A., Oliveira, H., Maciel, M., Ledoino, I., Resende, L., Santos,
  E. R.~D., Marchesin, D., Struchiner, C.~J., and Pereira, T. (2022).
\newblock Quantifying protocols for safe school activities.
\newblock {\em PLoS ONE}, 17.

\bibitem[Hardt and Mendler-D{\"u}nner, 2023]{performative-survey}
Hardt, M. and Mendler-D{\"u}nner, C. (2023).
\newblock Performative prediction: Past and future.
\newblock {\em ArXiv}, abs/2310.16608.

\bibitem[Hopkins et~al., 1999]{dataset-spambase}
Hopkins, M., Reeber, E., Forman, G., and Suermondt, J. (1999).
\newblock {Spambase}.
\newblock UCI Machine Learning Repository.
\newblock {DOI}: https://doi.org/10.24432/C53G6X.

\bibitem[Izzo et~al., 2021]{performative-sgd}
Izzo, Z., Ying, L., and Zou, J.~Y. (2021).
\newblock How to learn when data reacts to your model: Performative gradient
  descent.
\newblock In {\em International Conference on Machine Learning}.

\bibitem[Kohavi, 1996]{dataset-census-income}
Kohavi, R. (1996).
\newblock {Census Income}.
\newblock UCI Machine Learning Repository.
\newblock {DOI}: https://doi.org/10.24432/C5GP7S.

\bibitem[Ledoux and Talagrand, 1991]{probability-in-banach-spaces}
Ledoux, M. and Talagrand, M. (1991).
\newblock Probability in banach spaces: Isoperimetry and processes.

\bibitem[Levanon and Rosenfeld, 2021]{levanon-practical}
Levanon, S. and Rosenfeld, N. (2021).
\newblock Strategic classification made practical.
\newblock In {\em International Conference on Machine Learning}.

\bibitem[Moro et~al., 2014]{dataset-bank-marketing}
Moro, S., Rita, P., and Cortez, P. (2014).
\newblock {Bank Marketing}.
\newblock UCI Machine Learning Repository.
\newblock {DOI}: https://doi.org/10.24432/C5K306.

\bibitem[Ohnishi and Honorio, 2020]{novel-change-of-measure}
Ohnishi, Y. and Honorio, J. (2020).
\newblock Novel change of measure inequalities with applications to
  pac-bayesian bounds and monte carlo estimation.
\newblock In {\em International Conference on Artificial Intelligence and
  Statistics}.

\bibitem[Papadopoulos et~al., 2002]{split-conformal}
Papadopoulos, H., Proedrou, K., Vovk, V., and Gammerman, A. (2002).
\newblock Inductive confidence machines for regression.
\newblock In {\em European Conference on Machine Learning}.

\bibitem[Perdomo et~al., 2020]{perdomo-performative-prediction}
Perdomo, J.~C., Zrnic, T., Mendler-D{\"u}nner, C., and Hardt, M. (2020).
\newblock Performative prediction.
\newblock {\em ArXiv}, abs/2002.06673.

\bibitem[Realinho et~al., 2021]{dataset-academic-dropout}
Realinho, V., Martins, M.~V., Machado, J., and Baptista, L. (2021).
\newblock {Predict Students' Dropout and Academic Success}.
\newblock UCI Machine Learning Repository.
\newblock {DOI}: https://doi.org/10.24432/C5MC89.

\bibitem[Rosenfeld et~al., 2020]{nir-rosenfeld-regularization}
Rosenfeld, N., Hilgard, S., Ravindranath, S.~S., and Parkes, D.~C. (2020).
\newblock From predictions to decisions: Using lookahead regularization.
\newblock {\em ArXiv}, abs/2006.11638.

\bibitem[Sadinle et~al., 2016]{label-conditional-conformal}
Sadinle, M., Lei, J., and Wasserman, L.~A. (2016).
\newblock Least ambiguous set-valued classifiers with bounded error levels.
\newblock {\em Journal of the American Statistical Association}, 114:223 --
  234.

\bibitem[Sakar and Kastro, 2018]{dataset-shoppers}
Sakar, C. and Kastro, Y. (2018).
\newblock {Online Shoppers Purchasing Intention Dataset}.
\newblock UCI Machine Learning Repository.
\newblock {DOI}: https://doi.org/10.24432/C5F88Q.

\bibitem[Tibshirani et~al., 2019]{conformal-under-covariate-shift}
Tibshirani, R.~J., Barber, R.~F., Cand{\`e}s, E.~J., and Ramdas, A. (2019).
\newblock Conformal prediction under covariate shift.
\newblock In {\em Neural Information Processing Systems}.

\bibitem[Vargas-P{\'e}rez et~al., 2023]{complex-agent-based-model-1}
Vargas-P{\'e}rez, V., Mesejo, P., Chica, M., and Cord{\'o}n, {\'O}. (2023).
\newblock Deep reinforcement learning in agent-based simulations for optimal
  media planning.
\newblock {\em Inf. Fusion}, 91:644--664.

\bibitem[Vovk, 2012]{group-conditional-conformal}
Vovk, V. (2012).
\newblock Conditional validity of inductive conformal predictors.
\newblock {\em Machine Learning}, 92:349--376.

\bibitem[Vovk et~al., 2005]{vovk-conformal}
Vovk, V., Gammerman, A., and Shafer, G. (2005).
\newblock Algorithmic learning in a random world.

\bibitem[Yeh, 2009]{dataset-taiwan}
Yeh, I.-C. (2009).
\newblock {Default of Credit Card Clients}.
\newblock UCI Machine Learning Repository.
\newblock {DOI}: https://doi.org/10.24432/C55S3H.

\end{thebibliography}

\onecolumn
\appendix

\section{Proofs}\label{suppl:proofs}

We will use the following well-known lemma from the Conformal Prediction literature:

\begin{lemma}[Quantile lemma]\label{thm:quantile-lemma}
    If $V_1, \ldots, V_{n+1}$ are exchangeable random variables, then for any $\beta \in (0, 1)$, we have
    \[ \P[ V_{n+1} \leq \hat{q}_{\beta}(V_1, \ldots, V_n, +\infty) ] \geq \beta, \]
    where $\hat{q}_{\beta}(W_1, \ldots, W_m)$ is the empirical quantile of $W_1, \ldots, W_m$, given by
    \[ \hat{q}_{\beta}(W_1, \ldots, W_m) = \inf \left\{ t \in \R : \frac{1}{m} \sum_{i=1}^m \ind[W_i \leq t] \geq \beta \right\}. \]
\end{lemma}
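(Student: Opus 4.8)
The plan is to reduce the claim to a counting statement about the rank of $V_{n+1}$ in the sample, which is then settled by exchangeability. \textbf{Step 1: rewrite the quantile as an order statistic.} First I would set $k=\lceil\beta(n+1)\rceil$, noting $k\in\{1,\dots,n+1\}$ since $\beta\in(0,1)$. Because $\sum_i\ind[W_i\le t]$ is integer-valued, the condition $\frac1{n+1}\sum_i\ind[W_i\le t]\ge\beta$ is equivalent to $\#\{i:W_i\le t\}\ge k$, so $\hat q_\beta(V_1,\dots,V_n,+\infty)$ equals the $k$-th smallest element of the multiset $\{V_1,\dots,V_n,+\infty\}$. Sorting this multiset as $V_{(1)}\le\cdots\le V_{(n)}\le+\infty$, the quantile is $+\infty$ when $k=n+1$ — in which case the claimed inequality is trivial — and equals $V_{(k)}$, the $k$-th order statistic of $V_1,\dots,V_n$, when $k\le n$; I would then proceed under $k\le n$.

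\textbf{Step 2: relate the target event to a rank.} Next I would introduce the strict rank $R=\sum_{i=1}^n\ind[V_i<V_{n+1}]$ and argue that $\{R\le k-1\}\subseteq\{V_{n+1}\le V_{(k)}\}$: if at most $k-1$ of $V_1,\dots,V_n$ lie strictly below $V_{n+1}$, then at least $n-k+1$ of them are $\ge V_{n+1}$, and since these must occupy the top positions of the sorted list, this forces $V_{(k)}\ge V_{n+1}$. By Step 1 it then suffices to show $\P[R\le k-1]\ge\beta$.

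\textbf{Step 3: exchangeability.} Finally, for $j=1,\dots,n+1$ I would set $R_j=\sum_{i\ne j}\ind[V_i<V_j]$, so that $R=R_{n+1}$ and, by exchangeability of $V_1,\dots,V_{n+1}$, each $R_j$ has the same distribution as $R$. Fixing an ordering $V_{\pi(1)}\le\cdots\le V_{\pi(n+1)}$ gives $R_{\pi(l)}=\#\{i:V_i<V_{\pi(l)}\}\le l-1$, hence $R_{\pi(l)}\le k-1$ for every $l\le k$; therefore $\sum_{j=1}^{n+1}\ind[R_j\le k-1]\ge k$ pointwise. Taking expectations,
\[ (n+1)\,\P[R\le k-1]=\sum_{j=1}^{n+1}\P[R_j\le k-1]\ge k\ge\beta(n+1), \]
so $\P[R\le k-1]\ge\beta$, and combining with Step 2 completes the proof.

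I expect the only delicate point to be the tie handling in Step 2: it is essential to define the rank using the \emph{strict} inequality $V_i<V_{n+1}$ rather than ``$\le$'', since with ``$\le$'' the rank degenerates when many $V_i$ coincide (the inclusion still holds but becomes far too lossy to recover the bound), whereas the strict version keeps the argument lossless. Everything else is a direct unwinding of definitions, so no lengthy computation is needed.
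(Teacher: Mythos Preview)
Your proof is correct. The paper itself does not prove this lemma; it simply cites Lemma~1 of Tibshirani et al.\ (conformal under covariate shift), so you have supplied an argument where the paper defers to the literature.

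It is worth noting, though, that when the paper \emph{applies} the lemma (in the proof of Theorem~\ref{thm:conformal-marginal-guarantee}) it essentially re-derives it inline via the standard route: replace $+\infty$ by $V_{n+1}$, identify the quantile with the $\lceil\beta(n+1)\rceil$-th order statistic of the full sample, and then assert that $\rank(V_{n+1})$ is uniform on $\{1,\dots,n+1\}$ by exchangeability. That last step tacitly assumes almost-sure distinctness (or an external randomized tie-break). Your approach is genuinely more careful on this point: by working with the \emph{strict} rank $R=\sum_{i\le n}\ind[V_i<V_{n+1}]$ and the pointwise counting bound $\sum_j\ind[R_j\le k-1]\ge k$, you obtain $\P[R\le k-1]\ge\beta$ without any distinctness assumption, and your remark about why the strict inequality is essential is exactly right. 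The trade-off is that the uniform-rank argument gives the matching upper bound $\P[\cdot]\le\beta+\tfrac1{n+1}$ for free under distinctness, which your inequality-based route does not immediately yield; but since the lemma only asks for the lower bound, your version is the cleaner and more general one.
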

\begin{proof}
    See Lemma 1 of \cite{conformal-under-covariate-shift}.
\end{proof}

\begin{theorem}[Theorem~\ref{thm:conformal-marginal-guarantee} in the main text]
    Let $Z_1, \ldots, Z_n, Z^\test$ be $n + 1$ exchangeable random variables in $\mathcal{X} \times \mathcal{Y}$. Let $t^\strat_\alpha$ be as above. Then, for any $\alpha \in (0, 1)$,
    \[ \P\left[ \forall \Delta \in \mathcal{D}, \ Y^\test \in C_{t^\strat_\alpha} (\Delta (X^\test)) \right] \geq 1 - \alpha. \]
\end{theorem}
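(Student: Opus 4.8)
The plan is to reduce the statement to the Quantile Lemma (Lemma~\ref{thm:quantile-lemma}) applied to a well-chosen family of exchangeable random variables. The key observation is that the event $\{\forall \Delta \in \DeltaSet,\ Y^\test \in C_{t^\strat_\alpha}(\Delta(X^\test))\}$ can be rewritten purely in terms of the conformity score: by the definition of $C_t$ in Equation~\ref{eq:conformal-set}, $Y^\test \in C_{t^\strat_\alpha}(\Delta(X^\test))$ is equivalent to $s(\Delta(X^\test), Y^\test) \leq t^\strat_\alpha$, so the ``for all $\Delta$'' event is exactly $\{\sup_{\Delta \in \DeltaSet} s(\Delta(X^\test), Y^\test) \leq t^\strat_\alpha\}$.

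So first I would define $V_i := \sup_{\Delta \in \DeltaSet} s(\Delta(X_i), Y_i)$ for $i = 1, \ldots, n$ and $V_{n+1} := \sup_{\Delta \in \DeltaSet} s(\Delta(X^\test), Y^\test)$. Since $(Z_1, \ldots, Z_n, Z^\test)$ are exchangeable and each $V_i$ is the same fixed measurable function of $Z_i$ alone --- crucially, the set $\DeltaSet$ and the score $s$ are fixed objects not depending on the data, and the $\Delta$ are (by assumption) independent of the threshold $t$ --- the $V_i$ are exchangeable as well. (If the $\Delta$ are stochastic, one augments $Z_i$ with the independent randomness used by $\Delta$; exchangeability is preserved.) Next I would check that the definition of $t^\strat_\alpha$ matches the empirical quantile in Lemma~\ref{thm:quantile-lemma}: $t^\strat_\alpha = \inf\{t : \frac{1}{n+1}\sum_{i=1}^n \ind[V_i > t] + \frac{1}{n+1} \leq \alpha\}$, and rearranging the complement gives $\frac{1}{n+1}\sum_{i=1}^n \ind[V_i \leq t] + \frac{1}{n+1} \geq 1 - \alpha$, i.e. $t^\strat_\alpha = \hat{q}_{1-\alpha}(V_1, \ldots, V_n, +\infty)$ with $\beta = 1 - \alpha$. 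Then Lemma~\ref{thm:quantile-lemma} yields $\P[V_{n+1} \leq t^\strat_\alpha] \geq 1 - \alpha$, which by the rewriting above is exactly the claimed bound.

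I do not expect a serious obstacle here; the only point requiring a little care is the measurability of $V_i = \sup_{\Delta \in \DeltaSet} s(\Delta(X_i), Y_i)$ (a supremum over a possibly uncountable $\DeltaSet$) and the handling of the stochastic $\Delta$'s, which is dispatched by conditioning on / augmenting with the auxiliary randomness and noting that exchangeability of the $Z_i$ plus shared-and-independent $\Delta$-randomness gives exchangeability of the $V_i$. The algebraic identification of $t^\strat_\alpha$ with $\hat q_{1-\alpha}(V_1,\ldots,V_n,+\infty)$ is routine, and everything else is a direct invocation of the Quantile Lemma.
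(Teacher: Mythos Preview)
Your proposal is correct and follows essentially the same route as the paper's proof: rewrite the event as $\sup_{\Delta \in \DeltaSet} s(\Delta(X^\test), Y^\test) \leq t^\strat_\alpha$, observe that the suprema $V_i$ are exchangeable, identify $t^\strat_\alpha$ with $\hat q_{1-\alpha}(V_1,\ldots,V_n,+\infty)$, and invoke the Quantile Lemma. If anything, you are slightly more careful than the paper in explicitly justifying exchangeability of the $V_i$ and flagging the measurability and stochastic-$\Delta$ issues.
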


\begin{proof}
    By definition of $C_{t^\strat_\alpha}$,
    \begin{align*}
        &\P[\forall \Delta \in \DeltaSet,\ Y^\test \in C_{t^\strat_\alpha} (\Delta (X^\test))]
        \\ &\quad = \P[\forall \Delta \in \DeltaSet,\ s(\Delta (X^\test), Y^\test) \leq t^\strat_\alpha]
        \\ &\quad = \P[\sup_{\Delta \in \DeltaSet} s(\Delta (X^\test), Y^\test) \leq t^\strat_\alpha]
    \end{align*}
    For convenience, let $S_i := \sup_{\Delta \in \DeltaSet} s(\Delta (X_i), Y_i)$ and $S^\test := \sup_{\Delta \in \DeltaSet} s(\Delta (X^\test), Y^\test)$, and let $S_{(1)}, \ldots, S_{(n)}, S_{(n+1)}$ be the order statistics of $S_1, \ldots, S_n, S^\test$. Then
    \begin{align*}
        \P[\sup_{\Delta \in \DeltaSet} s(\Delta (X^\test), Y^\test) \leq t^\strat_\alpha]
        = \P[S^\test \leq \widehat{q}_{1-\alpha} (S_1, \ldots, S_n, +\infty)].
    \end{align*}
    And, by Lemma~\ref{thm:quantile-lemma}:
    \begin{align*}
        &\P[S^\test \leq \widehat{q}_{1-\alpha} (S_1, \ldots, S_n, +\infty)]
        = \P[S^\test \leq \widehat{q}_{1-\alpha} (S_1, \ldots, S_n, S^\test)] \\
        &= \P[S^\test \leq S_{( \lceil (1-\alpha) (n+1) \rceil )}]
        = \P[\rank(S^\test) \leq \lceil (1-\alpha) (n+1) \rceil] \\
        &= \P_{U \sim \Unif(1 .. n+1)}[U \leq \lceil (1-\alpha) (n+1) \rceil]
        = \frac{\lceil (1-\alpha) (n+1) \rceil}{n+1} \geq \frac{(1-\alpha) (n+1)}{n+1} = 1 - \alpha.
    \end{align*}
\end{proof}

To prove our training-conditional guarantee, we will leverage a generalization of a lemma of \cite{bian-training-conditional-conformal}:

\begin{lemma}[Slight generalization of Lemma~1 from \cite{bian-training-conditional-conformal}]\label{thm:training-conditional-lemma}
    Let $n \geq 2$ and choose a holdout set $A$ with $\emptyset \subsetneq A \subsetneq [n]$.
    Let $\varphi_{[n] \setminus A} = \mathcal{A}((X_i, Y_i) : i \in [n] \setminus A) : \mathcal{X} \times \mathcal{Y} \to \R$, where $\mathcal{A}$ is any algorithm and may be deterministic or randomized. Define
    \[ p_{A,\varphi} (x, y) := \frac{1}{\lvert A \rvert} \sum_{i \in A} \ind[ \varphi_{[n] \setminus A}(X_i, Y_i) \geq \varphi_{[n] \setminus A}(x, y) ], \]
    and
    \[ p_{A,\varphi}^\star (x, y) := \P[\varphi_{[n] \setminus A} (X, Y) \geq \varphi_{[n] \setminus A} (x, y) | \varphi_{[n] \setminus A}]. \]
    Then $p_{A,\varphi}^\star (x, y)$ is a valid $p$-value conditional on the training data, i.e.,
    \[ \P[p_{A,\varphi}^\star (x, y) \leq a | \mathcal{D}_n] <= a \text{ for all } a \in [0, 1], \text{ almost surely over } \mathcal{D}_n. \]
    Moreover, for any $\epsilon \geq \sqrt{\log(2) / 2 \lvert A \rvert}$,
    \[ \P\left[\sup_{x,y} \bigl( p_{A,\varphi}^\star (x, y) - p_{A,\varphi} (x, y) \bigr) > \epsilon\right] \leq e^{-2 \lvert A \rvert \epsilon^2}. \]
\end{lemma}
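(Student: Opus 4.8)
The plan is to condition throughout on $\varphi := \varphi_{[n]\setminus A}$, understood as generating the $\sigma$-algebra of the data $\{(X_j,Y_j):j\notin A\}$ \emph{together with} the internal randomness of $\mathcal{A}$. The only new ingredient relative to \cite{bian-training-conditional-conformal} is that this extra randomness, being independent of the holdout sample $\{(X_i,Y_i):i\in A\}$ and of any fresh test point, can be absorbed into the conditioning; once this is done, conditionally on $\varphi$ the holdout scores $V_i:=\varphi(X_i,Y_i)$, $i\in A$, are i.i.d.\ with common survival function $\bar G(t):=\P[\varphi(X,Y)\ge t\mid\varphi]$, a fresh test score is an independent copy of the same law, and the Bian--Barber argument goes through essentially verbatim.

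\textbf{Conditional validity.} Reading $(x,y)$ as a fresh test point $(X^\test,Y^\test)$ independent of the data and setting $W:=\varphi(X^\test,Y^\test)$, we have $p^\star_{A,\varphi}(X^\test,Y^\test)=\bar G(W)$ where $W\mid\varphi$ has survival function $\bar G$. I would then invoke the standard super-uniformity fact $\P[\bar G(W)\le a\mid\varphi]\le a$ for all $a\in[0,1]$ (proof: compare $\{\bar G(W)\le a\}$ with the half-line $\{W\ge t^\star\}$ for $t^\star:=\sup\{t:\bar G(t)>a\}$, controlling the boundary via monotonicity and left-continuity of $\bar G$). Since $W$ is independent of $(\varphi,\mathcal{D}_n)$ and $\sigma(\mathcal{D}_n)\subseteq\sigma(\varphi,\mathcal{D}_n)$, the tower property upgrades this to $\P[p^\star_{A,\varphi}(X^\test,Y^\test)\le a\mid\mathcal{D}_n]\le a$ almost surely, and a countable-dense-set argument makes it hold for all $a$ simultaneously, a.s.

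\textbf{DKW bound.} By definition $p_{A,\varphi}(x,y)=\frac{1}{|A|}\sum_{i\in A}\ind[V_i\ge\varphi(x,y)]=:\bar G_{|A|}(\varphi(x,y))$ is the empirical survival function of the holdout scores at the threshold $\varphi(x,y)$, while $p^\star_{A,\varphi}(x,y)=\bar G(\varphi(x,y))$. Hence
\[
 \sup_{x,y}\bigl(p^\star_{A,\varphi}(x,y)-p_{A,\varphi}(x,y)\bigr)\;=\;\sup_{x,y}\bigl(\bar G(\varphi(x,y))-\bar G_{|A|}(\varphi(x,y))\bigr)\;\le\;\sup_{t\in\R}\bigl(\bar G(t)-\bar G_{|A|}(t)\bigr),
\]
which is exactly a one-sided Kolmogorov--Smirnov deviation between the empirical distribution of $|A|$ i.i.d.\ points and its population law (the $\ge$-versus-$>$ convention is immaterial, being used identically on both sides). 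Massart's one-sided DKW inequality gives $\P[\sup_t(\bar G(t)-\bar G_{|A|}(t))>\epsilon\mid\varphi]\le e^{-2|A|\epsilon^2}$ in the regime $\epsilon\ge\sqrt{\log 2/(2|A|)}$, and taking expectation over $\varphi$ removes the conditioning and yields the stated bound.

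\textbf{Main obstacle.} The work is in the conditioning bookkeeping, not in any hard inequality: one must verify rigorously that, given $\varphi_{[n]\setminus A}$ (inclusive of $\mathcal{A}$'s internal randomness), the holdout scores are genuinely i.i.d.\ copies of a test score — which is where $\emptyset\subsetneq A\subsetneq[n]$, the i.i.d.\ assumption on the data, and the independence of $\mathcal{A}$'s randomness from the holdout all enter — and that DKW and super-uniformity may legitimately be applied conditionally and then integrated out. The super-uniformity fact and the precise one-sided DKW constant (the latter being the source of the $\epsilon\ge\sqrt{\log 2/(2|A|)}$ restriction) are classical; the only mild care there is with boundary behaviour when $\bar G$ has atoms.
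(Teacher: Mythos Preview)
Your proposal is correct and follows essentially the same approach as the paper's proof: both recognize $p^\star_{A,\varphi}$ as the survival function $\bar G$ evaluated at the test score (yielding validity by super-uniformity), and both bound $\sup_{x,y}(p^\star-p)$ by the one-sided KS statistic $\sup_t(\bar G(t)-\bar G_{|A|}(t))$ before applying DKW conditionally on $\varphi$ and integrating out. Your write-up is in fact more careful than the paper's about the conditioning on $\mathcal{A}$'s internal randomness and the atom/boundary issues in super-uniformity, but these are refinements of the same argument rather than a different route.
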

\begin{proof}
    First, for any fixed function $s : \mathcal{X} \times \mathcal{Y} \to \R$, define
    \[ \bar{F}_s (t) = \P_P [s(X, Y) \geq t]. \]
    In other words, $\bar{F}_\mu (t)$ is the right-tailed CDF of the conformity score $s(X, Y)$ under $(X,Y) \sim P$.
    We can therefore write
    \[ p_{A,\varphi}^* (X_{n+1}, Y_{n+1}) = \bar{F}_s (s(X_{n+1}, Y_{n+1})). \]

    Since $(X_{n+1}, Y_{n+1}) \sim P$ (and is independent of $s$), this is clearly a valid $p$-value by definition of $\bar{F}_s$.

    Next, for any $(x, y) \in \mathcal{X} \times \mathcal{Y}$, we can calculate
    \begin{align*}
        p_{A,\varphi}^* (x, y) - p_{A,\varphi} (x, y)
        &= \bar{F}_s (s(x, y)) - \frac{1}{\lvert A \rvert} \sum_{i \in A} \ind[ \varphi_{[n] \setminus A} (X_i, Y_i) \geq \varphi_{[n] \setminus A} (x, y) ] \\
        &\leq \sup_{t \in \R} \left( \bar{F}_s (s(x, y)) - \frac{1}{\lvert A \rvert} \sum_{i \in A} \ind[ \varphi_{[n] \setminus A} (X_i, Y_i) \geq t ] \right).
    \end{align*}

    Finally, since $(X_i, Y_i)_{i \in A}$ are drawn i.i.d. from $P$ and are independent from $\varphi_{[n] \setminus A}$, for any $\epsilon \geq \sqrt{log(2)/ 2 \lvert A \rvert}$ the Dvoretzky-Kiefer-Wolfowitz-Massart (DKW) inequality implies that, conditional on $\varphi_{[n] \setminus A}$,
    \[ \sup_{t \in RR} \left( \bar{F}_s (t) - \frac{1}{\lvert A \rvert} \sum_{i \in A} \ind[ \varphi_{[n] \setminus A}(X_i, Y_i) \geq t ] \right) \leq \epsilon \]
    holds with probability at least $1 - e^{-2 \lvert A \rvert \epsilon^2}$. The same bound therefore holds marginally as well.
\end{proof}

We can now prove the Theorem:

\begin{theorem}[Theorem~\ref{thm:training-conditional-guarantee} in the main text]
    For any $\delta \in (0, 1)$, with probability of at least $1 - \delta$ over the draw of i.i.d. random variables $Z_1, \ldots, Z_n$, it holds that
    \begin{align*}
        & \P\left[ \forall \Delta \in \DeltaSet, \ Y^\test \in C_{t^\strat_\alpha} (\Delta (X^\test)) \ \middle|\ t^\strat_\alpha \right]
        \\ &\quad \geq 1 - \alpha - \sqrt{\frac{\log 1/\delta}{2n}}.
    \end{align*}
\end{theorem}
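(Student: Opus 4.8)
The plan is to reduce everything to the one-dimensional ``strategic'' scores $S_i := \sup_{\Delta \in \DeltaSet} s(\Delta(X_i), Y_i)$ and $S^\test := \sup_{\Delta \in \DeltaSet} s(\Delta(X^\test), Y^\test)$, exactly as in the proof of Theorem~\ref{thm:conformal-marginal-guarantee}. First I would recall from that proof that, for any fixed threshold $t$, the event $\{\forall \Delta \in \DeltaSet,\ Y^\test \in C_t(\Delta(X^\test))\}$ coincides with $\{S^\test \le t\}$, and that, in this notation, $t^\strat_\alpha$ is simply the $\lceil (1-\alpha)(n+1)\rceil$-th order statistic of $S_1, \ldots, S_n$ (for $\alpha \ge 1/(n+1)$; otherwise $t^\strat_\alpha = +\infty$ and the statement is trivial). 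Under the i.i.d.\ assumption -- absorbing the randomness of the stochastic $\Delta$s into the samples, or equivalently conditioning on the realized map $(x,y) \mapsto \sup_{\Delta \in \DeltaSet} s(\Delta(x), y)$ as permitted by Lemma~\ref{thm:training-conditional-lemma} -- the scalars $S_1, \ldots, S_n, S^\test$ are i.i.d.

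Next I would observe that $t^\strat_\alpha$ is a measurable function of $(S_1, \ldots, S_n)$ alone, hence independent of $S^\test$. Writing $F$ for the CDF of $S^\test$ and $\widehat F_n(t) := \frac{1}{n} \sum_{i=1}^n \ind[S_i \le t]$ for the empirical CDF of the calibration scores, this independence gives
\[ \P\left[ \forall \Delta \in \DeltaSet,\ Y^\test \in C_{t^\strat_\alpha}(\Delta(X^\test)) \ \middle|\ t^\strat_\alpha \right] = \P[S^\test \le t^\strat_\alpha \mid t^\strat_\alpha] = F(t^\strat_\alpha) \quad \text{a.s.,} \]
so it suffices to show $F(t^\strat_\alpha) \ge 1 - \alpha - \sqrt{\log(1/\delta)/2n}$ with probability at least $1 - \delta$ over the draw of $Z_1, \ldots, Z_n$.

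For that, I would first extract from the definition of $t^\strat_\alpha$ -- namely that $\frac{1}{n+1}\sum_{i=1}^n \ind[S_i > t] + \frac{1}{n+1} \le \alpha$ holds at $t = t^\strat_\alpha$, combined with right-continuity of $t \mapsto \sum_i \ind[S_i \le t]$ -- the lower bound $\widehat F_n(t^\strat_\alpha) \ge \frac{(n+1)(1-\alpha)}{n} \ge 1 - \alpha$ on the empirical coverage of the selected threshold. Then I would apply the one-sided Dvoretzky-Kiefer-Wolfowitz-Massart inequality to the i.i.d.\ sample $S_1, \ldots, S_n$ -- equivalently, the uniform concentration bound of Lemma~\ref{thm:training-conditional-lemma} -- which yields $\P[\sup_{t \in \R}(\widehat F_n(t) - F(t)) > \epsilon] \le e^{-2n\epsilon^2}$. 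Choosing $\epsilon = \sqrt{\log(1/\delta)/2n}$ makes the right-hand side equal to $\delta$, and on the complementary event $F(t^\strat_\alpha) \ge \widehat F_n(t^\strat_\alpha) - \epsilon \ge 1 - \alpha - \epsilon$, which is exactly the claimed conditional coverage by the display above.

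The main obstacle is less any individual estimate than the bookkeeping around the suprema $\sup_{\Delta \in \DeltaSet}$: one must verify that these are measurable and that the induced scores $S_i$ really form an i.i.d.\ family even when $\DeltaSet$ is an infinite collection of stochastic maps -- this is precisely why the randomized-algorithm version of the DKW-type lemma (Lemma~\ref{thm:training-conditional-lemma}) is invoked, since it lets us condition on the realized score map and then work with honest i.i.d.\ scalars. The remaining care is routine: the usual left/right-continuity and atom bookkeeping when turning the definition of $t^\strat_\alpha$ into $\widehat F_n(t^\strat_\alpha) \ge 1 - \alpha$, and the range of validity of the sharp DKW constant (relevant only when $\delta > 1/2$, where the bound is essentially vacuous). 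Everything else is a direct assembly of these pieces, which is why the proof is brief.
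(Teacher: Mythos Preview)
Your proposal is correct and follows essentially the same route as the paper: reduce to the i.i.d.\ scalar scores $S_i = \sup_{\Delta \in \DeltaSet} s(\Delta(X_i),Y_i)$, identify the conditional coverage with $F(t^\strat_\alpha)$, and close the gap between empirical and population CDFs via the one-sided DKW inequality. The only cosmetic difference is that the paper packages the DKW step through the $p$-value language of Lemma~\ref{thm:training-conditional-lemma} (bounding $\sup_{x,y}(p^\star - p)$ and then using that $p^\star$ is a valid $p$-value), whereas you apply DKW directly to $\widehat F_n - F$ and use $\widehat F_n(t^\strat_\alpha)\ge 1-\alpha$; these are the same computation up to the substitution $p = 1 - \widehat F_n$, $p^\star = 1 - F$, and your framing is arguably a bit more direct.
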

\begin{proof}
    By definition of $C_{t^\strat_\alpha}$,
    \begin{align*}
        & \P\left[ \forall \Delta \in \DeltaSet, \ Y^\test \in C_{t^\strat_\alpha} (\Delta (X^\test)) \ \middle|\ t^\strat_\alpha \right]
        \\ &\quad = \P\left[ \forall \Delta \in \DeltaSet, \ s(\Delta(X^\test), Y^\test) \leq t^\strat_\alpha \ \middle|\ t^\strat_\alpha \right]
        \\ &\quad = \P\left[ \sup_{\Delta \in \DeltaSet} s(\Delta(X^\test), Y^\test) \leq t^\strat_\alpha \ \middle|\ t^\strat_\alpha \right]
        \\ &\quad = \P\left[ \sum_{i=1}^n \ind\left[\sup_{\Delta' \in \DeltaSet} s(\Delta'(X_i), Y_i) > \sup_{\Delta \in \DeltaSet} s(\Delta(X^\test), Y^\test)\right] > (n+1) \alpha - 1 \ \middle|\ t^\strat_\alpha \right]
        \\ &\quad = \P\left[ \sum_{i=1}^n \ind\left[\sup_{\Delta' \in \DeltaSet} s(\Delta'(X_i), Y_i) > \sup_{\Delta \in \DeltaSet} s(\Delta(X^\test), Y^\test)\right] > n \alpha - (1 - \alpha) \ \middle|\ t^\strat_\alpha \right]
        \\ &\quad \geq \P\left[ \sum_{i=1}^n \ind\left[\sup_{\Delta' \in \DeltaSet} s(\Delta'(X_i), Y_i) > \sup_{\Delta \in \DeltaSet} s(\Delta(X^\test), Y^\test)\right] > n \alpha \ \middle|\ t^\strat_\alpha \right]
        \\ &\quad = \P\left[ \sum_{i=1}^n \ind\left[\sup_{\Delta \in \DeltaSet} s(\Delta(X_i), Y_i) > \sup_{\Delta \in \DeltaSet} s(\Delta(X^\test), Y^\test)\right] > n \alpha \ \middle|\ t^\strat_\alpha \right]
        = \P\left[ p_{[n]\setminus[n_0],\varphi}(X^\test, Y^\test) > \alpha \ \middle|\ t^\strat_\alpha \right],
    \end{align*}
    for $\varphi(x, y) = \sup_{\Delta \in \DeltaSet} s(\Delta (x), y)$.

    Now, by Lemma~\ref{thm:training-conditional-lemma}, we have that for any $\epsilon \geq \sqrt{\log(2) / 2 n}$,
    \[ \P\left[ \sup_{x,y} \bigl( p^*_{A,\varphi}(x, y) - p_{A,\varphi}(x, y) \bigr) > \epsilon \right] \leq e^{-2 n \epsilon^2}; \]
    solving for $\delta = e^{-2 n \epsilon^2}$, we get that, with probability of at least $1 - \delta$,
    \[ \sup_{x,y} \bigl( p^*_{A,\varphi}(x, y) - p_{A,\varphi}(x, y) \bigr) \leq \sqrt{ \frac{\log 1/\delta}{2 n} }, \]
    and thus that still with probability of at least $1 - \delta$,
    \begin{align*}
        & \P\left[ \forall \Delta \in \DeltaSet, \ Y^\test \in C_{t^\strat_\alpha} (\Delta (X^\test)) \ \middle|\ t^\strat_\alpha \right]
        \geq \P\left[ p_{[n]\setminus[n_0],\varphi}(X^\test, Y^\test) > \alpha \ \middle|\ t^\strat_\alpha \right]
        \\ &\quad = \P\left[ p^\star_{[n]\setminus[n_0],\varphi}(X^\test, Y^\test) - \left( p^\star_{[n]\setminus[n_0],\varphi}(X^\test, Y^\test) - p_{[n]\setminus[n_0],\varphi}(X^\test, Y^\test) \right) > \alpha \ \middle|\ t^\strat_\alpha \right]
        \\ &\quad = \P\left[ p^\star_{[n]\setminus[n_0],\varphi}(X^\test, Y^\test) > \alpha + \left( p^\star_{[n]\setminus[n_0],\varphi}(X^\test, Y^\test) - p_{[n]\setminus[n_0],\varphi}(X^\test, Y^\test) \right) \ \middle|\ t^\strat_\alpha \right]
        \\ &\quad \geq \P\left[ p^\star_{[n]\setminus[n_0],\varphi}(X^\test, Y^\test) > \alpha + \sup_{x, y} \left( p^\star_{[n]\setminus[n_0],\varphi}(x, y) - p_{[n]\setminus[n_0],\varphi}(x, y) \right) \ \middle|\ t^\strat_\alpha \right]
        \\ &\quad \geq \P\left[ p^\star_{[n]\setminus[n_0],\varphi}(X^\test, Y^\test) > \alpha + \sqrt{\frac{\log 1/\delta}{2 n}} \ \middle|\ t^\strat_\alpha \right]
        \\ &\quad = 1 - \P\left[ p^\star_{[n]\setminus[n_0],\varphi}(X^\test, Y^\test) \leq \alpha + \sqrt{\frac{\log 1/\delta}{2 n}} \ \middle|\ t^\strat_\alpha \right]
        \geq 1 - \alpha - \sqrt{\frac{\log 1/\delta}{2 n}},
    \end{align*}
    where the last step holds by Lemma~\ref{thm:training-conditional-lemma}, finishing the proof.
\end{proof}

The proofs of marginal coverage the group-conditional and label-conditional variants of our method are similar to the proof of Theorem~\ref{thm:conformal-marginal-guarantee}, with modifications analogous to the ones in the non-strategic setting.

\begin{theorem}[Theorem~\ref{thm:conformal-marginal-guarantee-groupconditional} in the main text]
    Let $Z_1, \ldots, Z_n, Z^\test$ be $n + 1$ exchangeable random variables in $\mathcal{X} \times \mathcal{Y}$. Let $C^{\stratm(G_j)}_\alpha$ be as above (for $j = 1, \ldots, l$). Then, for any $\alpha \in (0, 1)$,
    \begin{align*}
        \P\left[ \forall \Delta \in \DeltaSet, \ Y^\test \in C^{\stratm(G_j)}_{\alpha} (\Delta (X^\test)) \middle| X^\test \in G_j \right] \\ \geq 1 - \alpha, \quad \forall j = 1, \ldots, l. \quad
    \end{align*}
\end{theorem}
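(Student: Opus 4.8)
The plan is to replay the proof of Theorem~\ref{thm:conformal-marginal-guarantee} \emph{inside} each group. Fix $j \in \{1, \ldots, l\}$ and abbreviate $\varphi(x, y) = \sup_{\Delta \in \DeltaSet} s(\Delta(x), y)$, $S_i = \varphi(X_i, Y_i)$, $S^\test = \varphi(X^\test, Y^\test)$, and $I_j = \{ i \in [n] : X_i \in G_j \}$. The first step is to strip away the group-overlap bookkeeping: on the event $\{X^\test \in G_j\}$ the threshold used is $\max_{j' : X^\test \in G_{j'}} t^{\stratm(G_{j'})}_\alpha \geq t^{\stratm(G_j)}_\alpha$, and since the conformal set $C_t(x)$ is nondecreasing in $t$, the coverage event for that realized threshold contains the one obtained with $t^{\stratm(G_j)}_\alpha$. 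Using also $\{ \forall \Delta \in \DeltaSet,\ s(\Delta(X^\test), Y^\test) \leq t \} = \{ S^\test \leq t \}$, it therefore suffices to establish
\[ \P\left[ S^\test \leq t^{\stratm(G_j)}_\alpha \ \middle|\ X^\test \in G_j \right] \geq 1 - \alpha. \]

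Next I would recognize $t^{\stratm(G_j)}_\alpha$ as an empirical quantile within the group. The same algebra as in the ungrouped case shows that the defining inequality $\sum_{i \in I_j} \ind[S_i > t] + 1 \leq \alpha(n_{G_j} + 1)$ is equivalent to $\sum_{i \in I_j} \ind[S_i \leq t] \geq (1 - \alpha)(n_{G_j} + 1)$, so $t^{\stratm(G_j)}_\alpha = \widehat q_{1-\alpha}\bigl( (S_i)_{i \in I_j},\ +\infty \bigr)$, with the convention that this equals $+\infty$ when $n_{G_j} = 0$ — in which case $C^{\stratm(G_j)}_\alpha$ returns all of $\mathcal{Y}$ and the claim is trivial. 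Hence I may assume $n_{G_j} \geq 1$.

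The crux is the exchangeability statement: conditional on $\{X^\test \in G_j\}$, the collection $\bigl( (S_i)_{i \in I_j},\ S^\test \bigr)$ of $m := n_{G_j} + 1$ variables is exchangeable. I would derive this from the exchangeability of $(Z_1, \ldots, Z_n, Z^\test)$ by conditioning on the unordered multiset of all $n+1$ points together with the identities of those lying outside $G_j$; under this conditioning the $m$ surviving in-group points (one of which is $Z^\test$, by assumption) appear in uniformly random order, so in particular $\rank(S^\test)$ among $(S_i)_{i \in I_j} \cup \{S^\test\}$ is uniform on $\{1, \ldots, m\}$. I expect this to be the main obstacle: one must check that conditioning on membership in $G_j$ does not itself break symmetry among the retained coordinates. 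This is the familiar subtlety of group-conditional conformal prediction and goes through because the event $\{X^\test \in G_j\}$ depends on the in-group coordinates symmetrically.

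Finally I would invoke the quantile lemma (Lemma~\ref{thm:quantile-lemma}) with the exchangeable sequence of length $m$, at level $1 - \alpha$ and conditionally on $\{X^\test \in G_j\}$, exactly as in the proof of Theorem~\ref{thm:conformal-marginal-guarantee}. This yields $\P\left[ S^\test \leq \widehat q_{1-\alpha}\bigl( (S_i)_{i \in I_j}, +\infty \bigr) \ \middle|\ X^\test \in G_j \right] \geq \lceil (1-\alpha)(n_{G_j}+1) \rceil / (n_{G_j}+1) \geq 1 - \alpha$, which is the reduced claim. Since $j \in \{1, \ldots, l\}$ was arbitrary, the stated bound holds for every group, completing the proof.
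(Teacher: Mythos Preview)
Your proposal is correct and follows essentially the same approach as the paper's proof: reduce to the in-group conformity scores $S_i = \sup_{\Delta \in \DeltaSet} s(\Delta(X_i), Y_i)$, identify $t^{\stratm(G_j)}_\alpha$ as the $(1-\alpha)$-empirical quantile of those scores with a $+\infty$ appended, and invoke Lemma~\ref{thm:quantile-lemma} using exchangeability conditional on $\{X^\test \in G_j\}$. Your write-up is in fact a bit more careful than the paper's, which glosses over the max-over-overlapping-groups step and the $n_{G_j}=0$ corner case that you handle explicitly; these are minor bookkeeping points rather than a different strategy.
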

\begin{proof}
    Consider any group $G_j$.
    By definition of $C^{\strat-(G_j)}_\alpha$,
    \begin{align*}
        &\P[\forall \Delta \in \DeltaSet,\ Y^\test \in C^{\strat-(G_j)}_\alpha (\Delta (X^\test)) | X^\test \in G_j]
        \\ &\quad = \P[\forall \Delta \in \DeltaSet,\ s(\Delta (X^\test), Y^\test) \leq t^{\strat-(G_j)}_\alpha | X^\test \in G_j]
        \\ &\quad = \P[\sup_{\Delta \in \DeltaSet} s(\Delta (X^\test), Y^\test) \leq t^{\strat-(G_j)}_\alpha | X^\test \in G_j]
    \end{align*}
    For convenience, let $S_i := \sup_{\Delta \in \DeltaSet} s(\Delta (X_i), Y_i)$ and $S^\test := \sup_{\Delta \in \DeltaSet} s(\Delta (X^\test), Y^\test)$, $k = \lvert {i = 1, \ldots, n : X_i \in G_j} \rvert$, and let $S_{(1)}, \ldots, S_{(n)}, S_{(n+1)}$ be the order statistics of $S_1, \ldots, S_n, S^\test$. Then
    \begin{align*}
        \P[\sup_{\Delta \in \DeltaSet} s(\Delta (X^\test), Y^\test) \leq t^{\strat-(G_j)}_\alpha | X^\test \in G_j]
        = \P[S^\test \leq \widehat{q}_{1-\alpha} (S_1, \ldots, S_n, +\infty | X_* \in G_j) | X^\test \in G_j].
    \end{align*}
    And, by Lemma~\ref{thm:quantile-lemma}:
    \begin{align*}
        &\P[S^\test \leq \widehat{q}_{1-\alpha} (S_1, \ldots, S_n, +\infty | X_* \in G_j) | X^\test \in G_j]
        = \P[S^\test \leq \widehat{q}_{1-\alpha} (S_1, \ldots, S_n, S^\test | X_* \in G_j) | X^\test \in G_j] \\
        &= \P[S^\test \leq S_{( \lceil (1-\alpha) (k+1) \rceil )} | X^\test \in G_j]
        = \P[\rank(S^\test | X_* \in G_j) \leq \lceil (1-\alpha) (k+1) \rceil] \\
        &= \P_{U \sim \Unif(1 .. k+1)}[U \leq \lceil (1-\alpha) (k+1) \rceil]
        = \frac{\lceil (1-\alpha) (k+1) \rceil}{k+1} \geq \frac{(1-\alpha) (k+1)}{k+1} = 1 - \alpha.
    \end{align*}
\end{proof}

For the label-conditional case, see Section~\ref{sec:label-conditional}.

Most of our remaining results leverage the total variation distance, defined as such:

\begin{definition}[Total variation distance]
    Consider a measurable space $(\Omega, \mathcal{F})$ and probability measures(/distributions) $P$ and $Q$ defined on $(\Omega, \mathcal{F})$.
    The total variation distance between $P$ and $Q$ is defined as
    \[ \TV (P\ \|\ Q) = \sup_{A \in \mathcal{F}} \lvert P(A) - Q(A) \rvert. \]
\end{definition}

The definition above defines the total variation distance as an Integral Probability Metric (IPM).
We note the well-known fact that it is also an $f$-divergence:
\[ \TV(P\ \|\ Q) = \frac{1}{2} \E_{Z \sim P} \left[ \left\lvert \frac{\dif Q}{\dif P} - 1 \right\rvert \right]. \]

\begin{proposition}[Proposition~\ref{thm:robustness-1} in the main text]
    Let $C : \mathcal{X} \to 2^\mathcal{Y}$ be a set predictor.
    For any $\Delta \in \DeltaSet$,
    \begin{align*}
        &\left\lvert \P_{X,Y}[Y \in C_\alpha (\Delta (X))] - \P_{X,Y}[Y \in C_\alpha (\Delta^\star (X))] \right\rvert
        \\ &\quad \leq \E_X\left[\TV( \Delta (X) | X \ \Vert\ \Delta^\star (X) | X )\right].
    \end{align*}
\end{proposition}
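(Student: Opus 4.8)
The plan is to condition on the covariate $X$ and reduce the statement to the one-dimensional variational characterization of total variation. Introduce the conditional coverage function
\[ \phi(x, x') := \P\!\left[ Y \in C_\alpha(x') \ \middle|\ X = x \right], \]
i.e.\ the probability that the true response lies in $C_\alpha(x')$ when the covariate actually equals $x$ (so that $Y$ follows its conditional law given $X=x$) but the set predictor is evaluated at $x'$; here we also average over any internal randomness of $C_\alpha$. For each fixed $x$, $\phi(x, \cdot) : \mathcal{X} \to [0,1]$ is measurable. The point of this object is that it lets us factor the randomness: since the alterations act only on the covariate, $\Delta(X)$ (and $\Delta^\star(X)$) is conditionally independent of $Y$ given $X$, and likewise independent of the randomness of $C_\alpha$.

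First I would use the tower rule together with this conditional independence to write
\[ \P_{X,Y}[Y \in C_\alpha(\Delta(X))] = \E_X\!\left[ \E_{X' \sim \Delta(X)\mid X}\bigl[\phi(X, X')\bigr] \right], \]
and the analogous identity with $\Delta^\star$ in place of $\Delta$. Subtracting the two, then applying the triangle inequality and Jensen's inequality to move the absolute value inside the outer $\E_X$, gives
\[ \bigl\lvert \P_{X,Y}[Y \in C_\alpha(\Delta(X))] - \P_{X,Y}[Y \in C_\alpha(\Delta^\star(X))] \bigr\rvert \le \E_X\!\left[ \bigl\lvert \E_{X'\sim\Delta(X)\mid X}[\phi(X,X')] - \E_{X'\sim\Delta^\star(X)\mid X}[\phi(X,X')] \bigr\rvert \right]. \]
Next, for each fixed $x$, writing $P_x$ and $P_x^\star$ for the conditional laws of $\Delta(X)$ and $\Delta^\star(X)$ given $X=x$, the layer-cake identity $\phi(x,\cdot) = \int_0^1 \ind[\phi(x,\cdot) > u]\,\dif u$ (valid since $\phi(x,\cdot)$ is $[0,1]$-valued) yields, by Tonelli,
\[ \E_{P_x}[\phi(x,\cdot)] - \E_{P_x^\star}[\phi(x,\cdot)] = \int_0^1 \bigl( P_x(\phi(x,\cdot) > u) - P_x^\star(\phi(x,\cdot) > u) \bigr)\,\dif u, \]
and each integrand is bounded in absolute value by $\sup_A \lvert P_x(A) - P_x^\star(A)\rvert = \TV(P_x \,\|\, P_x^\star)$. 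Hence the inner term is at most $\TV(\Delta(X)\mid X \ \Vert\ \Delta^\star(X)\mid X)$ evaluated at $x$; plugging this back and taking $\E_X$ finishes the argument.

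The hard part will be the measure-theoretic bookkeeping behind the first display, namely making precise that $\Delta(X) \perp Y \mid X$ (and the same for $\Delta^\star$ and for the internal randomness of $C_\alpha$) so that the conditional coverage genuinely factors through the deterministic kernel $\phi(x,\cdot)$ and through the conditional laws $P_x, P_x^\star$. Once that factorization is justified, the remaining ingredients — the triangle inequality, Jensen, and the elementary bound $\lvert\E_P f - \E_Q f\rvert \le \TV(P\,\|\,Q)$ for $[0,1]$-valued $f$ via the layer-cake representation — are routine, so no further obstacle remains.
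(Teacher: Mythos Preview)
Your proposal is correct and reaches the same bound, but the route differs from the paper's. The paper argues at the level of the \emph{joint} law: it first bounds the left-hand side by $\TV\bigl((\Delta(X),X,Y)\,\|\,(\Delta^\star(X),X,Y)\bigr)$ using the IPM definition of total variation, and then shows this joint TV is \emph{equal} to $\E_X[\TV(\Delta(X)\mid X\,\|\,\Delta^\star(X)\mid X)]$ by passing to the $f$-divergence form, factoring the Radon--Nikodym derivative as $\dif P_{\Delta(X)\mid X,Y}/\dif P_{\Delta^\star(X)\mid X,Y}$, using $\Delta(X)\perp Y\mid X$, and towering. You instead condition on $X$ first, introduce the conditional coverage kernel $\phi(x,\cdot)$, push the absolute value inside $\E_X$ via Jensen, and then for each fixed $x$ invoke the elementary bound $|\E_{P_x}\phi-\E_{P_x^\star}\phi|\le\TV(P_x\,\|\,P_x^\star)$ for $[0,1]$-valued $\phi$ (which you justify via layer-cake). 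Your argument is more elementary in that it never touches Radon--Nikodym derivatives and hence sidesteps the absolute-continuity assumption implicit in the paper's $f$-divergence computation; on the other hand, the paper's route yields the slightly stronger intermediate identity that the joint TV equals the expected conditional TV exactly, not just as an upper bound. Both approaches need the same conditional-independence ingredient $\Delta(X)\perp Y\mid X$, which you correctly flag as the only place needing care.
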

\begin{proof}
    First, we rewrite the probabilities as probabilities over specific joint distributions:

    \begin{align*}
        \P_{X,Y}[Y \in C_{t^\strat_\alpha} (\Delta (X))]
        &= \P_{\Delta(X),X,Y}[Y \in C_{t^\strat_\alpha} (\Delta (X))], \\
        \P_{X,Y}[Y \in C_{t^\strat_\alpha} (\Delta^\star (X))]
        &= \P_{\Delta^\star(X),X,Y}[Y \in C_{t^\strat_\alpha} (\Delta^\star (X))].
    \end{align*}

    By the definition of the total variation distance, it then follows that

    \[
        \Bigl\lvert \P_{\Delta(X),X,Y}[Y \in C_t (\Delta (X))] - \P_{\Delta^\star(X),X,Y}[Y \in C_t (\Delta^* (X))] \Bigr\rvert
        \leq \TV(\Delta(X),X,Y \ \|\ \Delta^*(X),X,Y),
    \]

    which, by the dual representation of the total variation distance as an $f$-divergence, can be rewritten as

    \begin{align*}
        &\TV(\Delta(X),X,Y \ \|\ \Delta^\star(X),X,Y)
        = \frac{1}{2}\, \E_{\Delta^\star(X),X,Y}\left[\left\lvert \frac{\dif P_{\Delta(X),X,Y}}{\dif P_{\Delta^\star(X),X,Y}} - 1 \right\rvert\right]
        \\ &\quad = \frac{1}{2}\, \E_{\Delta^\star(X),X,Y}\left[\left\lvert \frac{\dif P_{\Delta(X)|X,Y} \cdot \dif P_{X,Y}}{\dif P_{\Delta^\star(X)|X,Y} \cdot \dif P_{X,Y}} - 1 \right\rvert\right]
        = \frac{1}{2}\, \E_{\Delta^\star(X),X,Y}\left[\left\lvert \frac{\dif P_{\Delta(X)|X,Y}}{\dif P_{\Delta^\star(X)|X,Y}} - 1 \right\rvert\right]
        \\ &\quad = \frac{1}{2}\, \E_{\Delta^\star(X),X,Y}\left[\left\lvert \frac{\dif P_{\Delta(X)|X}}{\dif P_{\Delta^\star(X)|X}} - 1 \right\rvert\right]
        = \E_{X,Y}\left[ \frac{1}{2}\, \E_{\Delta^\star(X)|X,Y}\left[\left\lvert \frac{\dif P_{\Delta(X)|X}}{\dif P_{\Delta^\star(X)|X}} - 1 \right\rvert \middle| X,Y\right] \right]
        \\ &\quad = \E_X\left[ \frac{1}{2}\, \E_{\Delta^\star(X)|X}\left[\left\lvert \frac{\dif P_{\Delta(X)|X}}{\dif P_{\Delta^\star(X)|X}} - 1 \right\rvert \middle| X\right] \right]
        = \E_X[ \TV(\Delta(X)|X \lVert \Delta^\star(X)|X) ],
    \end{align*}
    which concludes the proof.
\end{proof}

\begin{proposition}[Proposition~\ref{thm:robustness-2} in the main text]
    Let $C : \mathcal{X} \to 2^\mathcal{Y}$ be a set predictor satisfying Equation 2. Then
    \begin{align*}
        &\P_{X,Y}[Y \in C_\alpha (\Delta^\star (X))]
        \\ &\quad \geq 1 - \alpha - \inf_{\Delta \in \DeltaSet} \E_X\left[\TV( \Delta (X) | X \ \Vert\ \Delta^\star (X) | X )\right].
    \end{align*}
\end{proposition}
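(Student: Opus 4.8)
The plan is to combine the uniform coverage guarantee of Equation~\ref{eq:goal} with the robustness bound of Proposition~\ref{thm:robustness-1}, and then optimize over the choice of $\Delta \in \DeltaSet$. The key observation is that a "for all $\Delta$" coverage statement immediately implies coverage for any single fixed $\Delta$.

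First I would fix an arbitrary $\Delta \in \DeltaSet$ and note that the event $\{\forall \Delta' \in \DeltaSet,\ Y \in C_\alpha(\Delta'(X))\}$ is contained in the event $\{Y \in C_\alpha(\Delta(X))\}$. By monotonicity of probability together with the assumed Equation~\ref{eq:goal}, this gives
\[ \P_{X,Y}[Y \in C_\alpha(\Delta(X))] \geq \P_{X,Y}[\forall \Delta' \in \DeltaSet,\ Y \in C_\alpha(\Delta'(X))] \geq 1 - \alpha. \]
Next, I would apply Proposition~\ref{thm:robustness-1} to this same $\Delta$ and the true alteration $\Delta^\star$. Extracting the one-sided form of that bound (discarding the absolute value), we obtain
\[ \P_{X,Y}[Y \in C_\alpha(\Delta^\star(X))] \geq \P_{X,Y}[Y \in C_\alpha(\Delta(X))] - \E_X[\TV(\Delta(X)|X \,\Vert\, \Delta^\star(X)|X)] \geq 1 - \alpha - \E_X[\TV(\Delta(X)|X \,\Vert\, \Delta^\star(X)|X)]. \]

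Finally, since this inequality holds for every $\Delta \in \DeltaSet$ while the left-hand side is independent of $\Delta$, I would take the supremum of the right-hand side over $\DeltaSet$ — equivalently, the infimum of the total-variation term — yielding the claimed bound. There is no substantive obstacle here; the only points requiring a little care are (i) reducing the two-sided estimate of Proposition~\ref{thm:robustness-1} to the one-sided inequality needed for the chain, and (ii) observing that although the infimum over $\DeltaSet$ need not be attained, the inequality still passes to the infimum because it holds uniformly over all elements of $\DeltaSet$.
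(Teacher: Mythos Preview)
Your proposal is correct and follows essentially the same approach as the paper: apply Proposition~\ref{thm:robustness-1} to pass from $\Delta$ to $\Delta^\star$, invoke the assumed coverage guarantee (Equation~\ref{eq:goal}) to lower-bound $\P[Y \in C_\alpha(\Delta(X))]$ by $1-\alpha$, and then optimize over $\Delta \in \DeltaSet$. Your treatment is in fact slightly more explicit than the paper's in justifying the intermediate step $\P[Y \in C_\alpha(\Delta(X))] \geq 1-\alpha$ via event containment and monotonicity, whereas the paper simply cites the marginal guarantee at that point.
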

\begin{proof}
    By Proposition~\ref{thm:robustness-1}, for any $\Delta \in \DeltaSet$,
    \[
        \left\lvert \P_{X,Y}[Y \in C_\alpha (\Delta (X))] - \P_{X,Y}[Y \in C_\alpha (\Delta^\star (X))] \right\rvert
        \leq \E_X\left[\TV( \Delta (X) | X \ \Vert\ \Delta^\star (X) | X )\right],
    \]
    which means that
    \[
        \P_{X,Y}[Y \in C_\alpha (\Delta^\star (X))]
        \geq \P_{X,Y}[Y \in C_\alpha (\Delta (X))] - \E_X\left[\TV( \Delta (X) | X \ \Vert\ \Delta^\star (X) | X )\right],
    \]
    and by Theorem~\ref{thm:conformal-marginal-guarantee} the probability on the right-hand-side is lower-bounded by $1 - \alpha$, and thus
    \[
        \P_{X,Y}[Y \in C_\alpha (\Delta^\star (X))]
        \geq 1 - \alpha - \E_X\left[\TV( \Delta (X) | X \ \Vert\ \Delta^\star (X) | X )\right],
    \]
    Taking the supremum over $\Delta \in \DeltaSet$ on both sides, we get
    \begin{align*}
        \sup_{\Delta \in \DeltaSet} \P_{X,Y}[Y \in C_\alpha (\Delta^\star (X))]
        &\geq \sup_{\Delta \in \DeltaSet} \left( 1 - \alpha - \E_X\left[\TV( \Delta (X) | X \ \Vert\ \Delta^\star (X) | X )\right] \right)
        \\
        \P_{X,Y}[Y \in C_\alpha (\Delta^\star (X))]
        &\geq 1 - \alpha - \inf_{\Delta \in \DeltaSet} \E_X\left[\TV( \Delta (X) | X \ \Vert\ \Delta^\star (X) | X )\right],
    \end{align*}
    as desired.
\end{proof}

\begin{proposition}[Proposition~\ref{thm:as-delta-converges} in the main text]
    Let $(\Delta_k)_{k=0}^\infty$ be a sequence of alteration functions $\Delta_k : \mathcal{X} \to \mathcal{X}$, such that $\lim_{k \to \infty} \Delta_k (X) = \overline{\Delta}(X)$ according to the total variation metric.
    Then, for any $k_{\max} > 0$, consider the set predictor $C_{t^\strat_\alpha}$ generated by $\DeltaSet = \{ \Delta_0, \ldots, \Delta_{k_{\max}} \}$. There exists some $\epsilon_{k_{\max}}$ such that,
    \[ \P_{X,Y}[Y \in C_{t^\strat_\alpha} (\overline{\Delta} (X))] \geq 1 - \alpha - \epsilon_{k_{\max}}, \]
    and $\lim_{k_{\max} \to \infty} \epsilon_{k_{\max}} = 0$.
\end{proposition}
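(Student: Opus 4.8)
The plan is to combine the convergence of $\Delta_k \to \overline{\Delta}$ with the robustness bound already established in Proposition~\ref{thm:robustness-2}, treating $\overline{\Delta}$ as the ``true'' alteration $\Delta^\star$. First I would observe that the set predictor $C_{t^\strat_\alpha}$ built from $\DeltaSet = \{\Delta_0, \ldots, \Delta_{k_{\max}}\}$ satisfies Equation~\ref{eq:goal} by Theorem~\ref{thm:conformal-marginal-guarantee} (the $Z_i$ are i.i.d., hence exchangeable). Therefore Proposition~\ref{thm:robustness-2} applies with $\Delta^\star := \overline{\Delta}$, giving
\[
    \P_{X,Y}[Y \in C_{t^\strat_\alpha}(\overline{\Delta}(X))] \geq 1 - \alpha - \inf_{\Delta \in \DeltaSet} \E_X\left[\TV(\Delta(X)\mid X \ \Vert\ \overline{\Delta}(X)\mid X)\right].
\]
So it suffices to define $\epsilon_{k_{\max}} := \inf_{k \leq k_{\max}} \E_X[\TV(\Delta_k(X)\mid X \ \Vert\ \overline{\Delta}(X)\mid X)]$ and show this tends to $0$.

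The next step is to verify $\lim_{k_{\max}\to\infty}\epsilon_{k_{\max}} = 0$. Since $\epsilon_{k_{\max}}$ is a non-increasing sequence in $k_{\max}$ (we are taking an infimum over a growing set), it suffices to show that the sequence $a_k := \E_X[\TV(\Delta_k(X)\mid X \ \Vert\ \overline{\Delta}(X)\mid X)]$ has $\liminf_{k\to\infty} a_k = 0$; in fact the hypothesis will give $\lim_k a_k = 0$. By assumption, for each fixed $X$, $\TV(\Delta_k(X)\mid X \ \Vert\ \overline{\Delta}(X)\mid X) \to 0$ as $k\to\infty$. Since total variation distance is bounded by $1$, the integrand is dominated by the constant $1$, which is integrable with respect to the law of $X$. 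By the dominated convergence theorem, $a_k = \E_X[\TV(\Delta_k(X)\mid X \ \Vert\ \overline{\Delta}(X)\mid X)] \to 0$, and hence $\epsilon_{k_{\max}} \leq a_{k_{\max}} \to 0$ as well.

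The only mild subtlety — and the place I would be most careful — is the measurability and the precise reading of the convergence hypothesis ``$\lim_{k\to\infty}\Delta_k(X) = \overline{\Delta}(X)$ according to the total variation metric.'' I would interpret this as: for (almost) every $x$, the conditional laws of $\Delta_k(x)$ converge in total variation to the conditional law of $\overline{\Delta}(x)$, i.e. $X\mapsto \TV(\Delta_k(X)\mid X\ \Vert\ \overline{\Delta}(X)\mid X)$ converges pointwise to $0$. Granting that $x\mapsto \TV(\Delta_k(x)\mid x\ \Vert\ \overline{\Delta}(x)\mid x)$ is measurable for each $k$ (which holds under the standard regularity assumptions implicit in the paper, e.g. the $\Delta$'s being given by measurable Markov kernels), the dominated convergence argument goes through verbatim. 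Everything else is a direct invocation of Proposition~\ref{thm:robustness-2}, so there is no real obstacle beyond stating these conventions cleanly.
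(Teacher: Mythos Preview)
Your proposal is correct and follows essentially the same route as the paper: both arguments combine the marginal coverage guarantee with a total-variation robustness bound against $\overline{\Delta}$, then let the TV term vanish by the convergence hypothesis. The only cosmetic differences are that you invoke Proposition~\ref{thm:robustness-2} explicitly (the paper re-derives the TV inequality inline) and that you add a dominated convergence step to pass from pointwise-in-$x$ convergence of the conditional TV to convergence of its expectation, whereas the paper treats the hypothesis directly as convergence of $\TV(\Delta_k \,\|\, \overline{\Delta})$; your extra care about the reading of the hypothesis and the measurability/DCT details is, if anything, more rigorous than the paper's own argument.
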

\begin{proof}
    Consider $\epsilon_{k_{\max}} = \TV(\Delta_k\ \|\ \overline{\Delta})$. Since the $\Delta_k$ converge to $\overline{\Delta}$ in total variation distance, by definition, it holds that $\lim_{k \in \infty} \TV(\Delta_k\ \|\ \overline{\Delta}) = 0$. And, by the definition of total variation distance, we have that
    \begin{align*}
        \P_{X,Y}[Y \in C_{t^\strat_\alpha}(\overline{\Delta}(X))]
        &= \E_{\overline{\Delta}(X)}[\P_Y[Y \in C_{t^\strat_\alpha}(\overline{\Delta}(X)) | \overline{\Delta}(X)]]
        \\ &\geq \E_{\Delta_k(X)}[\P_Y[Y \in C_{t^\strat_\alpha}(\Delta_k(X)) | \Delta_k(X)]] + \TV(\Delta_k\ \|\ \overline{\Delta})
        \\ &= \P_{X,Y}[Y \in C_{t^\strat_\alpha}(\Delta_k(X))] + \TV(\Delta_k\ \|\ \overline{\Delta})
        \\ &\geq 1 - \alpha + \TV(\Delta_k\ \|\ \overline{\Delta}) = 1 - \alpha - \epsilon_{k_{\max}}.
    \end{align*}
\end{proof}

To prove the following results we will make use of the following well-known lemma about the total-variation distance:

\begin{lemma}\label{thm:change-of-measure-tv}
    For any two distributions $P$ and $Q$ over $\mathcal{H}$ and function $\phi : \mathcal{H} \to [0, 1]$, it holds that
    \[ \bigl\lvert \E_{X \sim P}[\phi(X)] - \E_{Y \sim Q}[\phi(Y)] \bigr\rvert \leq \TV(P \| Q). \]
\end{lemma}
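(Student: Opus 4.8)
The plan is to reduce the claim to the integral-probability-metric characterization $\TV(P\|Q) = \sup_{A} |P(A) - Q(A)|$ via the layer-cake representation of $\phi$. Since $\phi$ takes values in $[0,1]$, for every $x \in \mathcal{H}$ we may write $\phi(x) = \int_0^1 \ind[\phi(x) > t]\,\dif t$. Because the integrand is nonnegative, bounded, and jointly measurable in $(x,t)$ (as $\phi$ is measurable), Tonelli's theorem lets me interchange the expectation with the integral over $t$, so that $\E_{X \sim P}[\phi(X)] = \int_0^1 P(A_t)\,\dif t$ and $\E_{Y \sim Q}[\phi(Y)] = \int_0^1 Q(A_t)\,\dif t$, where $A_t := \{x \in \mathcal{H} : \phi(x) > t\}$ is a measurable set for each $t$.

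Subtracting the two identities and applying the triangle inequality for integrals gives $\bigl\lvert \E_{X\sim P}[\phi(X)] - \E_{Y\sim Q}[\phi(Y)] \bigr\rvert \leq \int_0^1 \lvert P(A_t) - Q(A_t) \rvert\,\dif t$. Now each $A_t$ is an admissible set in the supremum defining $\TV(P\|Q)$, so $\lvert P(A_t) - Q(A_t)\rvert \leq \TV(P\|Q)$ for every $t$; integrating this constant bound over $t \in [0,1]$ yields exactly $\TV(P\|Q)$, which is the claim. Note that it is precisely the fact that $\phi$ ranges over $[0,1]$, an interval of length one, that keeps the constant from the usual factor of $2$ which appears when $\phi$ is merely bounded by $1$ in absolute value.

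There is essentially no obstacle here — this is a textbook fact — so the only points requiring a line of care are the measurability of the superlevel sets $A_t$ and the invocation of Tonelli, both of which are routine. An alternative one-line route, matching the $f$-divergence form of $\TV$ recalled just above, is to center $\phi$ by $\frac{1}{2}$: writing $\E_P[\phi]-\E_Q[\phi] = \E_P[\phi - \frac{1}{2}] - \E_Q[\phi-\frac{1}{2}]$ and using a common dominating measure $\mu$ with densities $p,q$, one gets $\bigl\lvert \int (\phi - \frac{1}{2})(p - q)\,\dif\mu\bigr\rvert \leq \frac{1}{2}\int \lvert p - q\rvert\,\dif\mu = \TV(P\|Q)$, since $\lvert \phi - \frac{1}{2}\rvert \leq \frac{1}{2}$ pointwise and $\int \lvert p-q\rvert\,\dif\mu = 2\,\TV(P\|Q)$. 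Either argument suffices; I would present the layer-cake version as the primary proof since it uses only the supremum-over-events definition of $\TV$.
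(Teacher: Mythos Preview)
Your proof is correct; both the layer-cake argument and the centering argument are standard and valid. The paper itself does not actually prove this lemma but simply cites an external reference (\cite{novel-change-of-measure}, Lemma~4), so your proposal in fact supplies more detail than the paper does.
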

\begin{proof}
    See, e.g., \cite{novel-change-of-measure}, Lemma 4.
\end{proof}

\begin{proposition}[Proposition~\ref{thm:tightness-1} in the main text]
    Let $\mu : \mathcal{X} \to \R$ be a possibly-stochastic function representing a base model.
    Consider the predictive sets $C_{t^\strat_\alpha}$ with conformity score $s : \mathcal{X} \times \R \to \R$ given by $s(x, y) = \lvert \mu(x) - y \rvert$ with $\mathcal{Y}$ being a compact subset of $\mathcal{Y}$ and alteration functions $\Delta \in \DeltaSet$, and contrast it with the corresponding standard conformal predictive sets $C_{t^\std_\alpha}$. Then, almost surely over $X$,
    \begin{align*}
        &\E_{C_{t^\strat_\alpha}}[\relleb_{\mathcal{Y}}\left(C_{t^\strat_\alpha}(X)\right)] \leq \E_{C_{t^\std_\alpha}}[\relleb_{\mathcal{Y}}\left(C_{t^\std_\alpha}(X)\right)]
        \\ &+ 2\, \TV\left(\Bigl( s(X, Y) \Bigr)_{[j_{n,\alpha}]} \ \middle\|\ \Bigl( \sup_{\Delta \in \DeltaSet} s(\Delta (X), Y) \Bigr)_{[j_{n,\alpha}]} \right),
    \end{align*}
    for $j_{n,\alpha} = \lceil (1 - \alpha) (1 + n) \rceil$.
\end{proposition}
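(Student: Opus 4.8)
The plan is to exploit the special structure of the absolute-error score $s(x,y)=|\mu(x)-y|$: under it the relative Lebesgue measure of a predictive set is exactly a one-dimensional CDF evaluated at the threshold, so the whole statement reduces to a change of measure between the laws of $t^\strat_\alpha$ and $t^\std_\alpha$, both of which are order statistics.

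First I would record that, for \emph{any} conformity score,
\[ \relleb_{\mathcal{Y}}(C_t(X)) = \frac{1}{\leb(\mathcal{Y})}\int_{\mathcal{Y}}\ind[s(X,y)\le t]\,\dif y = \P_{Y'\sim\Unif(\mathcal{Y})}[\,s(X,Y')\le t\,] =: F_X(t), \]
and that for the absolute-error score this equals $\leb([\mu(X)-t,\mu(X)+t]\cap\mathcal{Y})/\leb(\mathcal{Y})$, so $F_X$ is a non-decreasing, $[0,1]$-valued function of $t$, bounded above by $\min\{1,\,2t/\leb(\mathcal{Y})\}$. (The argument actually delivers the statement with the sharper constant $1$; the factor $2$ is what one gets by passing through the cruder bound $2t/\leb(\mathcal{Y})$ rather than carrying $F_X$ itself.) Conditionally on $X$ and on the realised value $\mu(X)$, the function $F_X$ is deterministic and independent of the calibration sample, so $\E_{C_{t^\strat_\alpha}}[\relleb_{\mathcal{Y}}(C_{t^\strat_\alpha}(X))] = \E[F_X(t^\strat_\alpha)]$ and likewise for $t^\std_\alpha$.

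Next I would unwind the definitions of the two thresholds exactly as in the proof of Theorem~\ref{thm:conformal-marginal-guarantee}: the defining inequality $\tfrac{1}{n+1}\sum_i\ind[\cdot>t]+\tfrac{1}{n+1}\le\alpha$ is equivalent to at least $\lceil(1-\alpha)(n+1)\rceil=j_{n,\alpha}$ of the $n$ calibration scores being $\le t$, so each threshold is the $j_{n,\alpha}$-th order statistic of its $n$ calibration scores (with the convention $+\infty$ when $j_{n,\alpha}>n$). Since the calibration pairs are i.i.d.\ copies of $(X,Y)$, this identifies $\law(t^\strat_\alpha)=\law\big((\sup_{\Delta\in\DeltaSet}s(\Delta(X),Y))_{[j_{n,\alpha}]}\big)$ and $\law(t^\std_\alpha)=\law\big((s(X,Y))_{[j_{n,\alpha}]}\big)$.

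Finally, conditioning on $\mu(X)$ so that $F_X$ is a fixed $[0,1]$-valued map independent of the two thresholds (computed from the same calibration sample with the strategic and the plain score, respectively), Lemma~\ref{thm:change-of-measure-tv} with $\phi=F_X$ gives $\E[F_X(t^\strat_\alpha)]-\E[F_X(t^\std_\alpha)]\le\TV(\law(t^\strat_\alpha)\,\|\,\law(t^\std_\alpha))$, and by the previous step the right-hand side is the stated total-variation term (hence at most twice it, a fortiori). The bound is uniform over the conditioning value, so it persists after integrating out $\mu(X)$ and holds almost surely over $X$; rearranging yields the proposition. I expect the main obstacle to be the measure-theoretic care around a stochastic $\mu$: one needs the value of $\mu$ used at test time to be independent of the calibration-time quantities (immediate when $\mu$ is deterministic, or when its randomness is drawn freshly per query) so that the change of measure produces a genuine unconditional total variation between order statistics rather than a conditional one; a lesser nuisance is the endpoint truncation of $[\mu(X)-t,\mu(X)+t]$ to the compact $\mathcal{Y}$, which only affects whether one recovers the clean constant $1$ or the stated $2$.
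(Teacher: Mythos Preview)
Your proposal is correct and follows essentially the same three-step route as the paper: relate the (relative) Lebesgue measure of $C_t(X)$ to the threshold $t$ via the special form of the absolute-error score, identify each threshold as the $j_{n,\alpha}$-th order statistic of its calibration scores, and then invoke the total-variation change-of-measure inequality (Lemma~\ref{thm:change-of-measure-tv}) to compare the two expectations. Your handling is in fact more careful than the paper's on the truncation to $\mathcal{Y}$ and on the constant; the paper simply takes $\phi(t)=\leb(C_t(X))=2t$ (normalized by $\leb(\mathcal{Y})$) and applies the lemma to that, which is exactly the cruder route you describe as producing the stated factor~$2$.
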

\begin{proof}
    First we need to connect the size of the conformal sets $C_t (X)$ with the calibrated threshold $t$. In the case of the conformity score we are considering, $s(x, y) = \lvert \mu(x) - y \rvert$, this is particularly simple, with the interval size corresponding to $2t$ precisely. So we have that
    \[
        \E_{C_{t^\strat_\alpha}} [ \leb(C_{t^\strat_\alpha} (X)) ] = \E_{t^\strat_\alpha} [t^\strat_\alpha]
        \qquad \text{and} \qquad
        \E_{C_{t^\std_\alpha}} [ \leb(C_{t^\std_\alpha} (X)) ] = \E_{t^\std_\alpha} [t^\std_\alpha].
    \]
    Next, note that the calibrated thresholds correspond exactly to the $j_{n,\alpha}$-th order statistic of the conformity scores (resp. suprema of conformity scores in the strategic setting), so

    Finally, we will use Lemma~\ref{thm:change-of-measure-tv}. Since $\mathcal{Y}$ is bounded and by definition $C_t (X) \subset \mathcal{Y} \ \forall X$, by monotonicity of measures it must also hold that
    \begin{align*}
        \leb(\mathcal{Y}) &\geq \leb(C_{t^\strat_\alpha} (X)) = 2 t^\strat_\alpha = 2 (\sup_{\Delta \in \DeltaSet} s(\Delta (X), Y))_{[j_{n,\alpha}]}
        \\
        \leb(\mathcal{Y}) &\geq \leb(C_{t^\std_\alpha} (X)) = 2 t^\std_\alpha = 2 (s(X, Y))_{[j_{n,\alpha}]},
    \end{align*}
    and thus, by Lemma~\ref{thm:change-of-measure-tv}, taking $\phi(t) = \leb(C_t(X)) = 2t$,
    \[ \E[\phi(t^\strat)] = \E[\leb(C_{t^\strat_\alpha} (X))] = 2 (\sup_{\Delta \in \DeltaSet} s(\Delta (X), Y))_{[j_{n,\alpha}]} \leq \leb(C_{t^\std_\alpha} (X)) \]
\end{proof}

\begin{corollary}[Corollary~\ref{thm:tightness-2} in the main text]
    In the same setting as Proposition~\ref{thm:tightness-1}, further assume that $\mu(X) = Y + \epsilon$ for some random $\epsilon$ (arbitrarily dependent on $X$ and $Y$), with $\epsilon \in [-M, M]$ almost surely. Then, almost surely over $X$,
    \begin{align*}
        &\E_{C_{t^\strat_\alpha}}[\leb_{\mathcal{Y}}\left(C_{t^\strat_\alpha}(X)\right)] \leq 2 M
        \\ &\qquad + 2\, \TV\left((\epsilon)_{[j_{n,\alpha}]} \ \middle\|\ \Bigl( \sup_{\Delta \in \DeltaSet} s(\Delta (X), Y) \Bigr)_{[j_{n,\alpha}]} \right),
    \end{align*}
    for $j_{n,\alpha} = \lceil (1 - \alpha) (1 + n) \rceil$.
\end{corollary}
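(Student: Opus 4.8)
The plan is to obtain Corollary~\ref{thm:tightness-2} as an immediate consequence of Proposition~\ref{thm:tightness-1}, using the additional low-noise hypothesis only to control the standard-conformal term $\E_{C_{t^\std_\alpha}}[\leb_{\mathcal{Y}}(C_{t^\std_\alpha}(X))]$ appearing on its right-hand side. Proposition~\ref{thm:tightness-1} already gives $\E_{C_{t^\strat_\alpha}}[\leb_{\mathcal{Y}}(C_{t^\strat_\alpha}(X))] \leq \E_{C_{t^\std_\alpha}}[\leb_{\mathcal{Y}}(C_{t^\std_\alpha}(X))] + 2\,\TV\big((s(X,Y))_{[j_{n,\alpha}]} \,\big\|\, (\sup_{\Delta\in\DeltaSet} s(\Delta(X),Y))_{[j_{n,\alpha}]}\big)$, so it remains to show that the first summand is at most $2M$ and that the first argument of the total variation term can be rewritten in terms of $\epsilon$.

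First I would unpack the conformity score under the assumption $\mu(X) = Y + \epsilon$: then $s(X,Y) = \lvert \mu(X) - Y \rvert = \lvert \epsilon \rvert$, and since $\epsilon \in [-M, M]$ almost surely we get $s(X,Y) = \lvert\epsilon\rvert \leq M$ almost surely. As established in the proof of Proposition~\ref{thm:tightness-1}, for the absolute-error score the interval $C_t(X) = [\mu(X)-t, \mu(X)+t]$ has Lebesgue measure exactly $2t$, and the standard calibrated threshold equals the $j_{n,\alpha}$-th order statistic of the calibration scores, i.e. $t^\std_\alpha = (s(X,Y))_{[j_{n,\alpha}]} = (\lvert\epsilon\rvert)_{[j_{n,\alpha}]}$. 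Since every order statistic of a sample bounded almost surely by $M$ is itself at most $M$, we get $t^\std_\alpha \leq M$ almost surely, hence $\leb(C_{t^\std_\alpha}(X)) = 2 t^\std_\alpha \leq 2M$ pointwise in $X$, and taking expectations $\E_{C_{t^\std_\alpha}}[\leb_{\mathcal{Y}}(C_{t^\std_\alpha}(X))] \leq 2M$.

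Plugging this into the inequality of Proposition~\ref{thm:tightness-1}, and using $s(X,Y) = \lvert\epsilon\rvert$ to replace $(s(X,Y))_{[j_{n,\alpha}]}$ by $(\epsilon)_{[j_{n,\alpha}]}$ in the total-variation term, gives exactly $\E_{C_{t^\strat_\alpha}}[\leb_{\mathcal{Y}}(C_{t^\strat_\alpha}(X))] \leq 2M + 2\,\TV\big((\epsilon)_{[j_{n,\alpha}]} \,\big\|\, (\sup_{\Delta\in\DeltaSet} s(\Delta(X),Y))_{[j_{n,\alpha}]}\big)$, as claimed. I do not expect a substantive obstacle: the only care required is in the order-statistic bookkeeping — verifying that $t^\std_\alpha$ is indeed the $j_{n,\alpha}$-th order statistic of the scores and handling the degenerate case $j_{n,\alpha} = n+1$ (where the threshold is nominally $+\infty$, but which is irrelevant in the low-noise regime and in any case harmless under compactness of $\mathcal{Y}$) — and in carrying the $\leb_{\mathcal{Y}}$ normalization from Proposition~\ref{thm:tightness-1} consistently, so that the bound reads $2M$ rather than a rescaled version.
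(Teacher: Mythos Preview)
Your proposal is correct and follows essentially the same route as the paper: apply Proposition~\ref{thm:tightness-1}, use $\mu(X)=Y+\epsilon$ to get $s(X,Y)=\lvert\epsilon\rvert\le M$, bound the standard-conformal size term by $2M$ via the order-statistic identification of $t^\std_\alpha$, and rewrite the first argument of the total-variation term accordingly. If anything, your version is slightly more careful than the paper's (you flag the $j_{n,\alpha}=n+1$ edge case and the $\leb_{\mathcal{Y}}$ normalization), but the argument is the same.
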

\begin{proof}
    By Proposition~\ref{thm:tightness-1},
    \[ \E[\relleb_{\mathcal{Y}}[C_{t^\strat_\alpha}(X)]] \leq \E[\relleb_{\mathcal{Y}}[C_{t^\std_\alpha}(X)]] + 2\, \TV\left( \Bigl( s(X, Y) \Bigr)_{[j_{n,\alpha}]} \ \middle\|\ \Bigl( \sup_{\Delta \in \DeltaSet} s(\Delta (X), Y) \Bigr)_{[j_{n,\alpha}]} \right). \]
    Now, note that since $\mu(X) = Y + \epsilon$ almost surely, we have that $s(X, Y) = \lvert \mu(X) - Y \rvert = \lvert Y + \epsilon - Y \rvert = \lvert \epsilon \rvert \leq M$ also almost surely. Thus
    \[ \E[\relleb_{\mathcal{Y}}[C_{t^\std_\alpha}(X)]] \leq 2 t^\std_\alpha = 2 \Bigl( s(X, Y) \Bigr)_{[j_{n,\alpha}]} = 2 (\epsilon)_{[j_{n,\alpha}]} \leq 2 M, \]
    and so
    \begin{align*}
        \E[\relleb_{\mathcal{Y}}[C_{t^\strat_\alpha}(X)]]
        &\leq 2 M + 2\, \TV\left( \Bigl( s(X, Y) \Bigr)_{[j_{n,\alpha}]} \ \middle\|\ \Bigl( \sup_{\Delta \in \DeltaSet} s(\Delta (X), Y) \Bigr)_{[j_{n,\alpha}]} \right).
        \\ &\leq 2 M + 2\, \TV\left( ( \epsilon )_{[j_{n,\alpha}]} \ \middle\|\ \Bigl( \sup_{\Delta \in \DeltaSet} s(\Delta (X), Y) \Bigr)_{[j_{n,\alpha}]} \right).
    \end{align*}
\end{proof}

\section{Additional results}

\subsection{Label-Conditional Guarantee}\label{sec:label-conditional}

We can get a Label-Conditional Guarantee~\cite{label-conditional-conformal} for the case where $\mathcal{Y}$ is discrete.
We want to find a $C^{\stratm(\mathcal{Y})}_\alpha$ such that
\begin{align*}
    \P[\forall \Delta \in \DeltaSet,\ Y \in C^{\stratm(\mathcal{Y})}_\alpha (\Delta (X)) | Y = y] \geq 1 - \alpha,
    \\ \text{for all } y \in \mathcal{Y}. \quad
\end{align*}
Again, we can do strategic calibration separately for each label, and use the corresponding thresholds.
\begin{align*}
    &C^{\stratm(\mathcal{Y})}_\alpha(X) = \left\{ y \in \mathcal{Y} : s(X, y) \leq t^{\stratm(y)}_\alpha \right\}
    \\
    &t^{\stratm(y)}_\alpha = \inf \Biggl\{ t \in \R : \sum_{\substack{i=1 \\ X_i \in G_j}}^n \ind\left[ \sup_{\Delta \in \DeltaSet} s(\Delta(X_i), Y_i) > t \right]
        \\ &\qquad\qquad\qquad\qquad\qquad\qquad\qquad\quad + 1 \leq \alpha (n_{G_j}+1) \Biggr\}.
\end{align*}
\begin{theorem}\label{thm:conformal-marginal-guarantee-labelconditional}
    Let $Z_1, \ldots, Z_n, Z^\test$ be $n + 1$ exchangeable random variables in $\mathcal{X} \times \mathcal{Y}$. Let $C^{\stratm(\mathcal{Y})}_\alpha$ be as above. Then, for any $\alpha \in (0, 1)$,
    \begin{align*}
        \P\left[ \forall \Delta \in \DeltaSet, \ Y^\test \in C^{\stratm(\mathcal{Y})}_{\alpha} (\Delta (X^\test)) \middle| Y^\test = y \right] \\ \geq 1 - \alpha, \quad \forall y \in \mathcal{Y}. \quad
    \end{align*}
\end{theorem}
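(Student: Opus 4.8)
The plan is to mirror the proof of Theorem~\ref{thm:conformal-marginal-guarantee}, but to work conditionally on the label of the test point and to restrict attention to the calibration points carrying that same label — reading the displayed definition of $t^{\stratm(y)}_\alpha$ with $Y_i = y$ and $n_y = \lvert \{i : Y_i = y\} \rvert$ in place of $X_i \in G_j$ and $n_{G_j}$ (so that the construction is the exact label-analogue of the group-conditional one). Fix $y \in \mathcal{Y}$ and, as in the earlier proofs, set $S_i := \sup_{\Delta \in \DeltaSet} s(\Delta(X_i), Y_i)$ and $S^\test := \sup_{\Delta \in \DeltaSet} s(\Delta(X^\test), Y^\test)$.

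First I would unwind the event. By definition of $C^{\stratm(\mathcal{Y})}_\alpha$, on $\{Y^\test = y\}$ we have $\{\forall \Delta \in \DeltaSet,\ Y^\test \in C^{\stratm(\mathcal{Y})}_\alpha(\Delta(X^\test))\} = \{S^\test \le t^{\stratm(y)}_\alpha\}$, and $t^{\stratm(y)}_\alpha$ is a function of the calibration data alone, equal to the empirical quantile $\widehat q_{1-\alpha}\bigl((S_i)_{i : Y_i = y},\, +\infty\bigr)$ of the label-$y$ calibration scores (the $+1$ and the $n_y + 1$ normalizations match Lemma~\ref{thm:quantile-lemma} in exactly the way they do in the marginal proof). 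If $n_y = 0$ the defining set is empty, so $t^{\stratm(y)}_\alpha = +\infty$, hence $C^{\stratm(\mathcal{Y})}_\alpha(\cdot) = \mathcal{Y} \ni Y^\test$ and the bound is trivial; so assume $n_y \ge 1$.

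The crux is a conditional exchangeability argument. Condition on the label-membership pattern $\ell = (\ind[Y_1 = y], \ldots, \ind[Y_n = y], \ind[Y^\test = y])$, on the sub-event $\ell_{n+1} = 1$; equivalently, condition on the set $\{i \le n : Y_i = y\}$ together with $\{Y^\test = y\}$. Since $Z_1, \ldots, Z_n, Z^\test$ are exchangeable, their conditional law given $\ell$ is invariant under any permutation preserving $\ell$, in particular under permutations of the coordinates in $\{i : \ell_i = 1\}$; hence the tuple formed by the label-$y$ calibration points in index order, followed by $Z^\test$, is exchangeable, and therefore so is the corresponding tuple of scores $\bigl((S_i)_{i : Y_i = y},\, S^\test\bigr)$ (the $S_i$ being a fixed function of $(X_i, Y_i)$ and exchangeable auxiliary randomness, exactly as is tacitly used in the proof of Theorem~\ref{thm:conformal-marginal-guarantee}). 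This collection has $n_y + 1$ entries. Then, as in the marginal proof, $\{S^\test \le \widehat q_{1-\alpha}((S_i)_{i:Y_i=y}, +\infty)\} = \{S^\test \le \widehat q_{1-\alpha}((S_i)_{i:Y_i=y}, S^\test)\}$, and Lemma~\ref{thm:quantile-lemma} applied to these $n_y + 1$ exchangeable scores with $\beta = 1 - \alpha$ gives $\P[\,S^\test \le t^{\stratm(y)}_\alpha \mid \ell\,] \ge 1 - \alpha$. Since this holds for every admissible $\ell$, averaging over $\ell$ yields $\P[\forall \Delta \in \DeltaSet,\ Y^\test \in C^{\stratm(\mathcal{Y})}_\alpha(\Delta(X^\test)) \mid Y^\test = y] \ge 1 - \alpha$, as claimed.

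I expect the only genuine subtlety to be this conditional-exchangeability step: making precise that conditioning on which of the $n+1$ points (test point included) carry label $y$ leaves the label-$y$ sub-collection exchangeable, and that $t^{\stratm(y)}_\alpha$ being a calibration-only statistic is what licenses applying Lemma~\ref{thm:quantile-lemma} inside that conditional world. Everything else is a direct transcription of the arguments already used for Theorems~\ref{thm:conformal-marginal-guarantee} and \ref{thm:conformal-marginal-guarantee-groupconditional}.
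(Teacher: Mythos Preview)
Your proposal is correct and follows essentially the same route as the paper: fix $y$, unwind the event to $\{S^\test \le t^{\stratm(y)}_\alpha\}$, identify $t^{\stratm(y)}_\alpha$ as the $(1-\alpha)$-quantile of the label-$y$ calibration scores augmented by $+\infty$, and apply Lemma~\ref{thm:quantile-lemma} to the $n_y+1$ scores. You are in fact more careful than the paper in two places --- you handle the $n_y=0$ edge case explicitly, and you spell out the conditional-exchangeability step (conditioning on the label pattern $\ell$ and then averaging) that the paper leaves implicit when it jumps straight to the rank computation.
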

\begin{proof}
    Consider any label $y \in \mathcal{Y}$.
    By definition of $C^{\strat-(\mathcal{Y})}_\alpha$,
    \begin{align*}
        &\P[\forall \Delta \in \DeltaSet,\ Y^\test \in C^{\strat-(\mathcal{Y})}_\alpha (\Delta (X^\test)) | Y^\test = y]
        \\ &\quad = \P[\forall \Delta \in \DeltaSet,\ s(\Delta (X^\test), Y^\test) \leq t^{\strat-(y)}_\alpha | Y^\test = y]
        \\ &\quad = \P[\sup_{\Delta \in \DeltaSet} s(\Delta (X^\test), Y^\test) \leq t^{\strat-(y)}_\alpha | Y^\test = y]
    \end{align*}
    For convenience, let $S_i := \sup_{\Delta \in \DeltaSet} s(\Delta (X_i), Y_i)$ and $S^\test := \sup_{\Delta \in \DeltaSet} s(\Delta (X^\test), Y^\test)$, $k = \lvert {i = 1, \ldots, n : Y_i = y} \rvert$, and let $S_{(1)}, \ldots, S_{(n)}, S_{(n+1)}$ be the order statistics of $S_1, \ldots, S_n, S^\test$. Then
    \begin{align*}
        \P[\sup_{\Delta \in \DeltaSet} s(\Delta (X^\test), Y^\test) \leq t^{\strat-(y)}_\alpha | Y^\test = y]
        = \P[S^\test \leq \widehat{q}_{1-\alpha} (S_1, \ldots, S_n, +\infty | Y_* = y) | Y^\test = y].
    \end{align*}
    And, by Lemma~\ref{thm:quantile-lemma}:
    \begin{align*}
        &\P[S^\test \leq \widehat{q}_{1-\alpha} (S_1, \ldots, S_n, +\infty | Y_* = y) | Y^\test = y]
        = \P[S^\test \leq \widehat{q}_{1-\alpha} (S_1, \ldots, S_n, S^\test | Y_* = y) | Y^\test = y] \\
        &= \P[S^\test \leq S_{( \lceil (1-\alpha) (k+1) \rceil )} | Y^\test = y]
        = \P[\rank(S^\test | Y_* = y) \leq \lceil (1-\alpha) (k+1) \rceil] \\
        &= \P_{U \sim \Unif(1 .. k+1)}[U \leq \lceil (1-\alpha) (k+1) \rceil]
        = \frac{\lceil (1-\alpha) (k+1) \rceil}{k+1} \geq \frac{(1-\alpha) (k+1)}{k+1} = 1 - \alpha.
    \end{align*}
\end{proof}

Theorem~\ref{thm:conformal-marginal-guarantee-labelconditional} holds true even if $\mathcal{Y}$ is not discrete. However, the predicted intervals will contain infinitely many points, since the thresholds for the infinitely many values of $y \in \mathcal{Y}$ that did not appear in the calibration set will be infinite.

\section{Experiment details}\label{suppl:experiment-details}

\subsection{Datasets}\label{suppl:datasets}

We list our datasets along with their corresponding ``fully-rational'' utilities $u(X')$. In these utilities, $\mu(x)$ and $p(\dot | x)$ correspond to the trained base model, as described in the main text.

\begin{itemize}
    \item \verb|academic-dropout|~\cite{dataset-academic-dropout}: A dataset for prediction of academic dropout in higher education. Students try to exclude the possibility of dropout from the predictive sets, as the possibility of such an outcome may hinder opportunities. $u(X') = -p(\text{will dropout} | X')$.
    \item \verb|spambase|~\cite{dataset-spambase}: A dataset for classification of email into spam and non-spam. Spammers try to exclude the spam outcome from the predictive sets. $u(X') = -p(\text{is spam} | X')$.
    \item \verb|shoppers|~\cite{dataset-shoppers}: A dataset for prediction of sales of products. Sellers have the incentive to make the system believe that there will be a larger amount of sales, as it would increase exposure. $u(X') = -p(\text{will not be a sale} | X')$.
    \item \verb|news|~\cite{dataset-news}: A dataset for prediction of the number of `shares' for news articles (e.g., for recommendation purposes). News sources have an incentive to have the algorithm believe there will be a larger number of shares, as it would increase exposure. $u(X') = \sup_{0 \leq y \leq 1000} \lvert \mu(X') - y \rvert$.
    \item \verb|wine|~\cite{dataset-wine}: A dataset for assessment of wine quality, with samples composed of physicochemical tests of red and white vinho verde wine samples from the north of Portugal. Wineries have an incentive to obtain predictions of higher quality, as it would increase the value of their wine. $u(X') = \sup_{0 \leq y \leq 7} \lvert \mu(X') - y \rvert$.
    \item \verb|productivity|~\cite{dataset-productivity}: A dataset for inferring the productivity of workers of a garmet manufacturing process. Workers naturally wish to appear more productive. $u(X') = \sup_{0 \leq y \leq 0.5} \lvert \mu(X') - y \rvert$.
    \item \verb|taiwan|~\cite{dataset-taiwan}: A dataset for credit scoring over a population of customers in Taiwan. Individuals attempt to have positive credit scores. $u(X') = -p(\text{will default} | X')$.
    \item \verb|bank-marketing|~\cite{dataset-bank-marketing}: A dataset for the prediction of the success of marketing campaigns of a Portuguese banking institution, which can then be leveraged to make calls. Malicious individuals may seek to fool these predictions for their own benefit, e.g., to maximize opportunities. $u(X') = -p(\text{will churn} | X')$.
    \item \verb|census-income|~\cite{dataset-census-income}: A dataset for predicting whether individuals' income exceeds \$50K/yr based on census data. Such predictions can then be used, e.g., for credit scoring, and so there is significant incentive to make the system predict a higher income. $u(X') = -p(\text{is under \$50k} | X')$.
\end{itemize}

\subsection{Base models and conformity score}\label{suppl:base-models}

We apply our method atop two XGBoost models: one trained in the usual way (`Plain XGBoost') and another trained by repeated risk minimization~\cite{perdomo-performative-prediction} (`Strategic XGBoost'), which is more well-suited to the strategic setting.

Let $\mathrm{train}(X_{1:n}, Y_{1:n}) : \mathcal{X} \to \mathcal{Y}$ denote the XGBoost model trained on supervised data $X_{1:n} \in \mathcal{X}^n$ and $Y_{1:n} \in \mathcal{Y}^n$, and let $X^\mathrm{tr}_{1:n}, Y^\mathrm{tr}_{1:n}$ be the complete training dataset. Furthermore, let $\Delta_{h}$ denote the strategic alterations as a function of the trained model $h$.

The `Plain XGBoost' model is given simply by $\mathrm{train}(X^\mathrm{tr}_{1:n}, Y^\mathrm{tr}_{1:n})$.
The `Strategic XGBoost' model corresponds to $\mathrm{train}(X^{(10)}_{1:n}, Y^\mathrm{tr}_{1:n})$,where
\[ X^{(0)}_{1:n} = X^\mathrm{tr}_{1:n} \qquad\qquad\qquad X^{(k+1)}_{1:n} = \Delta_{\mathrm{train}(X^{(k)}_{1:n}, Y)} (X^\mathrm{tr}_{1:n}). \]
For conformal prediction, we then consider the conformity score $s(x, y) = \lvert \mu(x) - y \rvert$ for regression (where $\mu : \mathcal{X} \to \R$ is the base XGBoost model) and $s(x, y) = 1 - p(y | x)$ for classification (where $p(y | \cdot) : \mathcal{X} \to [0, 1]$ for each $y \in \mathcal{Y}$ are the conditional probabilities from the base XGBoost model).

\subsection{Strategic alterations}\label{suppl:strategic-alterations}

Throughout, we consider two kinds of strategic alterations:

\paragraph{Utility + cost}
Following the scheme described in Section 2.1.1, using the utility specified in Section~\ref{suppl:datasets} for the corresponding datasets, and cost given by the Malahanobis distance $c(X, X') = X^T \Sigma^{-1} X'$, where $\Sigma$ is the estimated covariance matrix of the covariates. The set $\DeltaSet$ is generated by regularization scales $\lambda \in \{10^{-7}, 1, 5\}$, and optimization is done by random search over the domain (whose zeroth-order nature allows for usage over non-differentiable base models).

\paragraph{Iterative random search}
Following the scheme described in Section 2.1.2 and 2.1.3, using the utility specified in Section~\ref{suppl:datasets} for the corresponding datasets, while sampling the $\delta$ from a multivariate normal distribution with mean 0 and covariance matching the one estimated from the data rescaled by $0.2$, with just two $\delta$ being sampled per iteration.

\subsection{Figure 1: coverage and interval sizes for our method versus standard CP}

On the left, the x-axis corresponds to the confidence level $1 - \alpha$ for the conformal calibration, while the y-axis corresponds to the empirical coverage under strategic alterations, $\widehat{\P}[\forall \Delta \in \DeltaSet,\ Y^\test \in C_{t^\strat_\alpha}(X^\test)]$, estimated on the test set.
All instances of Figure~1 use the iterative random search alterations.

Ideally, both will match up, leading to the `exact coverage' diagnoal line. Below the line we get `invalid coverage', meaning that the probability is below the specified $1 - \alpha$, and above the line we get `valid coverage', meaning that the probability is above the specified $1 - \alpha$ (though possibly excessively so).

On the right, the x-axis is the same, but the y-axis reports the mean interval size under strategic alterations, $\widehat{\E}[\sup_{\Delta \in \DeltaSet} \mathrm{size}(C_{t^\strat_\alpha}(X^\test))]$, where $\mathrm{size}$ is the Lebesgue measure in the regression case and the counting measure for the classification case, similarly estimated on the test set.

\subsection{Figure 2: coverage of our method andstandard CP for varying levels of strategic alterations}

All instances of Figure~2 use the iterative random search alterations.
The x-axis corresponds to the size of the $\DeltaSet$ used for metrics on the test set, while the color of the curves indicate the size of the $\DeltaSet$ used for calibration (as indicated in the legend).
The y-axis corresponds to the value of the strategic coverage under the level of alteration indicated on the x-axis.

All conformal calibrations were done with $1 - \alpha = 90\%$, which is marked on the plots by the dashed line. (Above the dashed line is valid coverage, while below is invalid coverage.)

The left plot is for the Plain XGBoost model, while the right one is for the Strategic XGBoost one.

\subsection{Table 1: evaluation of our method on multiple datasets and forms of strategic alterations for $\alpha = 0.1$}

Table columns:
\begin{itemize}
    \item \textbf{DATASET}: the dataset being considered.
    \item \textbf{UNDERLYING MODEL}: the base model -- either Plain XGBoost or Strategic XGBoost
    \item \textbf{$\Delta$s}: the strategic alterations being considered -- either Utility-cost (the utlity + cost construction) or Rand. Search. (the iterative random search construction).
    \item \textbf{STRATEGIC COVERAGE: OURS}: the strategic coverage of our method (Strategic Conformal Prediction): $\widehat{\P}[\forall \Delta \in \DeltaSet,\ Y^\test \in C_{t^\strat_\alpha}(X^\test)]$, estimated on the test set;
    \item \textbf{STRATEGIC COVERAGE: STD}: the strategic coverage of standard conformal prediction: $\widehat{\P}[\forall \Delta \in \DeltaSet,\ Y^\test \in C_{t^\std_\alpha}(X^\test)]$, estimated on the test set;
    \item \textbf{AVG SET SIZE DIFF}: the average strategic size difference: $\widehat{\E}[\inf_{\Delta \in \DeltaSet} \mathrm{size}(C_{t^\std_\alpha}(X^\test))]$, estimated on the test set.
\end{itemize}

All $\pm$ correspond to $95\%$ bootstrapped confidence intervals.
Black bold coverages indicate the best method for that row (incidentally always ours), while purple coverages indicate substantially invalid coverages (more than 30\% away from attaining valid coverage).

\subsection{Extra figure: coverage under change of the stochastic step}

With this experiment, our goal is to analyze the behavior of our method under a more strenuous form of model specification where we alter the actual stochastic step used for the stochastic simulations.
To do so, we consider the strategic alterations based on iterative random search. Let $\DeltaSet = \{\Delta_0, \ldots, \Delta_k\}$ denote the set of these strategic alterations.
We consider a family of `modified strategic alterations' $\widetilde{\Delta}_k (X) = \Delta_k (X) + \mathcal{N}(0, \lambda \Sigma)$, where $\lambda > 0$ and $\Sigma$ is the estimated covariance matrix of the data.
This way, by taking $\lambda = 0$ (at the limit) we recover the original $\Delta$s, while by increasing $\lambda$ we stray away from it, increasing $\TV(\widetilde{\Delta}_k(X) \ \|\ \Delta_k(X))$.

The figures plot on the x-axis the values of this $\lambda$, and on the y-axis the strategic coverage.
The blue line corresponds to Plain XGBoost, and the orange one to Strategic XGBoost.

Interestingly, using Strategic XGBoost increases our reliance on the correct specification of the $\Delta$s, and so actually \emph{decreases} our robustness.

\section{More experiment results}\label{suppl:more-experiments}

\subsection{Figure 1: coverage and interval sizes for our method versus standard CP}

\begin{figure}[H]
    \centering
    \includegraphics[width=.9\textwidth]{figures/fig1-legend.png}
    \begin{subfigure}[t]{0.48\textwidth}
        \centering
        \includegraphics[width=\columnwidth]{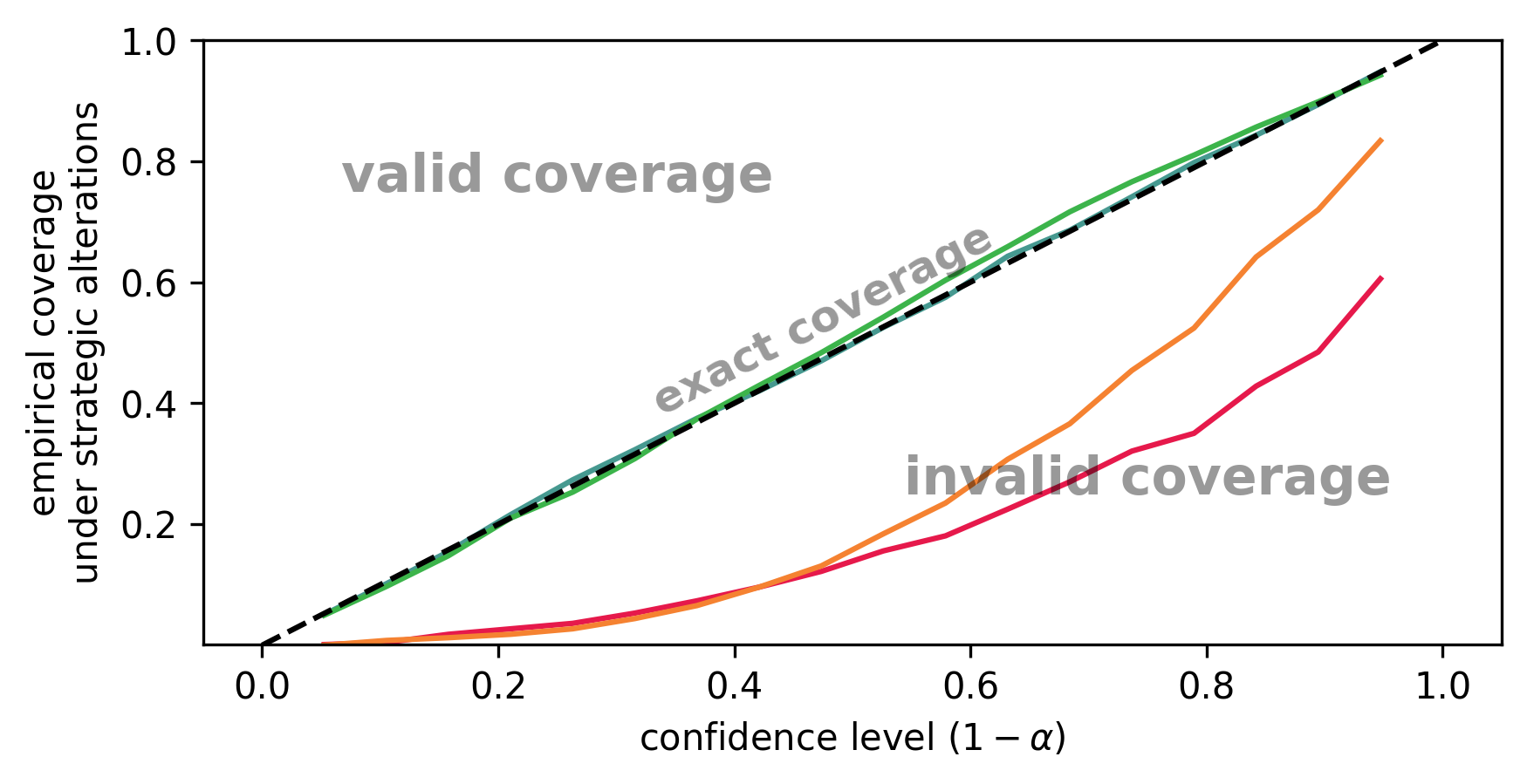}
        \caption{Coverage of standard CP \& our method under strategic alterations over varying values of $\alpha$.}
    \end{subfigure}\ \quad
    \begin{subfigure}[t]{0.48\textwidth}
        \centering
        \includegraphics[width=\columnwidth]{figures/size-mini.png}
        \caption{Average sizes of the predictive sets of standard CP \& our method under strategic alterations.}
    \end{subfigure}
    \caption{\texttt{academic-dropout}}
\end{figure}

\begin{figure}[H]
    \centering
    \includegraphics[width=.9\textwidth]{figures/fig1-legend.png}
    \begin{subfigure}[t]{0.48\textwidth}
        \centering
        \includegraphics[width=\columnwidth]{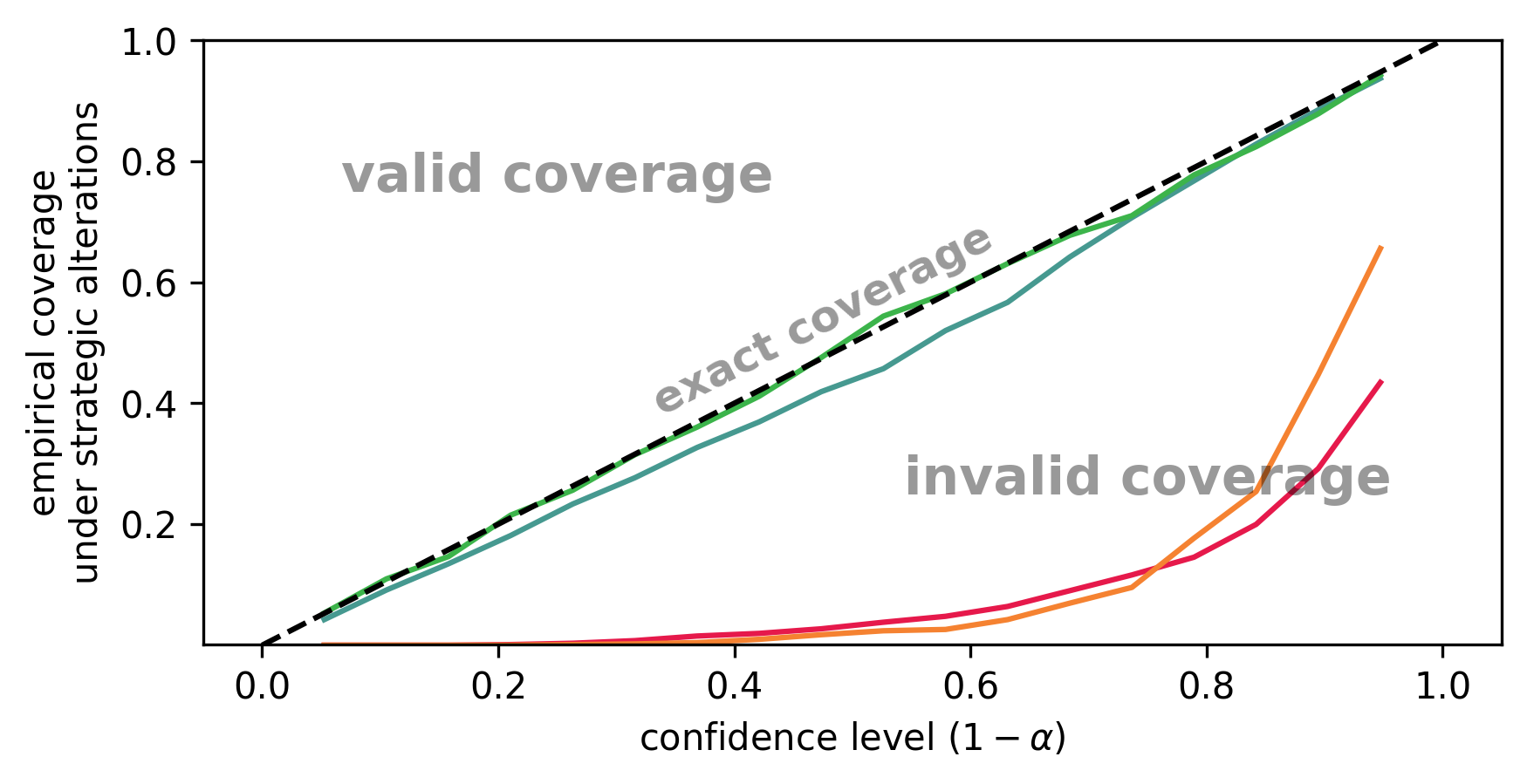}
        \caption{Coverage of standard CP \& our method under strategic alterations over varying values of $\alpha$.}
    \end{subfigure}\ \quad
    \begin{subfigure}[t]{0.48\textwidth}
        \centering
        \includegraphics[width=\columnwidth]{figures/size-mini.png}
        \caption{Average sizes of the predictive sets of standard CP \& our method under strategic alterations.}
    \end{subfigure}
    \caption{\texttt{spambase}}
\end{figure}

\begin{figure}[H]
    \centering
    \includegraphics[width=.9\textwidth]{figures/fig1-legend.png}
    \begin{subfigure}[t]{0.48\textwidth}
        \centering
        \includegraphics[width=\columnwidth]{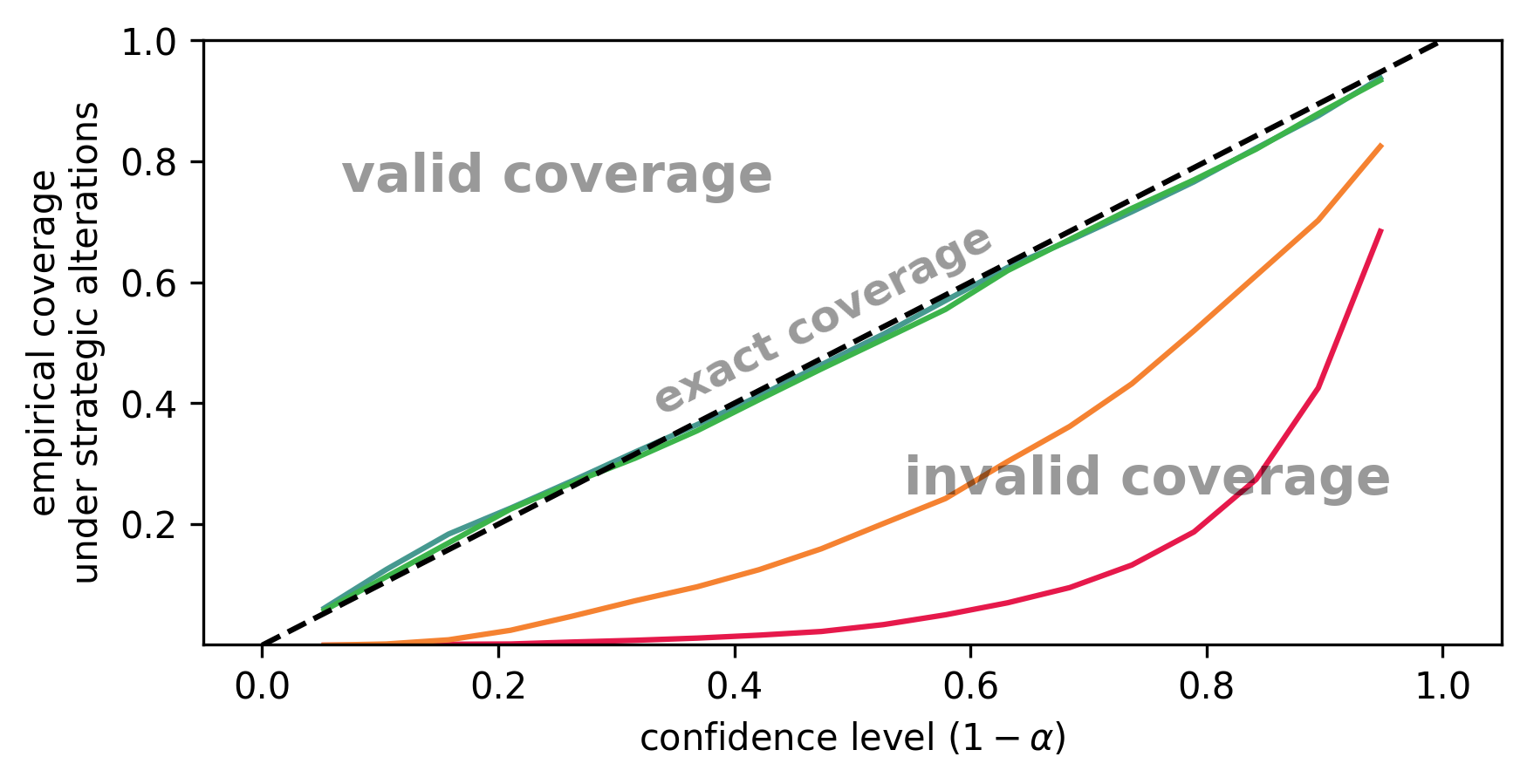}
        \caption{Coverage of standard CP \& our method under strategic alterations over varying values of $\alpha$.}
    \end{subfigure}\ \quad
    \begin{subfigure}[t]{0.48\textwidth}
        \centering
        \includegraphics[width=\columnwidth]{figures/size-mini.png}
        \caption{Average sizes of the predictive sets of standard CP \& our method under strategic alterations.}
    \end{subfigure}
    \caption{\texttt{shoppers}}
\end{figure}

\begin{figure}[H]
    \centering
    \includegraphics[width=.9\textwidth]{figures/fig1-legend.png}
    \begin{subfigure}[t]{0.48\textwidth}
        \centering
        \includegraphics[width=\columnwidth]{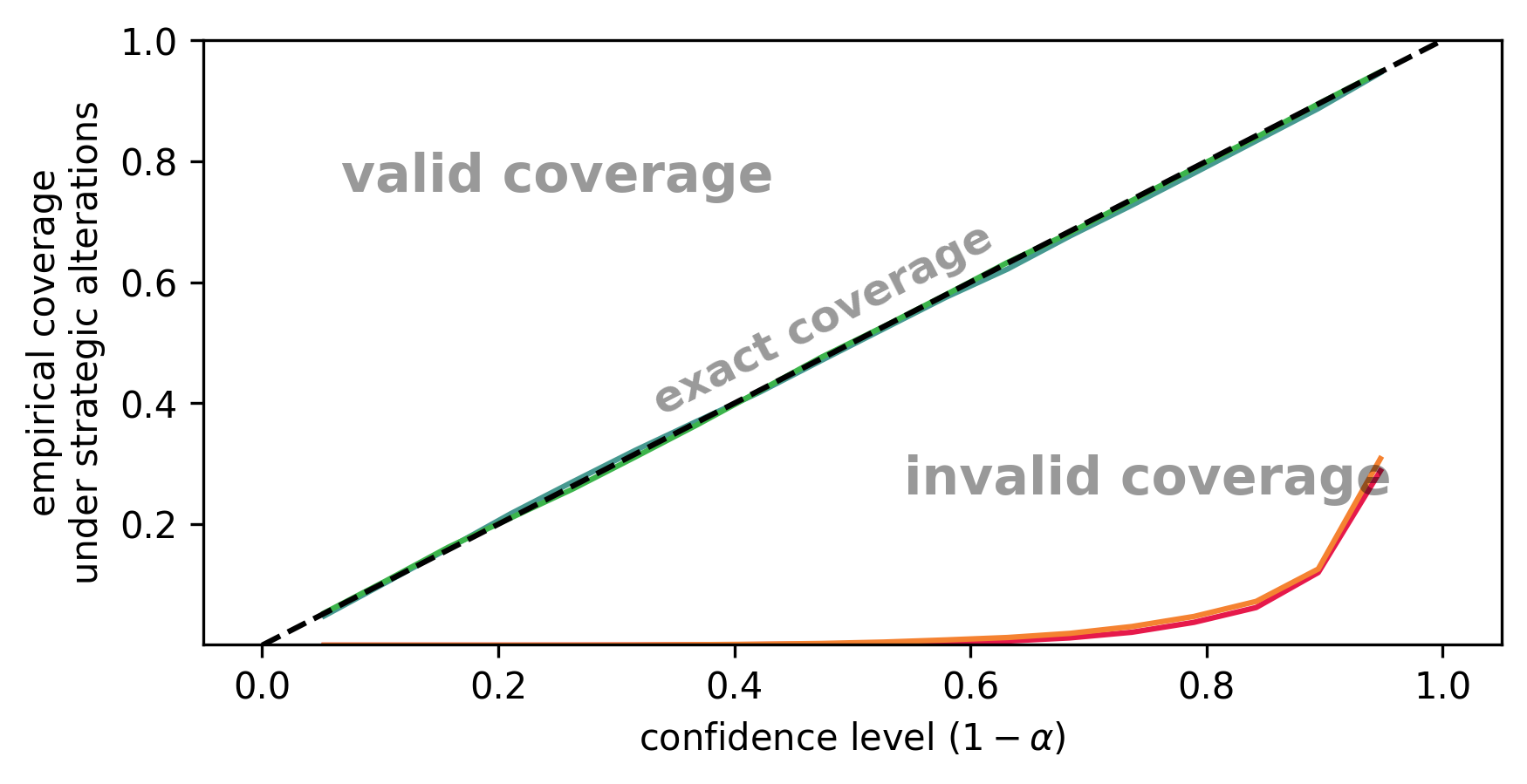}
        \caption{Coverage of standard CP \& our method under strategic alterations over varying values of $\alpha$.}
    \end{subfigure}\ \quad
    \begin{subfigure}[t]{0.48\textwidth}
        \centering
        \includegraphics[width=\columnwidth]{figures/size-mini.png}
        \caption{Average sizes of the predictive sets of standard CP \& our method under strategic alterations.}
    \end{subfigure}
    \caption{\texttt{news}}
\end{figure}

\begin{figure}[H]
    \centering
    \includegraphics[width=.9\textwidth]{figures/fig1-legend.png}
    \begin{subfigure}[t]{0.48\textwidth}
        \centering
        \includegraphics[width=\columnwidth]{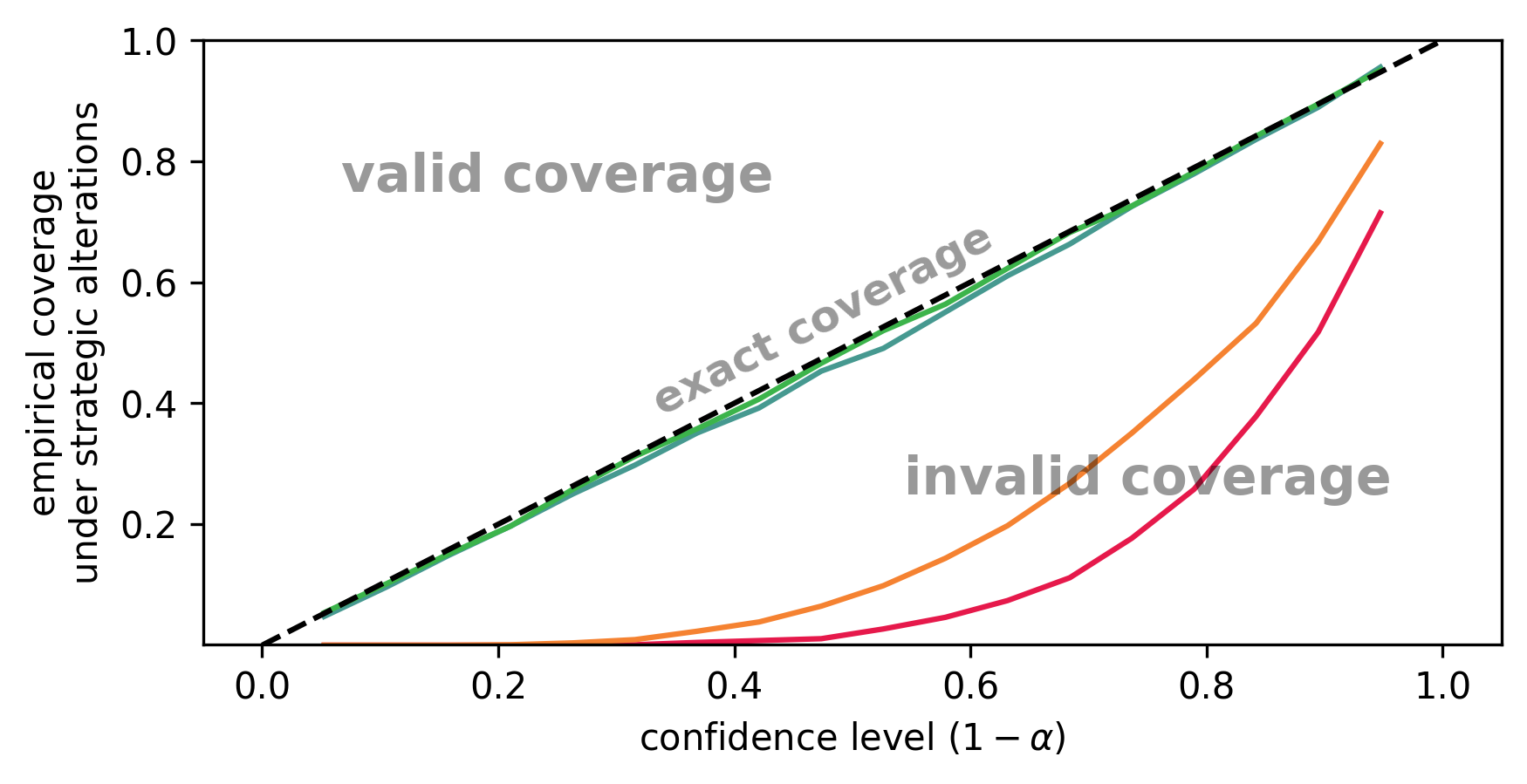}
        \caption{Coverage of standard CP \& our method under strategic alterations over varying values of $\alpha$.}
    \end{subfigure}\ \quad
    \begin{subfigure}[t]{0.48\textwidth}
        \centering
        \includegraphics[width=\columnwidth]{figures/size-mini.png}
        \caption{Average sizes of the predictive sets of standard CP \& our method under strategic alterations.}
    \end{subfigure}
    \caption{\texttt{wine}}
\end{figure}

\begin{figure}[H]
    \centering
    \includegraphics[width=.9\textwidth]{figures/fig1-legend.png}
    \begin{subfigure}[t]{0.48\textwidth}
        \centering
        \includegraphics[width=\columnwidth]{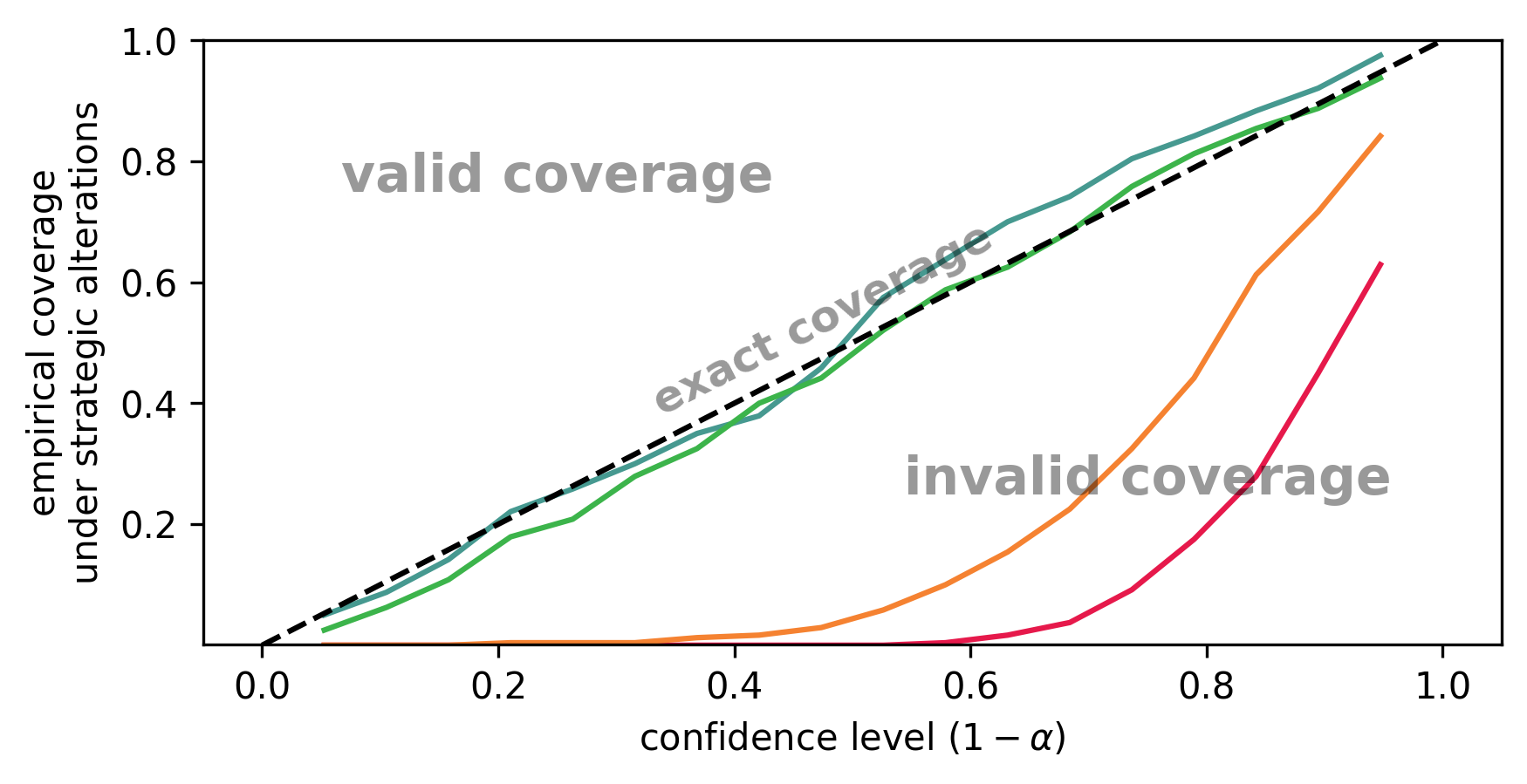}
        \caption{Coverage of standard CP \& our method under strategic alterations over varying values of $\alpha$.}
    \end{subfigure}\ \quad
    \begin{subfigure}[t]{0.48\textwidth}
        \centering
        \includegraphics[width=\columnwidth]{figures/size-mini.png}
        \caption{Average sizes of the predictive sets of standard CP \& our method under strategic alterations.}
    \end{subfigure}
    \caption{\texttt{productivity}}
\end{figure}

\begin{figure}[H]
    \centering
    \includegraphics[width=.9\textwidth]{figures/fig1-legend.png}
    \begin{subfigure}[t]{0.48\textwidth}
        \centering
        \includegraphics[width=\columnwidth]{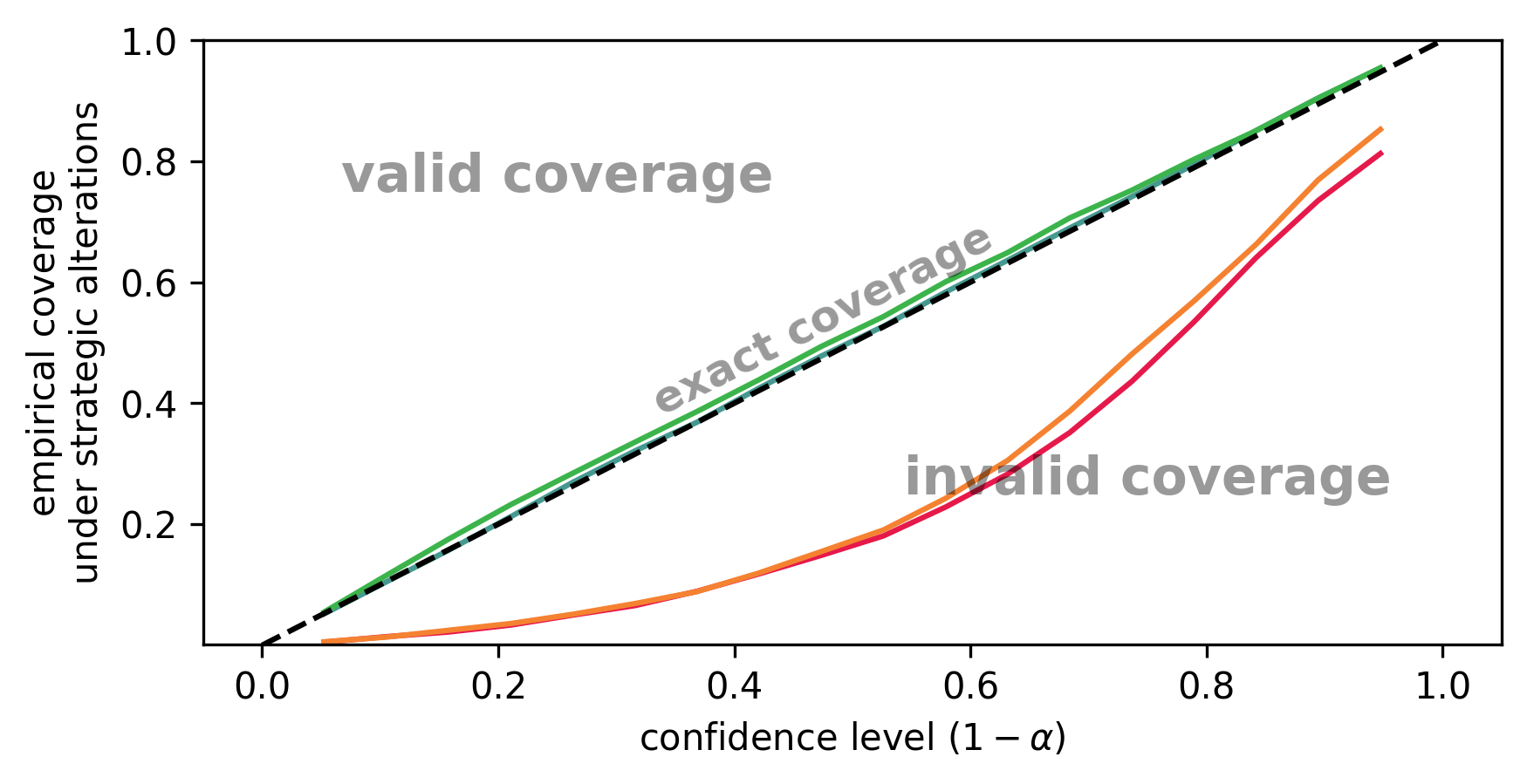}
        \caption{Coverage of standard CP \& our method under strategic alterations over varying values of $\alpha$.}
    \end{subfigure}\ \quad
    \begin{subfigure}[t]{0.48\textwidth}
        \centering
        \includegraphics[width=\columnwidth]{figures/size-mini.png}
        \caption{Average sizes of the predictive sets of standard CP \& our method under strategic alterations.}
    \end{subfigure}
    \caption{\texttt{taiwan}}
\end{figure}

\begin{figure}[H]
    \centering
    \includegraphics[width=.9\textwidth]{figures/fig1-legend.png}
    \begin{subfigure}[t]{0.48\textwidth}
        \centering
        \includegraphics[width=\columnwidth]{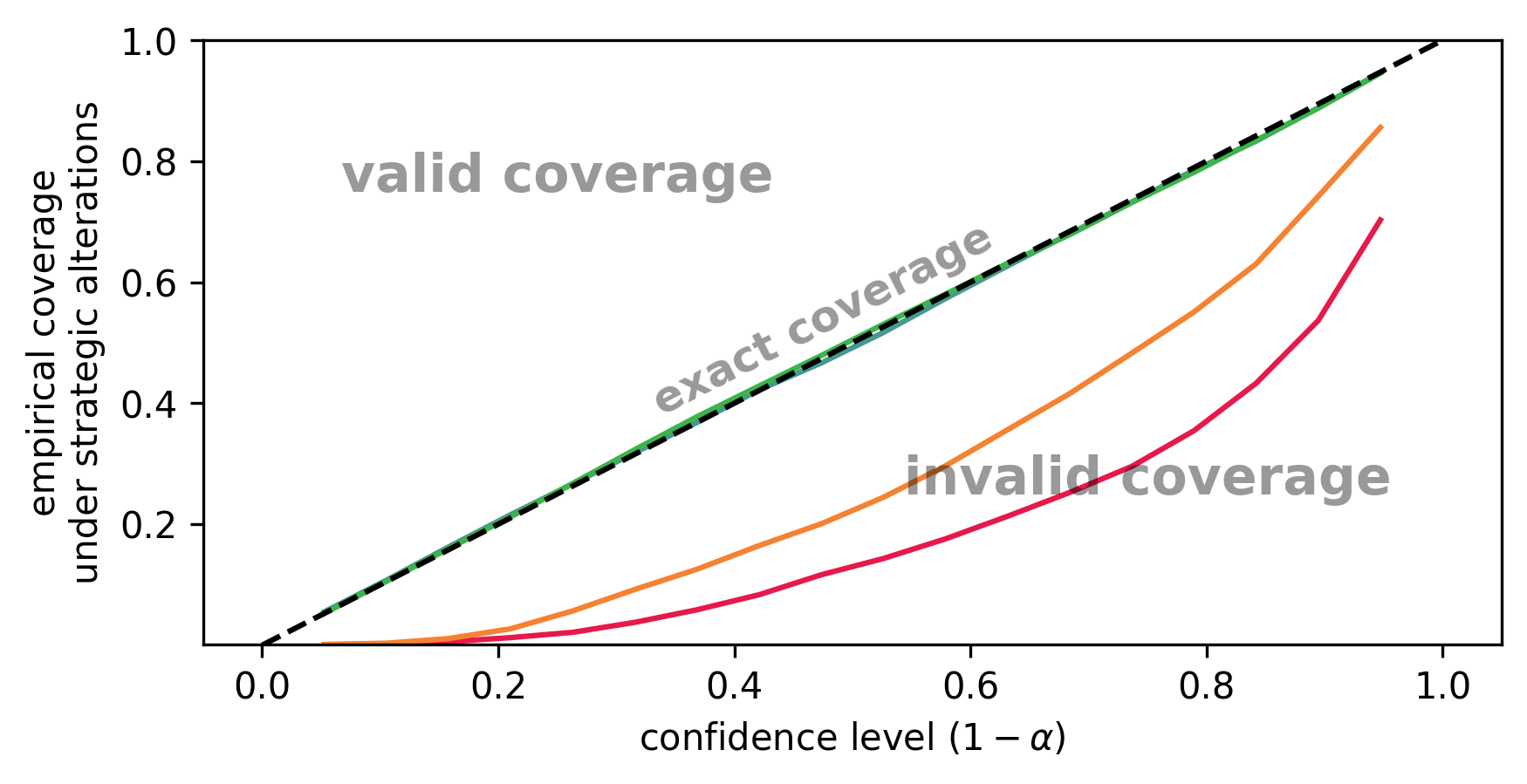}
        \caption{Coverage of standard CP \& our method under strategic alterations over varying values of $\alpha$.}
    \end{subfigure}\ \quad
    \begin{subfigure}[t]{0.48\textwidth}
        \centering
        \includegraphics[width=\columnwidth]{figures/size-mini.png}
        \caption{Average sizes of the predictive sets of standard CP \& our method under strategic alterations.}
    \end{subfigure}
    \caption{\texttt{bank-marketing}}
\end{figure}

\begin{figure}[H]
    \centering
    \includegraphics[width=.9\textwidth]{figures/fig1-legend.png}
    \begin{subfigure}[t]{0.48\textwidth}
        \centering
        \includegraphics[width=\columnwidth]{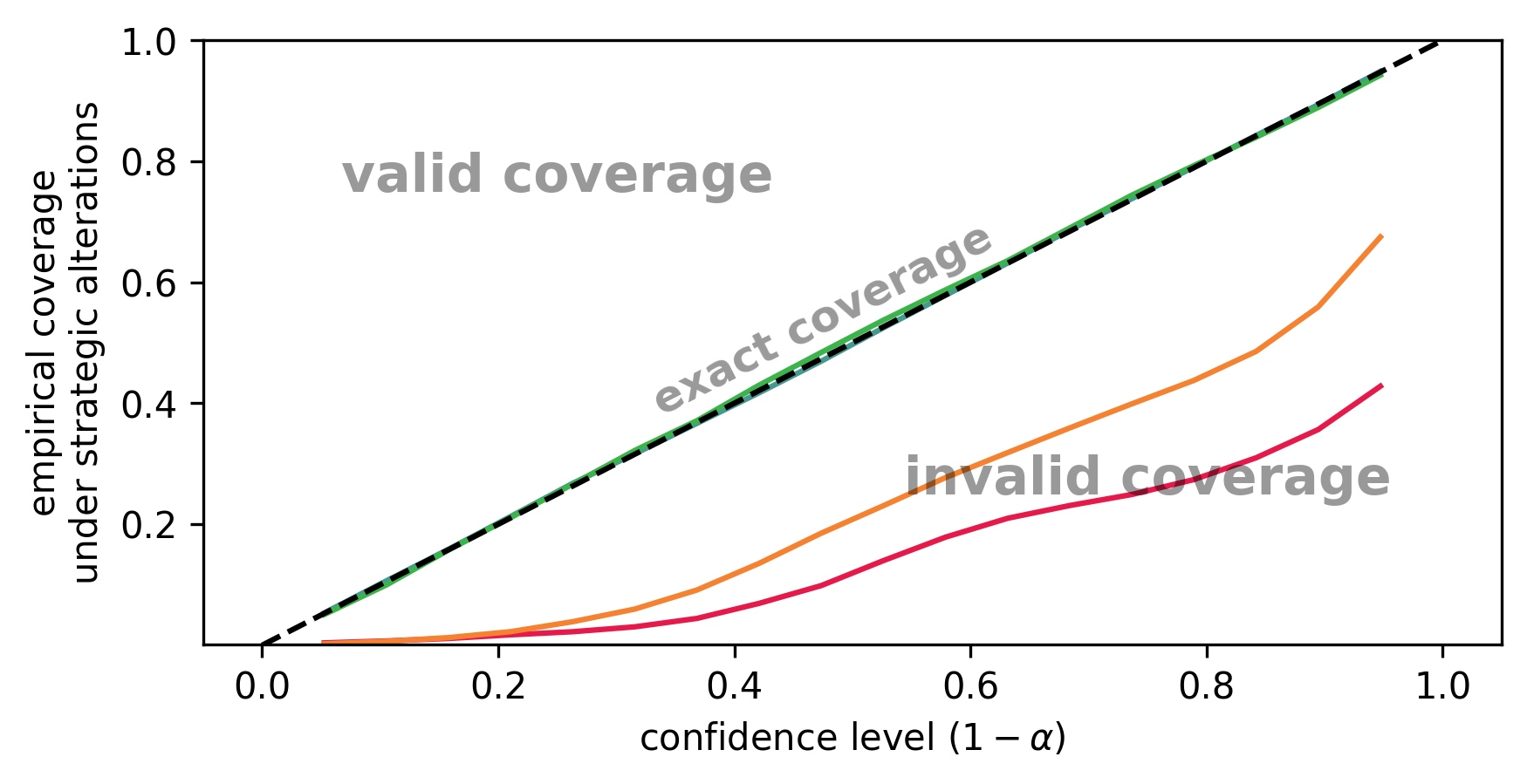}
        \caption{Coverage of standard CP \& our method under strategic alterations over varying values of $\alpha$.}
    \end{subfigure}\ \quad
    \begin{subfigure}[t]{0.48\textwidth}
        \centering
        \includegraphics[width=\columnwidth]{figures/size-mini.png}
        \caption{Average sizes of the predictive sets of standard CP \& our method under strategic alterations.}
    \end{subfigure}
    \caption{\texttt{census-income}}
\end{figure}

\subsection{Figure 2: coverage of our method andstandard CP for varying levels of strategic alterations}

\begin{figure}[H]
    \centering
    \begin{subfigure}[t]{0.48\textwidth}
        \centering
        \includegraphics[width=\columnwidth]{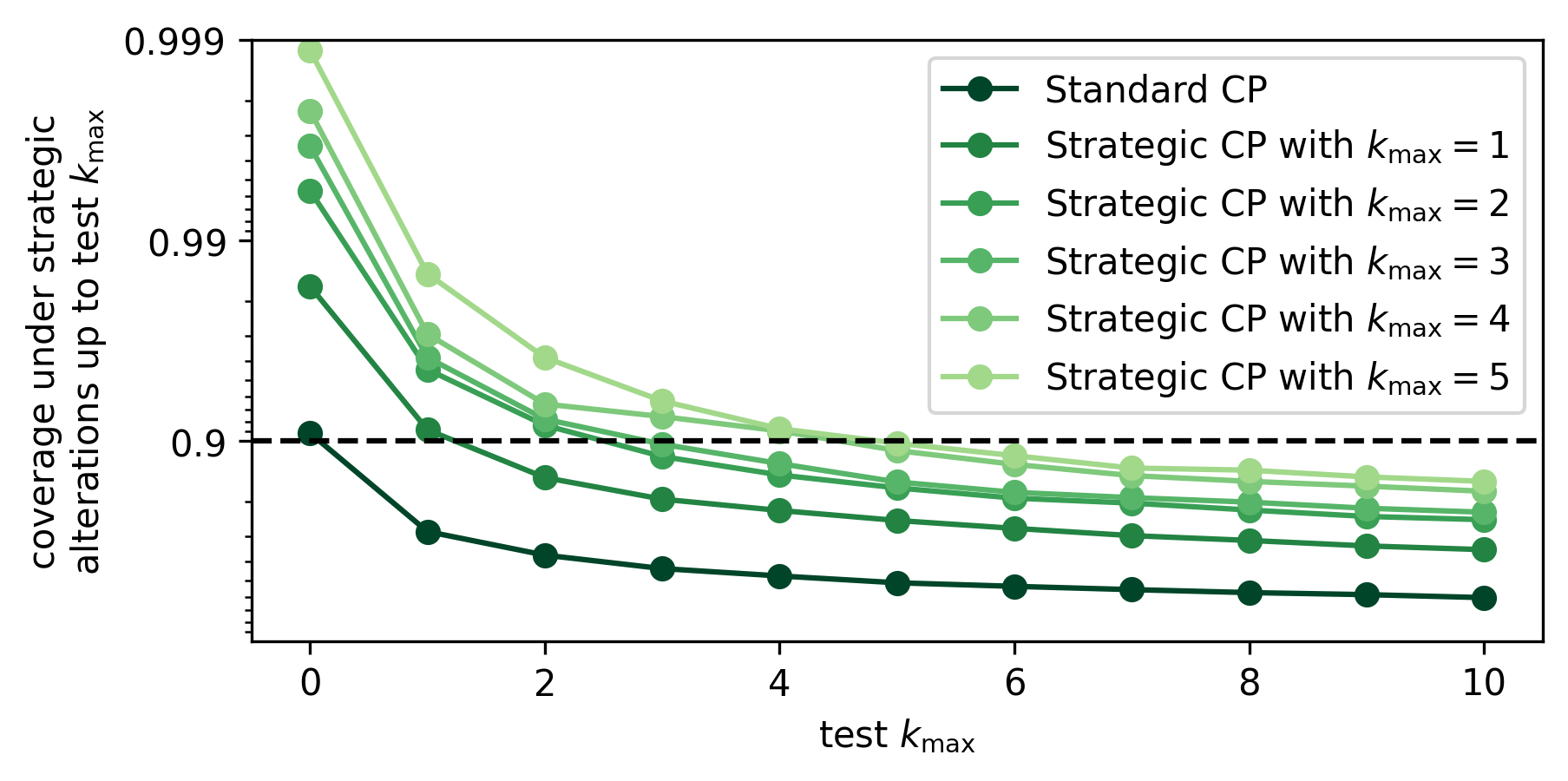}
        \caption{Strategic Conformal Prediction on top of a Plain \linebreak XGBoost Model}
    \end{subfigure}\ \quad
    \begin{subfigure}[t]{0.48\textwidth}
        \centering
        \includegraphics[width=\columnwidth]{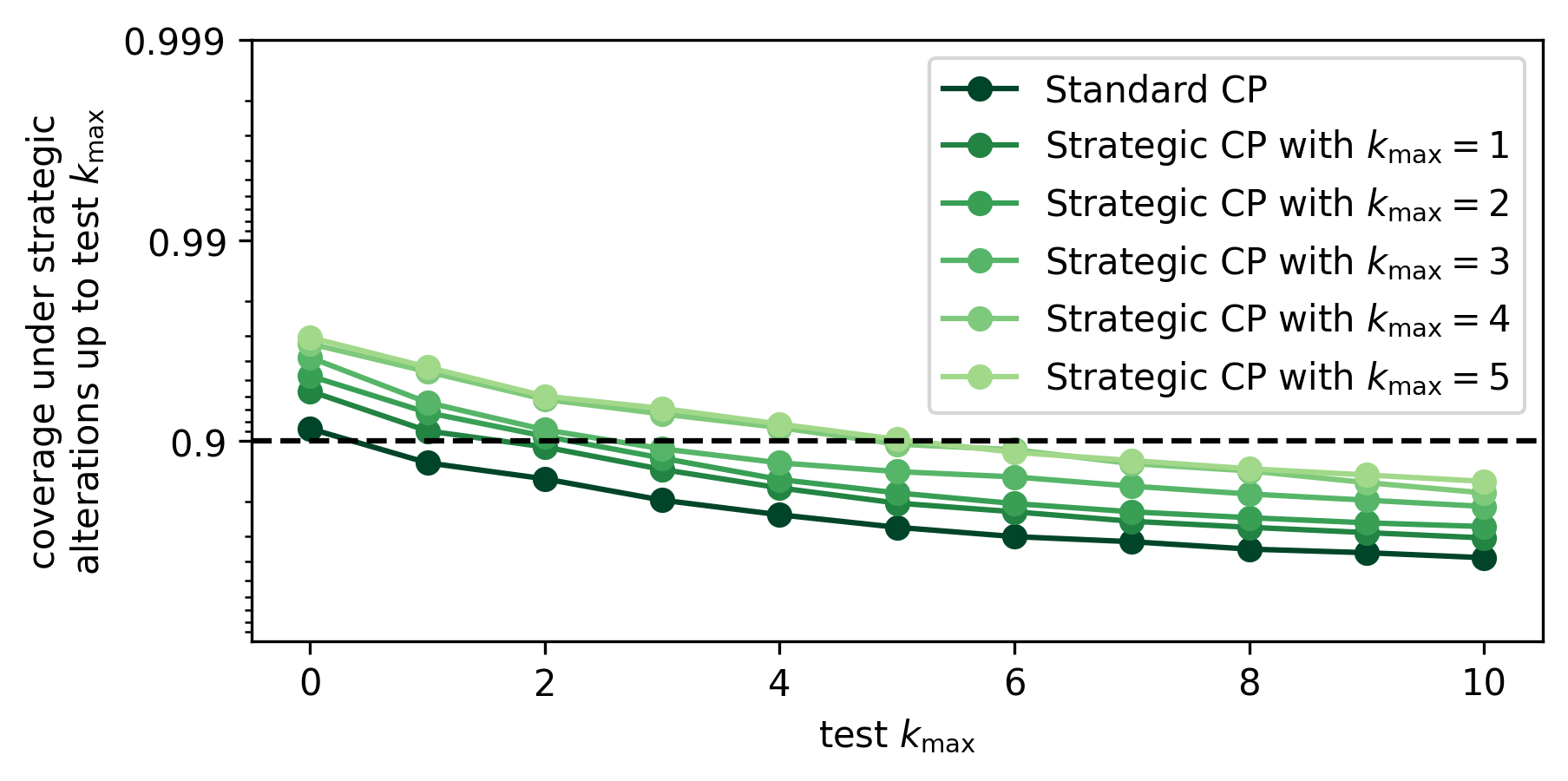}
        \caption{Strategic Conformal Prediction on top of a Strategic XGBoost Model}
    \end{subfigure}
    \caption{\texttt{academic-dropout}}
\end{figure}

\begin{figure}[H]
    \centering
    \begin{subfigure}[t]{0.48\textwidth}
        \centering
        \includegraphics[width=\columnwidth]{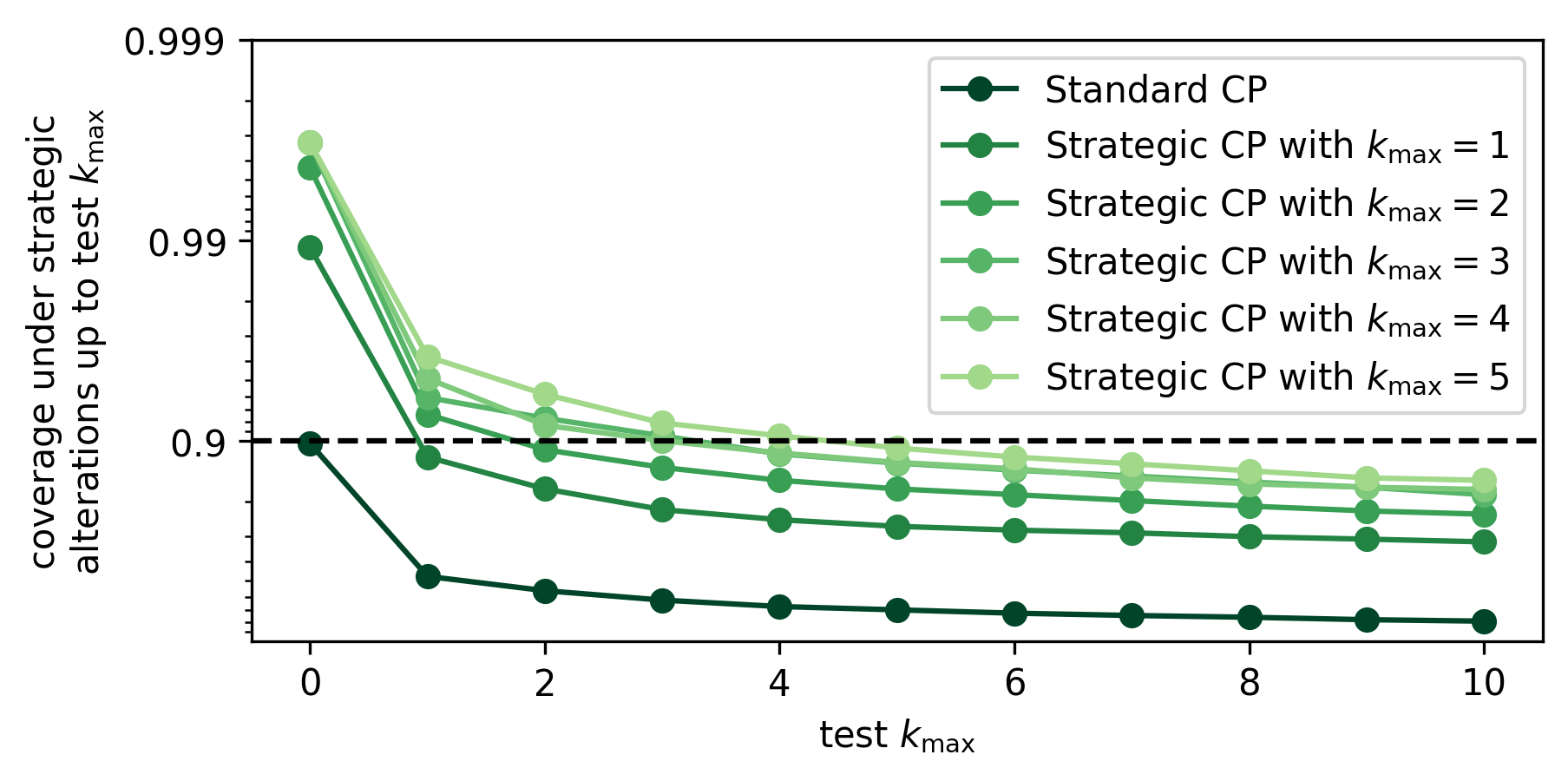}
        \caption{Strategic Conformal Prediction on top of a Plain \linebreak XGBoost Model}
    \end{subfigure}\ \quad
    \begin{subfigure}[t]{0.48\textwidth}
        \centering
        \includegraphics[width=\columnwidth]{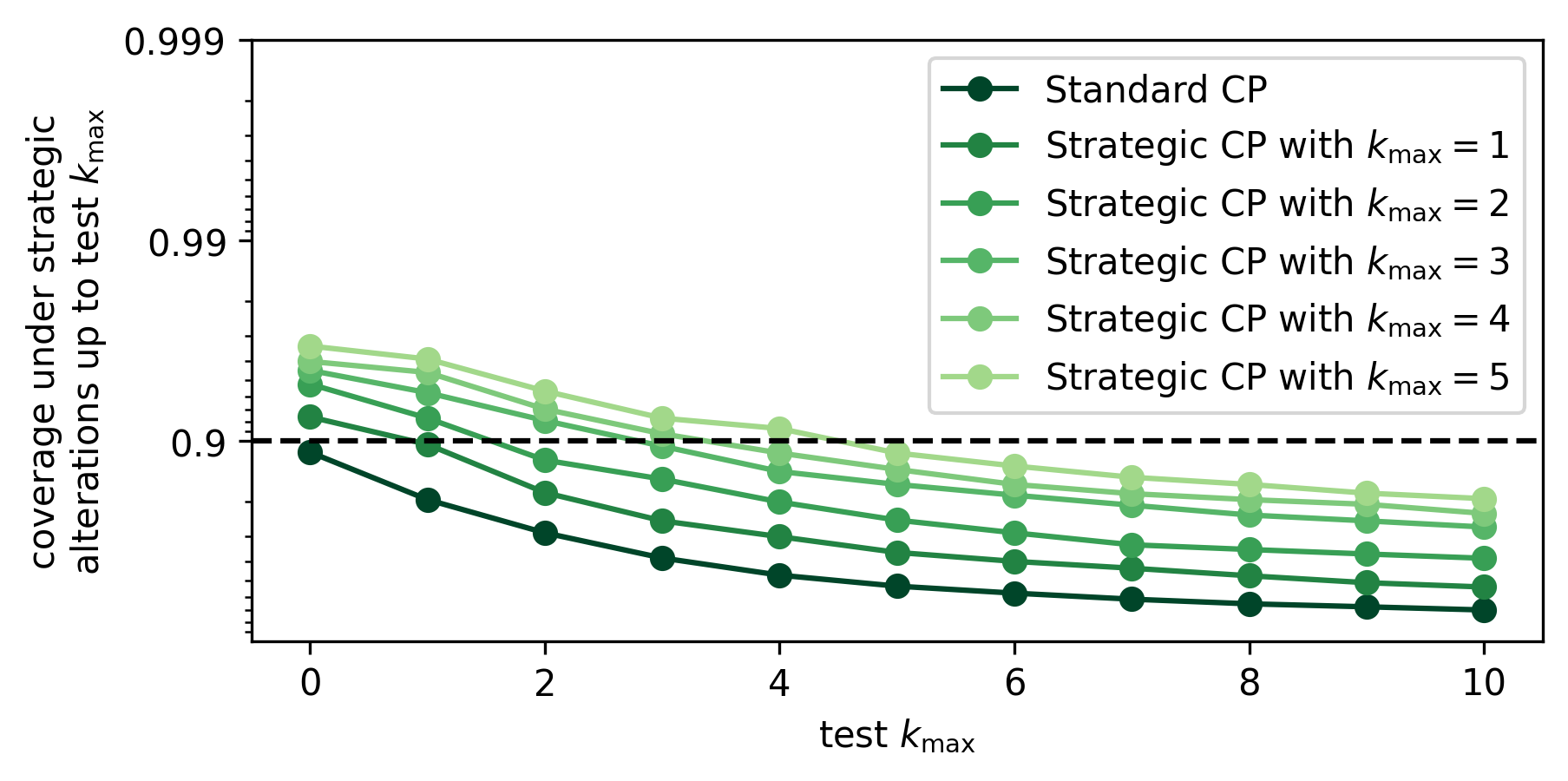}
        \caption{Strategic Conformal Prediction on top of a Strategic XGBoost Model}
    \end{subfigure}
    \caption{\texttt{spambase}}
\end{figure}

\begin{figure}[H]
    \centering
    \begin{subfigure}[t]{0.48\textwidth}
        \centering
        \includegraphics[width=\columnwidth]{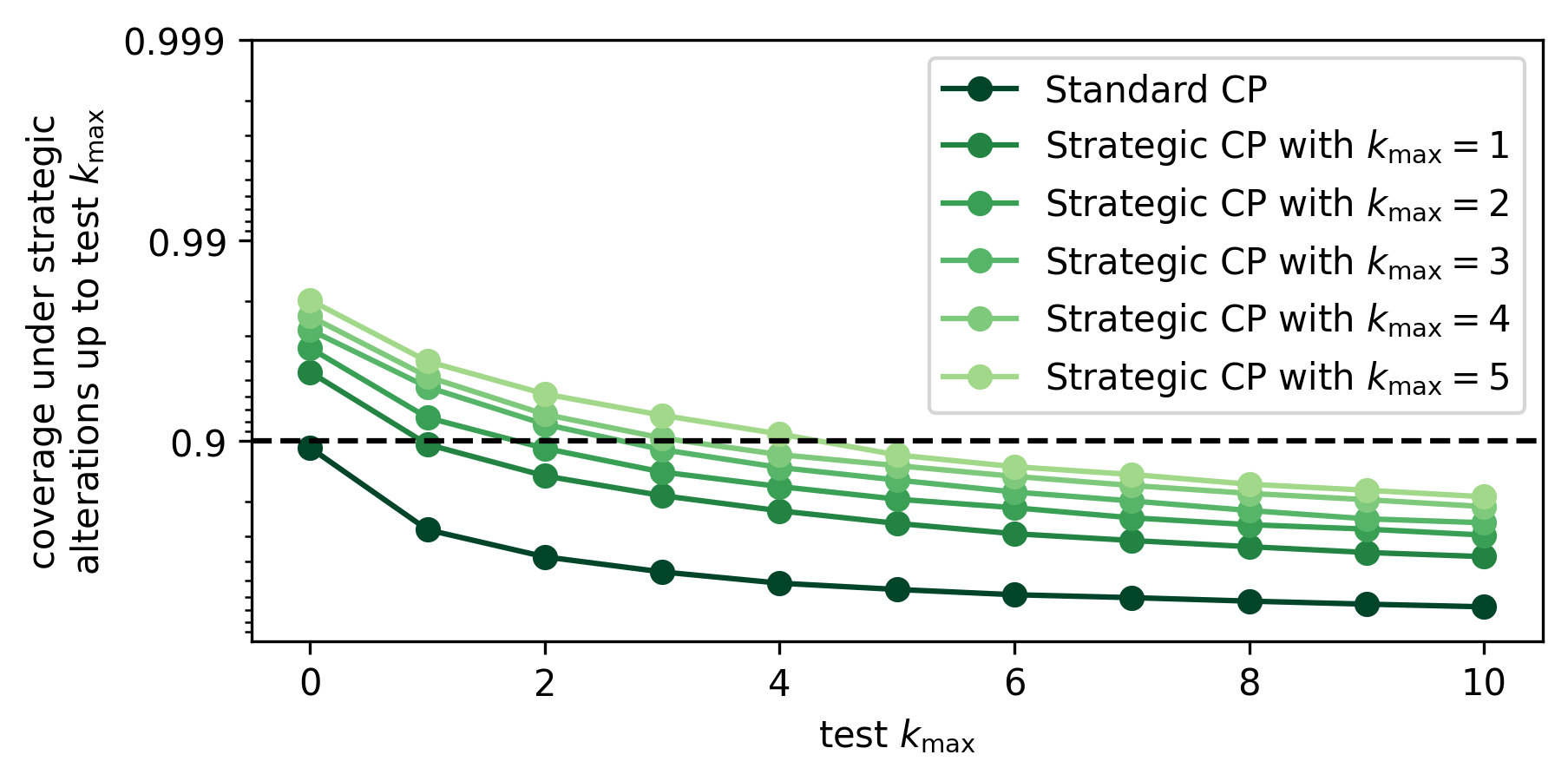}
        \caption{Strategic Conformal Prediction on top of a Plain \linebreak XGBoost Model}
    \end{subfigure}\ \quad
    \begin{subfigure}[t]{0.48\textwidth}
        \centering
        \includegraphics[width=\columnwidth]{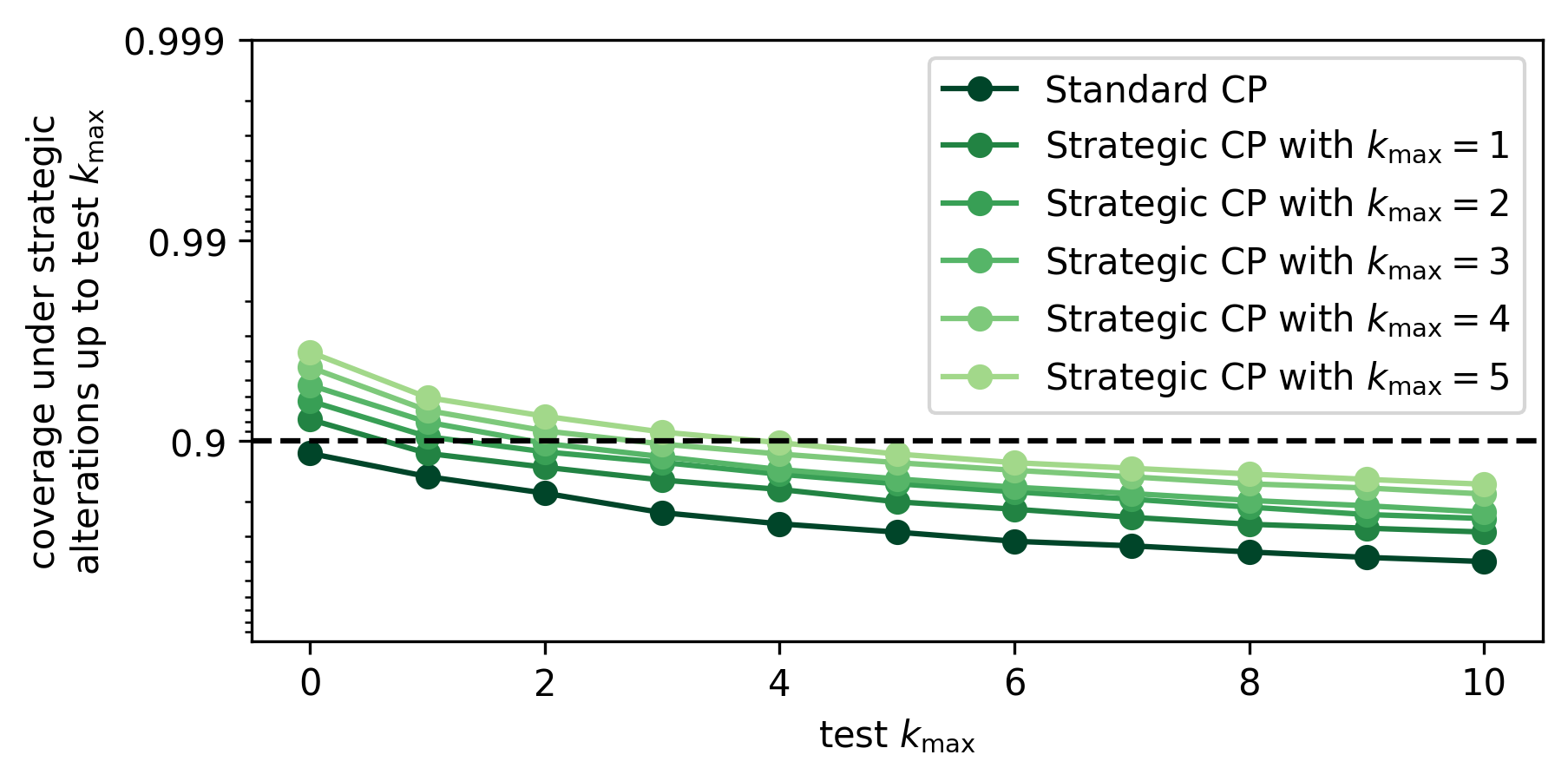}
        \caption{Strategic Conformal Prediction on top of a Strategic XGBoost Model}
    \end{subfigure}
    \caption{\texttt{shoppers}}
\end{figure}

\begin{figure}[H]
    \centering
    \begin{subfigure}[t]{0.48\textwidth}
        \centering
        \includegraphics[width=\columnwidth]{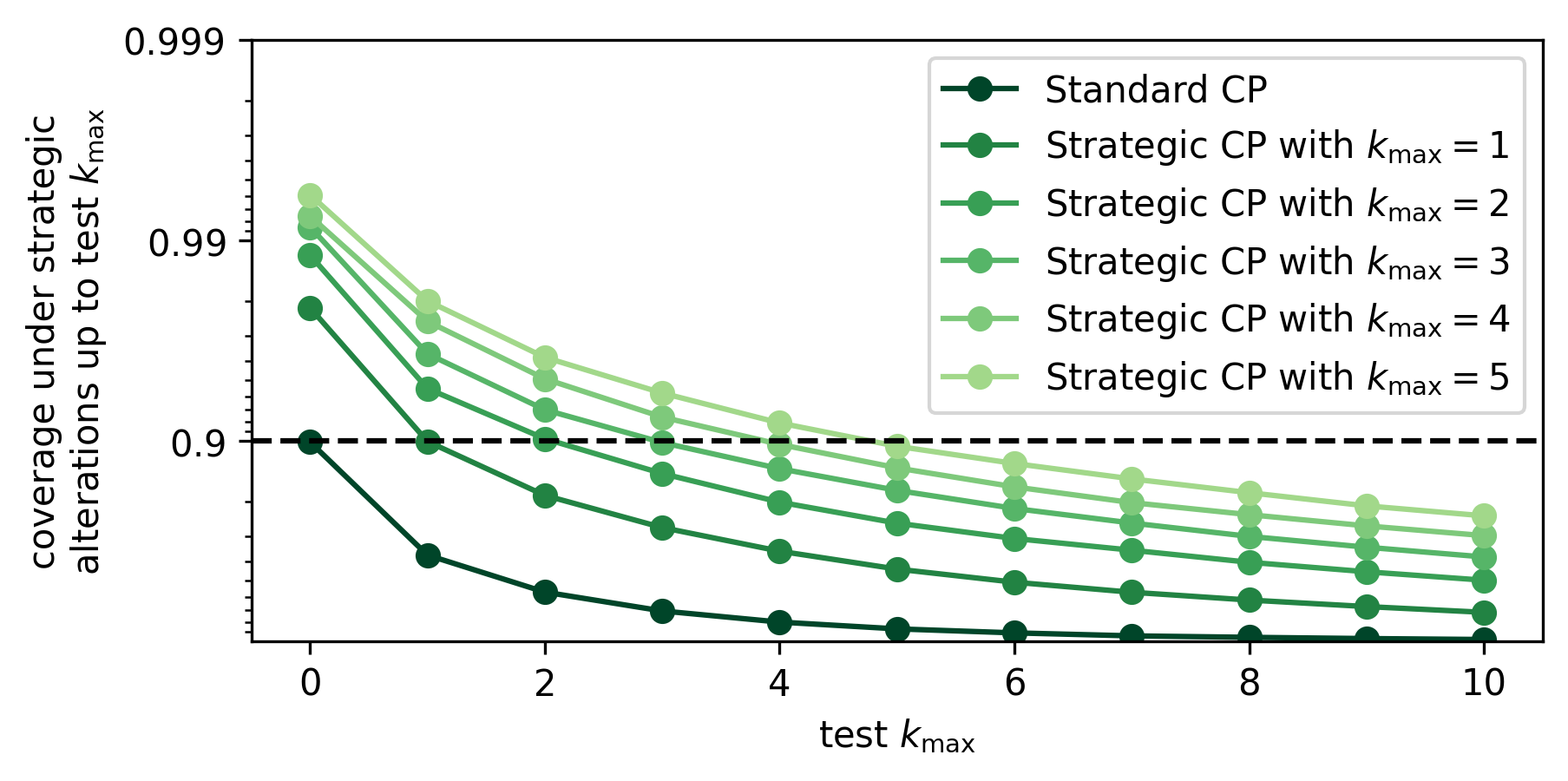}
        \caption{Strategic Conformal Prediction on top of a Plain \linebreak XGBoost Model}
    \end{subfigure}\ \quad
    \begin{subfigure}[t]{0.48\textwidth}
        \centering
        \includegraphics[width=\columnwidth]{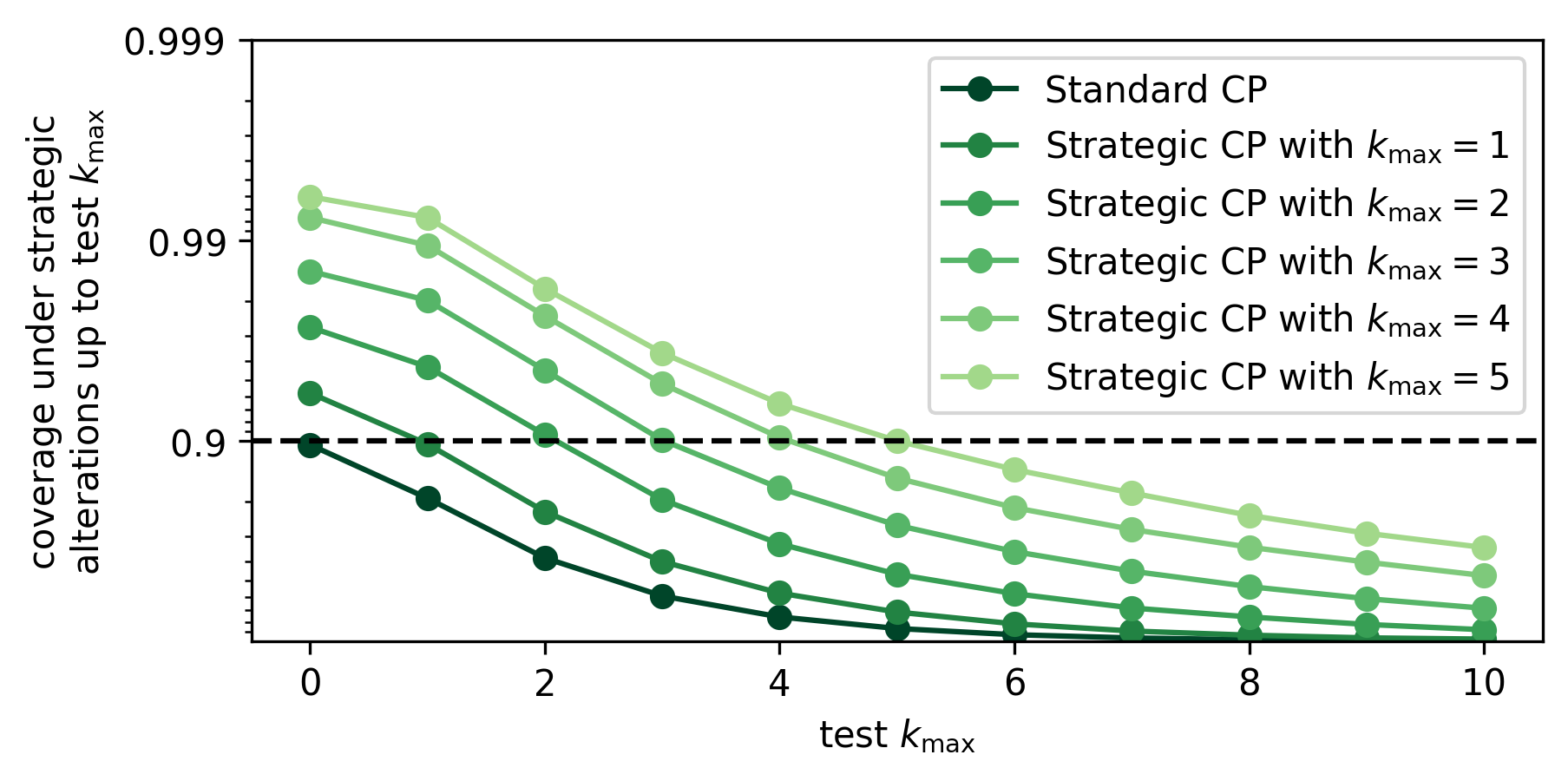}
        \caption{Strategic Conformal Prediction on top of a Strategic XGBoost Model}
    \end{subfigure}
    \caption{\texttt{news}}
\end{figure}

\begin{figure}[H]
    \centering
    \begin{subfigure}[t]{0.48\textwidth}
        \centering
        \includegraphics[width=\columnwidth]{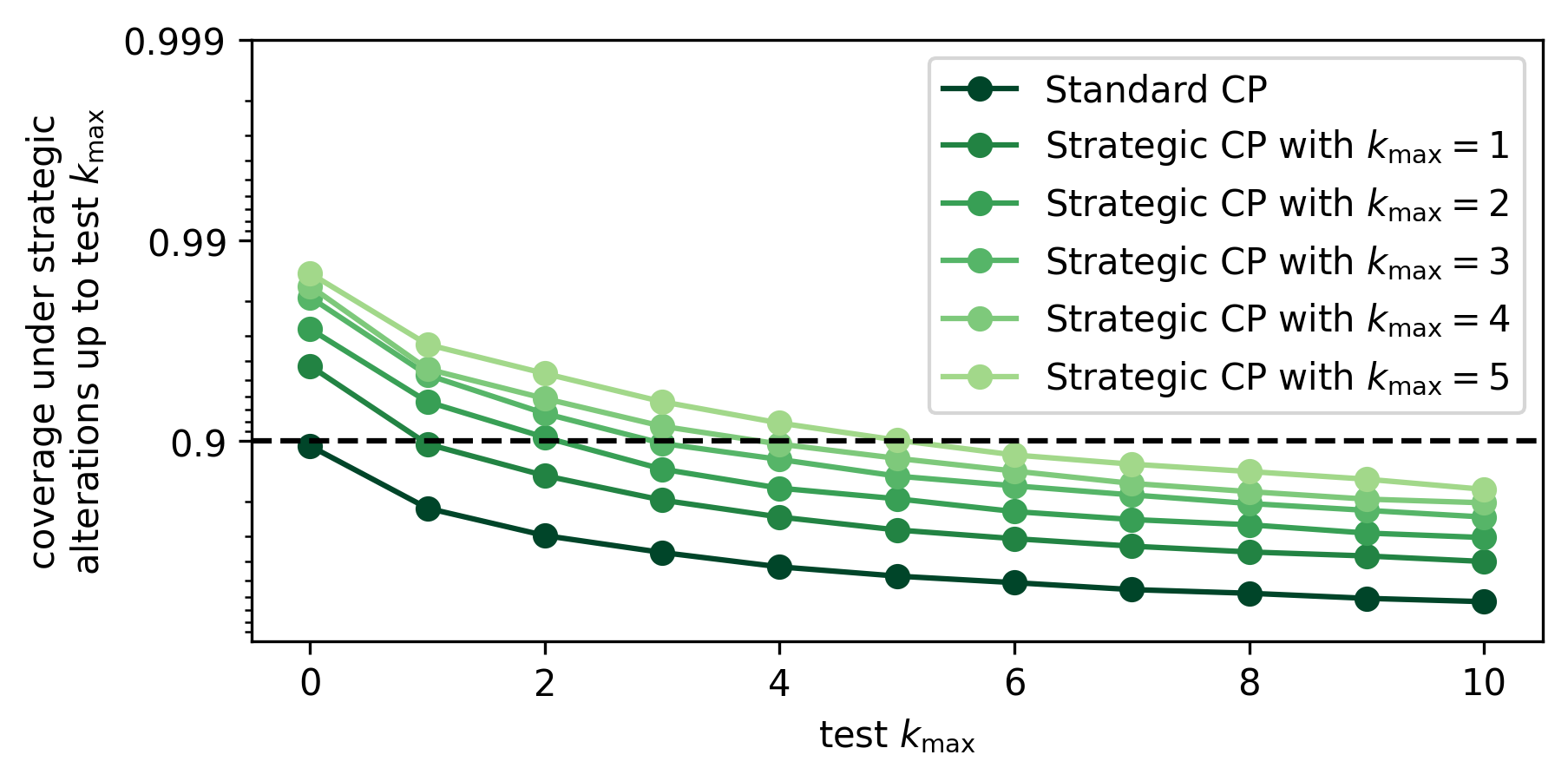}
        \caption{Strategic Conformal Prediction on top of a Plain \linebreak XGBoost Model}
    \end{subfigure}\ \quad
    \begin{subfigure}[t]{0.48\textwidth}
        \centering
        \includegraphics[width=\columnwidth]{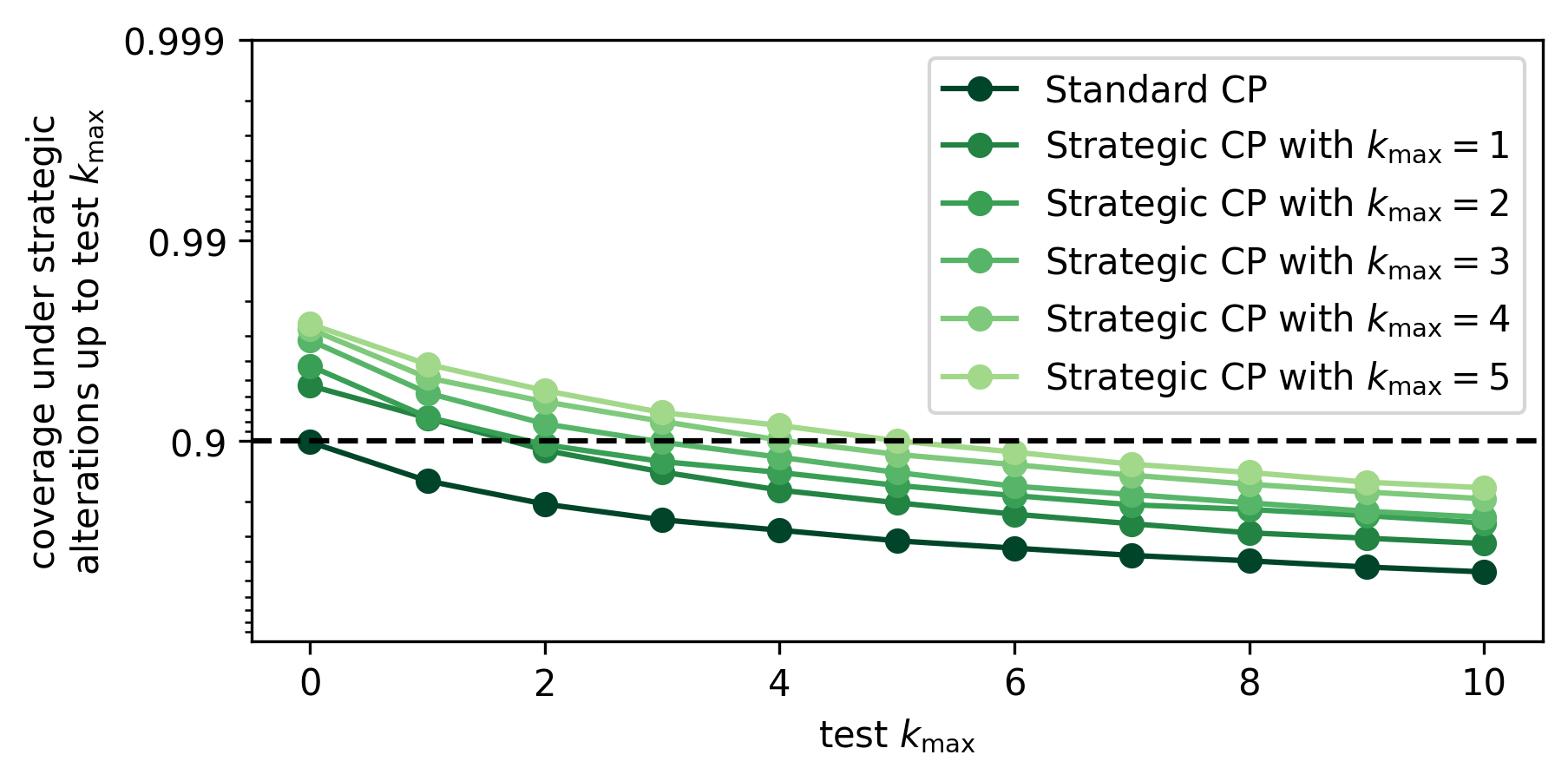}
        \caption{Strategic Conformal Prediction on top of a Strategic XGBoost Model}
    \end{subfigure}
    \caption{\texttt{wine}}
\end{figure}

\begin{figure}[H]
    \centering
    \begin{subfigure}[t]{0.48\textwidth}
        \centering
        \includegraphics[width=\columnwidth]{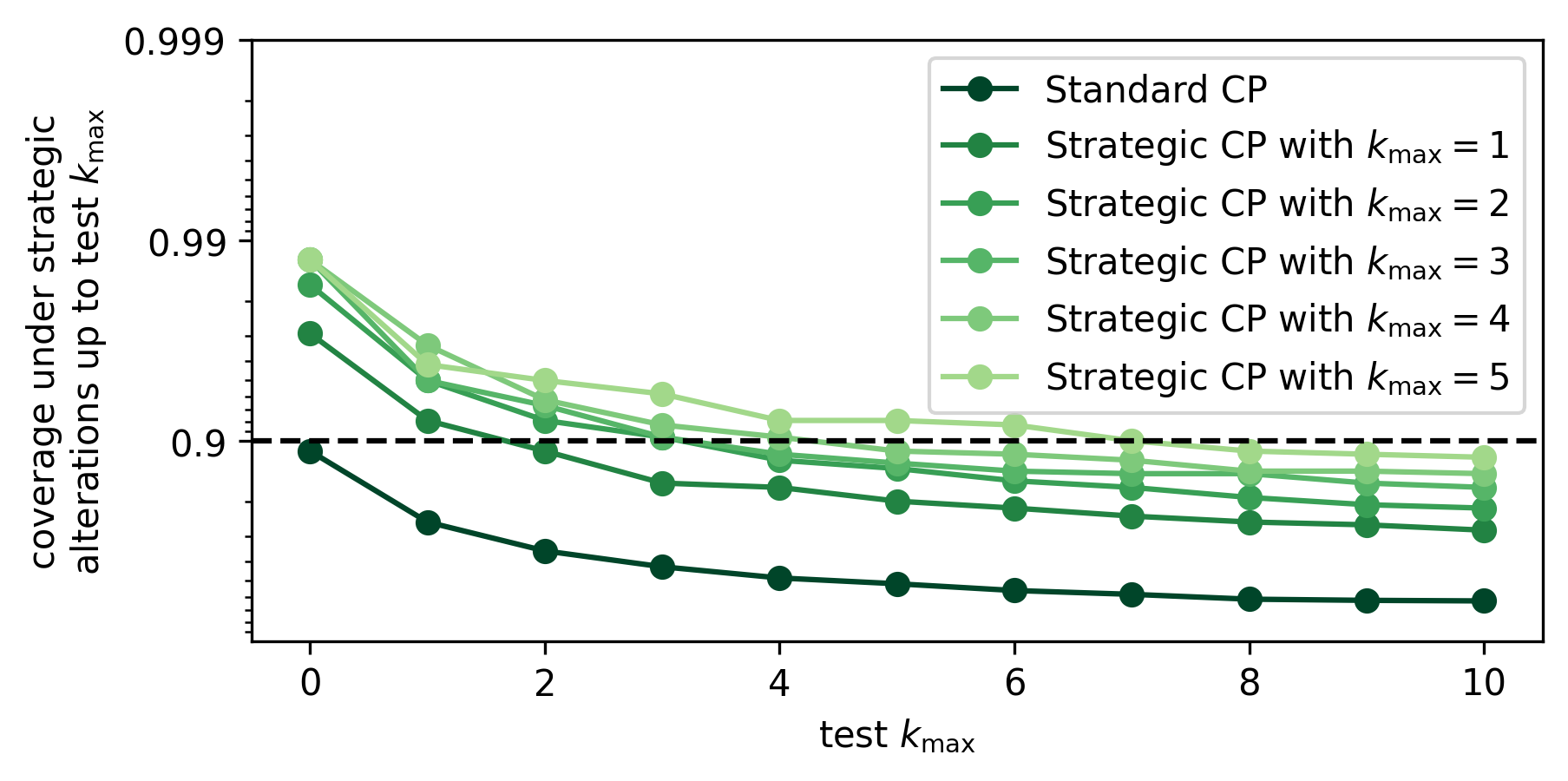}
        \caption{Strategic Conformal Prediction on top of a Plain \linebreak XGBoost Model}
    \end{subfigure}\ \quad
    \begin{subfigure}[t]{0.48\textwidth}
        \centering
        \includegraphics[width=\columnwidth]{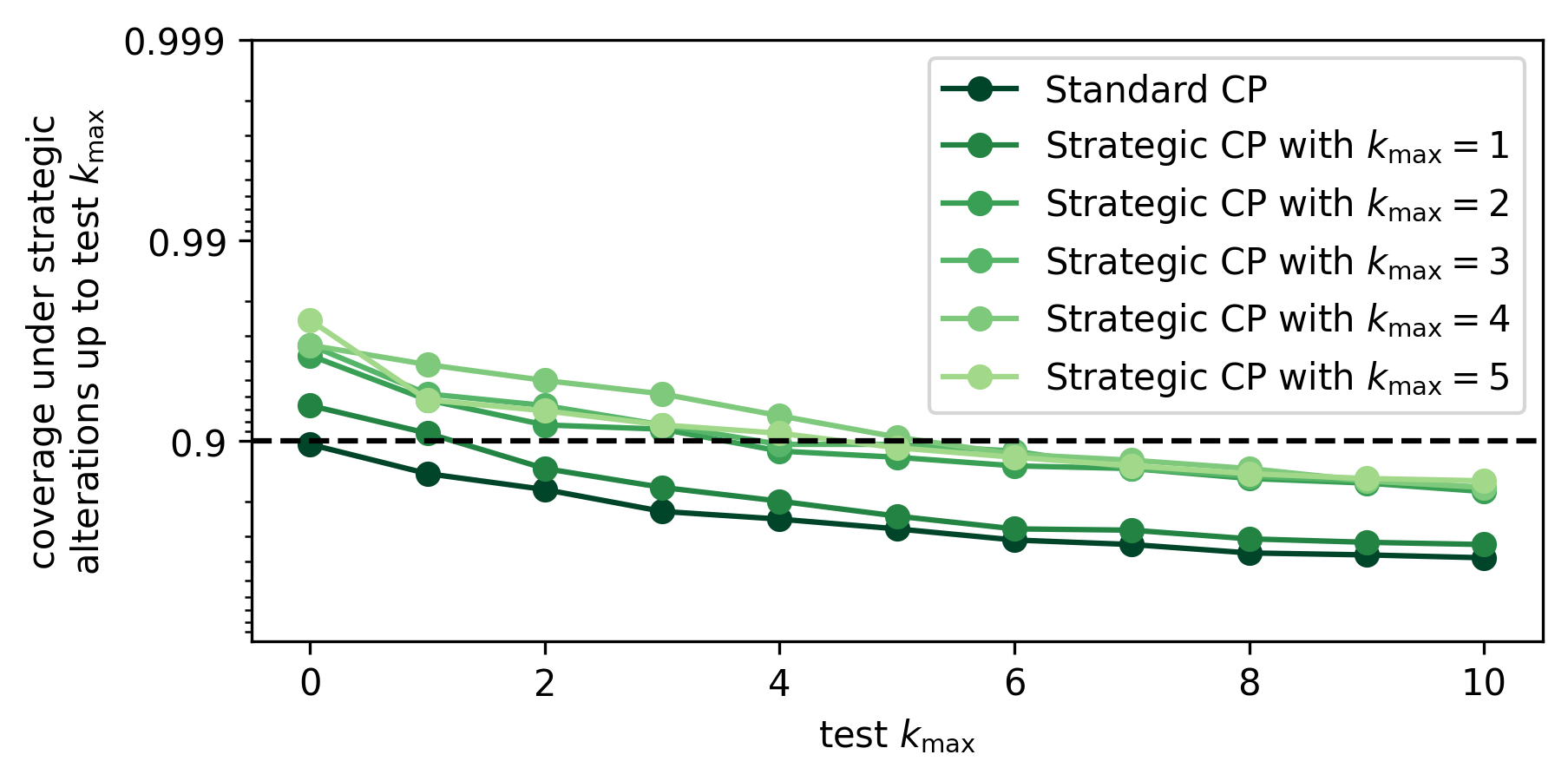}
        \caption{Strategic Conformal Prediction on top of a Strategic XGBoost Model}
    \end{subfigure}
    \caption{\texttt{productivity}}
\end{figure}

\begin{figure}[H]
    \centering
    \begin{subfigure}[t]{0.48\textwidth}
        \centering
        \includegraphics[width=\columnwidth]{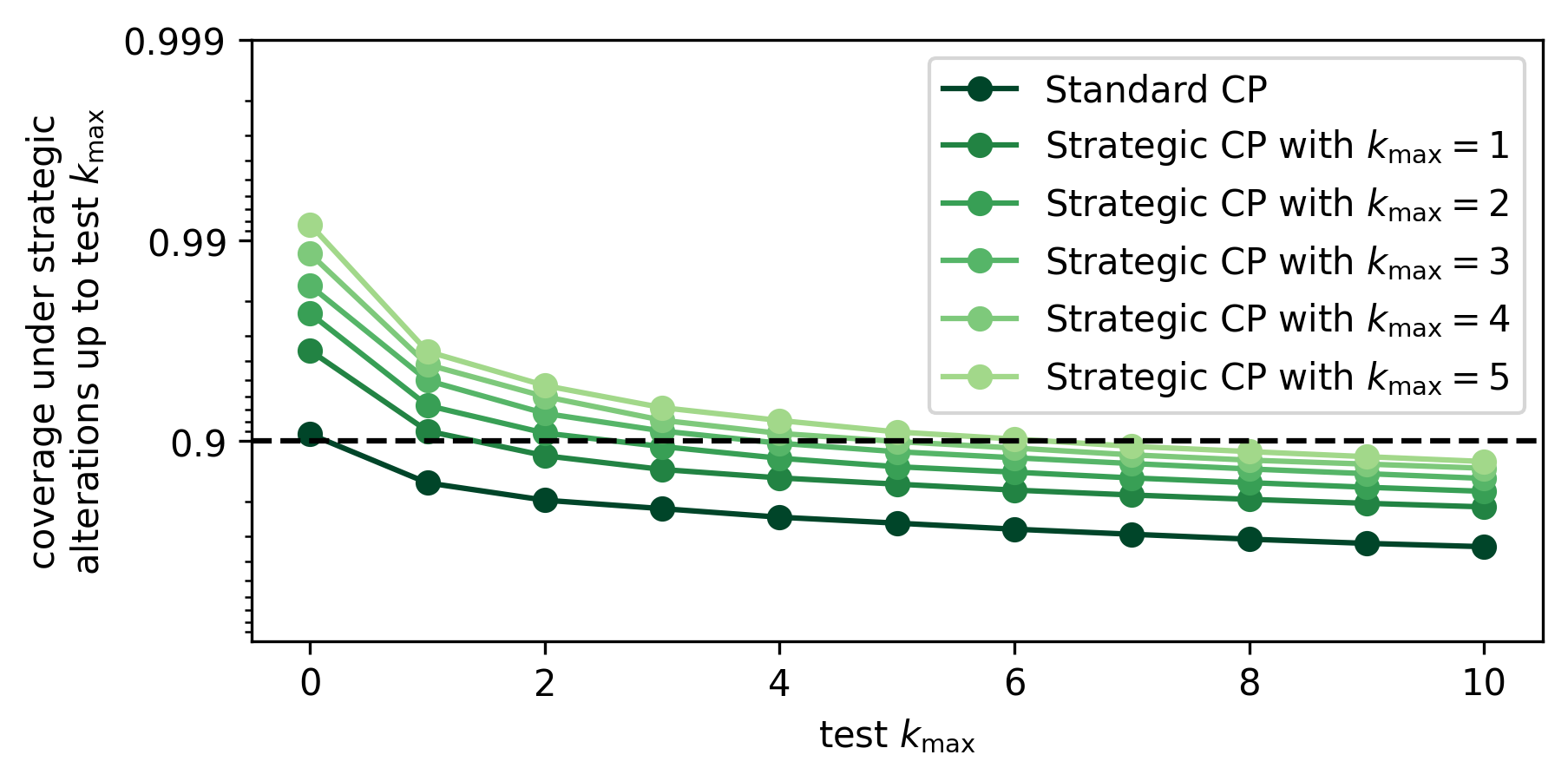}
        \caption{Strategic Conformal Prediction on top of a Plain \linebreak XGBoost Model}
    \end{subfigure}\ \quad
    \begin{subfigure}[t]{0.48\textwidth}
        \centering
        \includegraphics[width=\columnwidth]{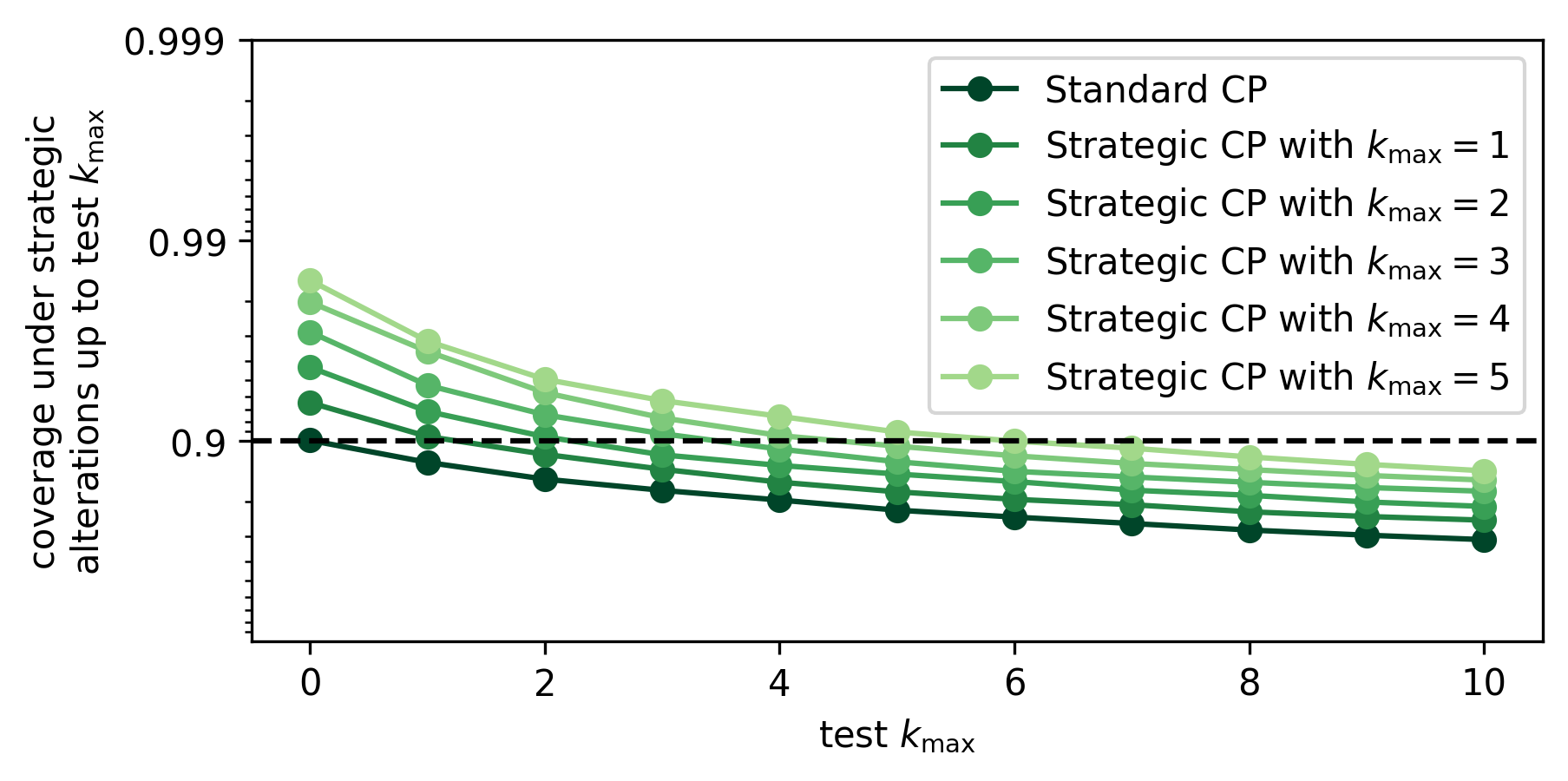}
        \caption{Strategic Conformal Prediction on top of a Strategic XGBoost Model}
    \end{subfigure}
    \caption{\texttt{taiwan}}
\end{figure}

\begin{figure}[H]
    \centering
    \begin{subfigure}[t]{0.48\textwidth}
        \centering
        \includegraphics[width=\columnwidth]{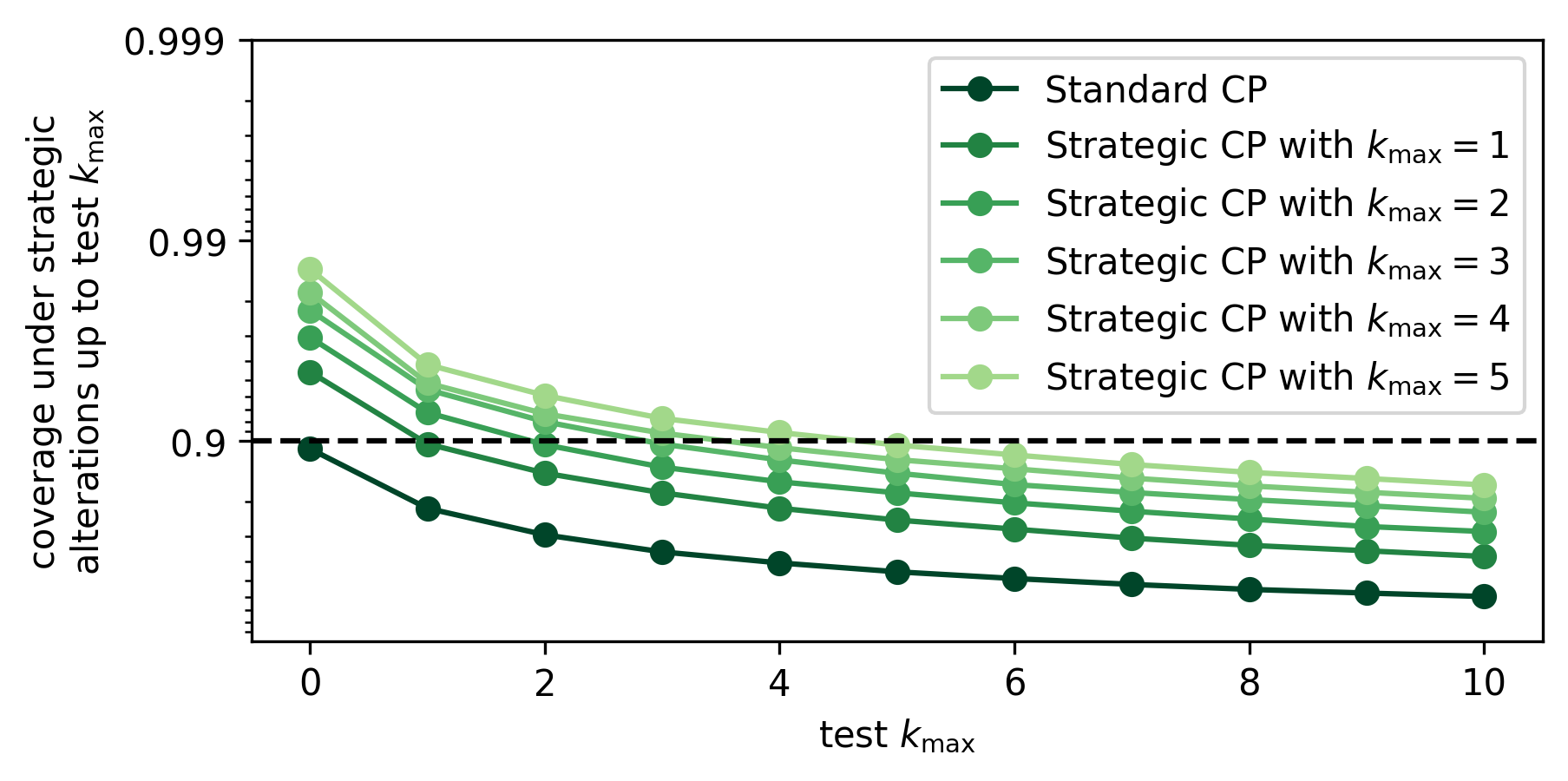}
        \caption{Strategic Conformal Prediction on top of a Plain \linebreak XGBoost Model}
    \end{subfigure}\ \quad
    \begin{subfigure}[t]{0.48\textwidth}
        \centering
        \includegraphics[width=\columnwidth]{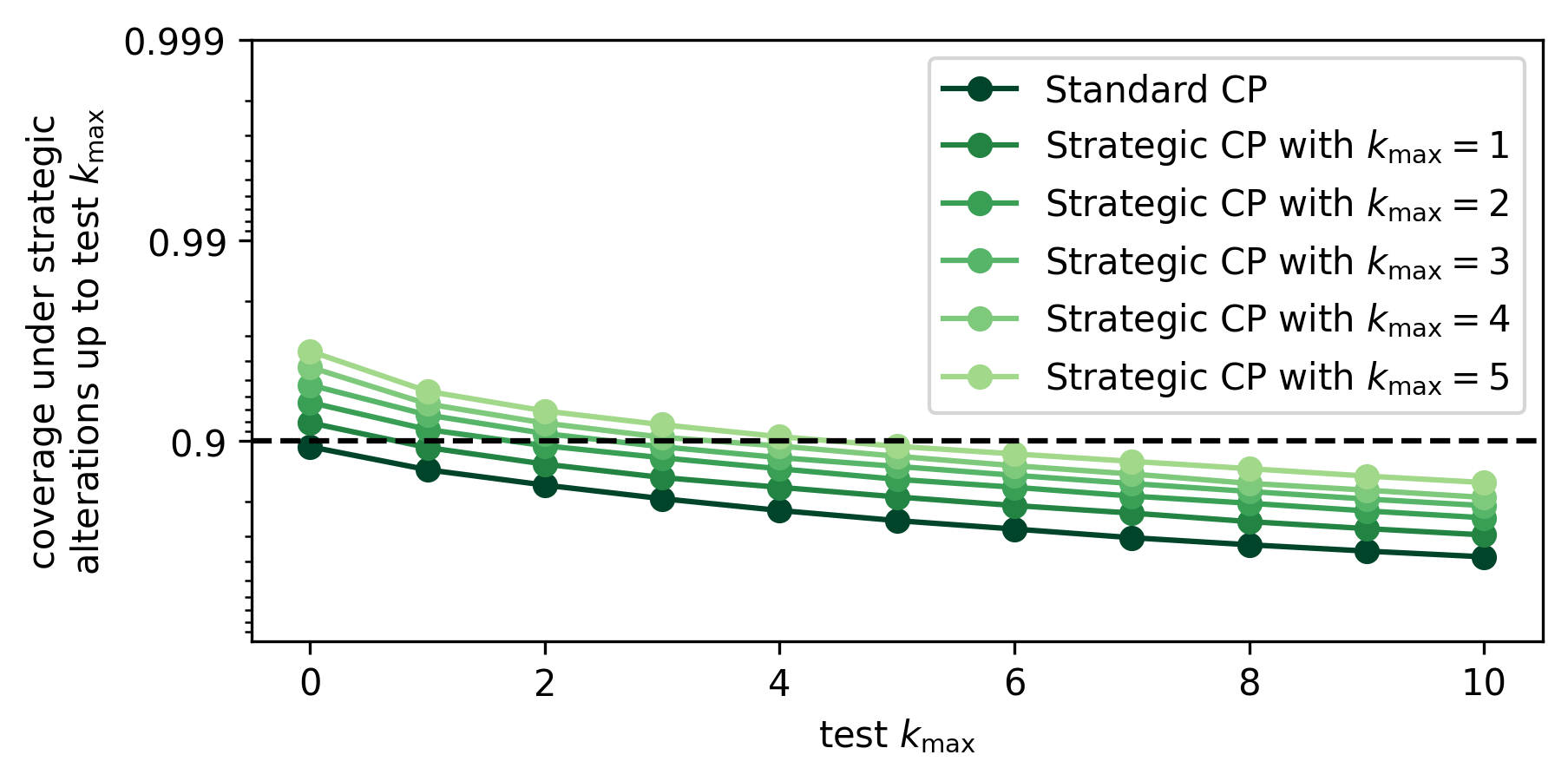}
        \caption{Strategic Conformal Prediction on top of a Strategic XGBoost Model}
    \end{subfigure}
    \caption{\texttt{bank-marketing}}
\end{figure}

\begin{figure}[H]
    \centering
    \begin{subfigure}[t]{0.48\textwidth}
        \centering
        \includegraphics[width=\columnwidth]{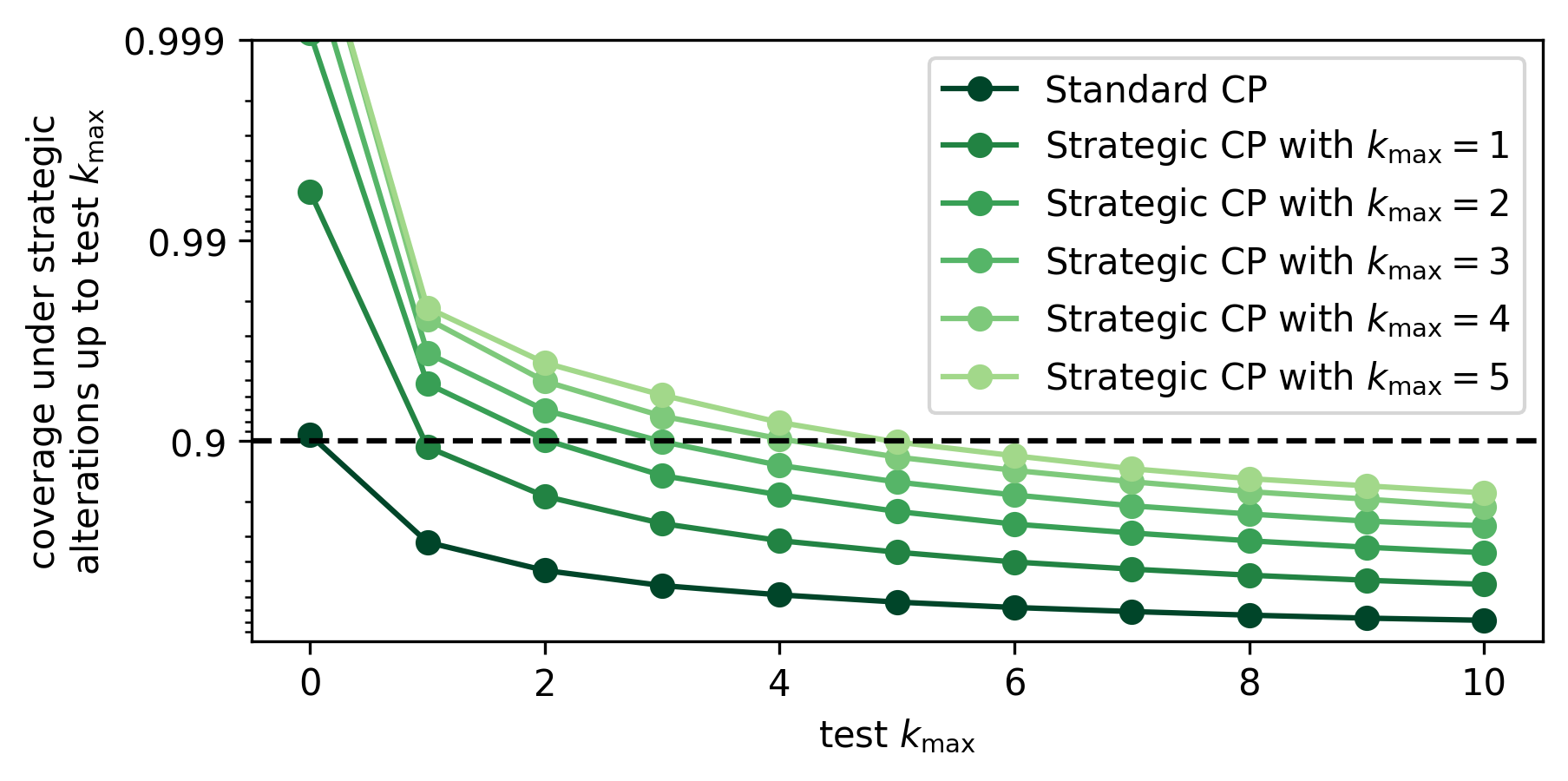}
        \caption{Strategic Conformal Prediction on top of a Plain \linebreak XGBoost Model}
    \end{subfigure}\ \quad
    \begin{subfigure}[t]{0.48\textwidth}
        \centering
        \includegraphics[width=\columnwidth]{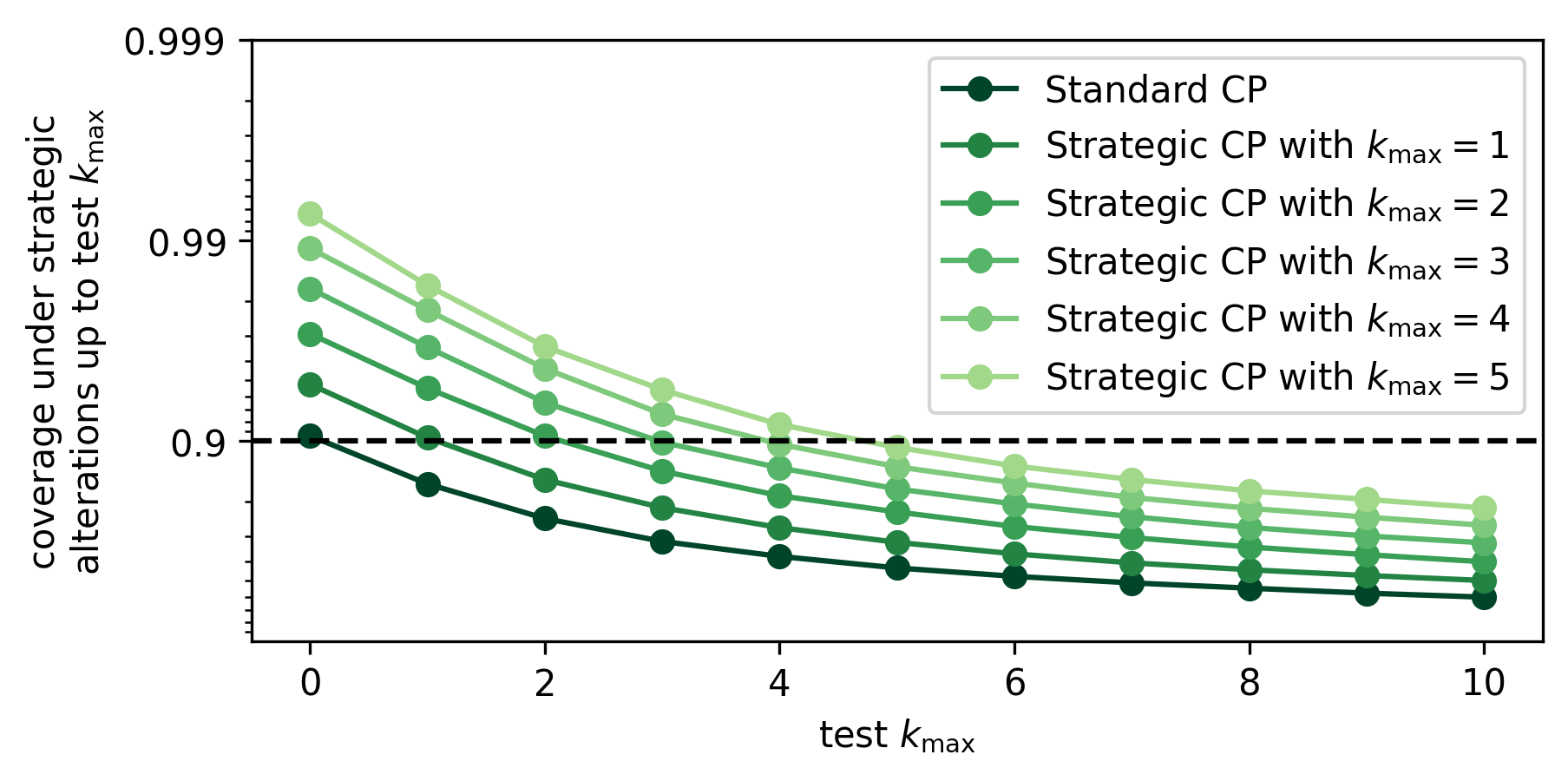}
        \caption{Strategic Conformal Prediction on top of a Strategic XGBoost Model}
    \end{subfigure}
    \caption{\texttt{census-income}}
\end{figure}

\subsection{Table 1: evaluation of our method on multiple datasets and forms of strategic alterations for $\alpha = 0.1$}

\begin{table}[H]
    \caption{Extended version of Table~1. Note that the news dataset has an average set size difference on another order of magnitude only due to the labels of its regression task being of much larger magnitude as well.}
    \centering
    \begin{tabular}{ccc cc c}
        & \multirow[b]{2}{*}{ \textbf{\shortstack{UNDERLYING \\ MODEL}}\vspace{-0.85em} }
        &
        & \multicolumn{2}{c}{ \textbf{\shortstack{STRATEGIC COVERAGE}} }
        & \multirow[b]{2}{*}{ \textbf{\shortstack{AVG SET \\ SIZE DIFF}}\vspace{-0.85em} }
        \\\cmidrule(lr){4-5}
        \textbf{DATASET}
        &
        & \textbf{$\Delta$s}
        & \textbf{OURS}
        & \textbf{STD CP}
        &
        \\
        \hline

        \multirow{4}{*}{\texttt{academic-dropout}}
        & \multirow{2}{*}{Plain XGBoost}
        & Utility-cost
        & $\mathbf{90\% \pm 2\%}$ & \textcolor{purple}{$\mathbf{57\% \pm 3\%}$} & $+1.08 \pm 0.05$ \\
        &
        & Rand. Search.
        & $\mathbf{92\% \pm 2\%}$ & \textcolor{purple}{$\mathbf{49\% \pm 3\%}$} & $+1.40 \pm 0.04$ \\
        & \multirow{2}{*}{Strategic XGBoost}
        & Utility-cost
        & $\mathbf{91\% \pm 2\%}$ & $76\% \pm 3\%$ & $+0.52 \pm 0.04$ \\
        &
        & Rand. Search.
        & $\mathbf{92\% \pm 2\%}$ & $75\% \pm 3\%$ & $+0.60 \pm 0.05$ \\[.15cm]

        \multirow{4}{*}{\texttt{spambase}}
        & \multirow{2}{*}{Plain XGBoost}
        & Utility-cost
        & $\mathbf{91\% \pm 2\%}$ & \textcolor{purple}{$\mathbf{15\% \pm 2\%}$} & $+1.06 \pm 0.04$ \\
        &
        & Rand. Search.
        & $\mathbf{87\% \pm 2\%}$ & \textcolor{purple}{$\mathbf{29\% \pm 3\%}$} & $+0.77 \pm 0.04$ \\
        & \multirow{2}{*}{Strategic XGBoost}
        & Utility-cost
        & $\mathbf{90\% \pm 2\%}$ & \textcolor{purple}{$\mathbf{40\% \pm 3\%}$} & $+0.38 \pm 0.03$ \\
        &
        & Rand. Search.
        & $\mathbf{89\% \pm 2\%}$ & \textcolor{purple}{$\mathbf{46\% \pm 3\%}$} & $+0.53 \pm 0.04$ \\[.15cm]

        \multirow{4}{*}{\texttt{shoppers}}
        & \multirow{2}{*}{Plain XGBoost}
        & Utility-cost
        & $\mathbf{89\% \pm 1\%}$ & \textcolor{purple}{$\mathbf{44\% \pm 2\%}$} & $+0.14 \pm 0.01$ \\
        &
        & Rand. Search.
        & $\mathbf{88\% \pm 1\%}$ & \textcolor{purple}{$\mathbf{47\% \pm 2\%}$} & $+0.13 \pm 0.01$ \\
        & \multirow{2}{*}{Strategic XGBoost}
        & Utility-cost
        & $\mathbf{89\% \pm 1\%}$ & $64\% \pm 2\%$ & $+0.08 \pm 0.01$ \\
        &
        & Rand. Search.
        & $\mathbf{88\% \pm 1\%}$ & $72\% \pm 2\%$ & $+0.08 \pm 0.01$ \\[.15cm]

        \multirow{4}{*}{\texttt{news}}
        & \multirow{2}{*}{Plain XGBoost}
        & Utility-cost
        & $\mathbf{90\% \pm 1\%}$ & \textcolor{purple}{$\mathbf{6\% \pm 1\%}$} & $+131331.63 \pm 0.00$ \\
        &
        & Rand. Search.
        & $\mathbf{91\% \pm 1\%}$ & \textcolor{purple}{$\mathbf{13\% \pm 1\%}$} & $+96299.98 \pm 0.00$ \\
        & \multirow{2}{*}{Strategic XGBoost}
        & Utility-cost
        & $\mathbf{89\% \pm 1\%}$ & \textcolor{purple}{$\mathbf{6\% \pm 0\%}$} & $+75250.51 \pm 00.00$ \\
        &
        & Rand. Search.
        & $\mathbf{89\% \pm 1\%}$ & \textcolor{purple}{$\mathbf{14\% \pm 1\%}$} & $+79403 \pm 0.00$ \\[.15cm]

        \multirow{4}{*}{\texttt{wine}}
        & \multirow{2}{*}{Plain XGBoost}
        & Utility-cost
        & $\mathbf{90\% \pm 2\%}$ & \textcolor{purple}{$\mathbf{49\% \pm 3\%}$} & $+1.63 \pm 0.00$ \\
        &
        & Rand. Search.
        & $\mathbf{90\% \pm 1\%}$ & \textcolor{purple}{$\mathbf{53\% \pm 3\%}$} & $+1.57 \pm 0.00$ \\
        & \multirow{2}{*}{Strategic XGBoost}
        & Utility-cost
        & $\mathbf{90\% \pm 2\%}$ & $61\% \pm 3\%$ & $+1.86 \pm 0.00$ \\
        &
        & Rand. Search.
        & $\mathbf{90\% \pm 2\%}$ & $68\% \pm 3\%$ & $+1.08 \pm 0.00$ \\[.15cm]

        \multirow{4}{*}{\texttt{productivity}}
        & \multirow{2}{*}{Plain XGBoost}
        & Utility-cost
        & $\mathbf{91\% \pm 3\%}$ & \textcolor{purple}{$\mathbf{38\% \pm 6\%}$} & $+0.48 \pm 0.00$ \\
        &
        & Rand. Search.
        & $\mathbf{91\% \pm 3\%}$ & \textcolor{purple}{$\mathbf{42\% \pm 6\%}$} & $+0.43 \pm 0.00$ \\
        & \multirow{2}{*}{Strategic XGBoost}
        & Utility-cost
        & $\mathbf{91\% \pm 3\%}$ & \textcolor{purple}{$\mathbf{45\% \pm 7\%}$} & $+0.42 \pm 0.00$ \\
        &
        & Rand. Search.
        & $\mathbf{89\% \pm 4\%}$ & $74\% \pm 5\%$ & $+0.24 \pm 0.00$ \\[.15cm]

        \multirow{4}{*}{\texttt{taiwan}}
        & \multirow{2}{*}{Plain XGBoost}
        & Utility-cost
        & $\mathbf{90\% \pm 1\%}$ & $77\% \pm 1\%$ & $+0.46 \pm 0.01$ \\
        &
        & Rand. Search.
        & $\mathbf{91\% \pm 1\%}$ & $74\% \pm 1\%$ & $+0.41 \pm 0.01$ \\
        & \multirow{2}{*}{Strategic XGBoost}
        & Utility-cost
        & $\mathbf{90\% \pm 1\%}$ & $75\% \pm 1\%$ & $+0.39 \pm 0.01$ \\
        &
        & Rand. Search.
        & $\mathbf{91\% \pm 1\%}$ & $78\% \pm 1\%$ & $+0.32 \pm 0.01$ \\[.15cm]

        \multirow{4}{*}{\texttt{bank-marketing}}
        & \multirow{2}{*}{Plain XGBoost}
        & Utility-cost
        & $\mathbf{90\% \pm 1\%}$ & \textcolor{purple}{$\mathbf{52\% \pm 1\%}$} & $+0.23 \pm 0.01$ \\
        &
        & Rand. Search.
        & $\mathbf{89\% \pm 1\%}$ & \textcolor{purple}{$\mathbf{55\% \pm 1\%}$} & $+0.22 \pm 0.01$ \\
        & \multirow{2}{*}{Strategic XGBoost}
        & Utility-cost
        & $\mathbf{90\% \pm 1\%}$ & $73\% \pm 1\%$ & $+0.04 \pm 0.00$ \\
        &
        & Rand. Search.
        & $\mathbf{89\% \pm 1\%}$ & $76\% \pm 1\%$ & $+0.06 \pm 0.01$ \\[.15cm]

        \multirow{4}{*}{\texttt{census-income}}
        & \multirow{2}{*}{Plain XGBoost}
        & Utility-cost
        & $\mathbf{89\% \pm 1\%}$ & \textcolor{purple}{$\mathbf{37\% \pm 1\%}$} & $+1.65 \pm 0.02$ \\
        &
        & Rand. Search.
        & $\mathbf{90\% \pm 1\%}$ & \textcolor{purple}{$\mathbf{36\% \pm 1\%}$} & $+1.74 \pm 0.02$ \\
        & \multirow{2}{*}{Strategic XGBoost}
        & Utility-cost
        & $\mathbf{90\% \pm 1\%}$ & $60\% \pm 1\%$ & $+0.99 \pm 0.02$ \\
        &
        & Rand. Search.
        & $\mathbf{89\% \pm 1\%}$ & \textcolor{purple}{$\mathbf{57\% \pm 1\%}$} & $+1.03 \pm 0.02$
    \end{tabular}
    \vspace{-1em}
\end{table}

\subsection{Extra figure: coverage under change of the stochastic step}

\begin{figure}[H]
    \centering
    \includegraphics[width=0.5\columnwidth]{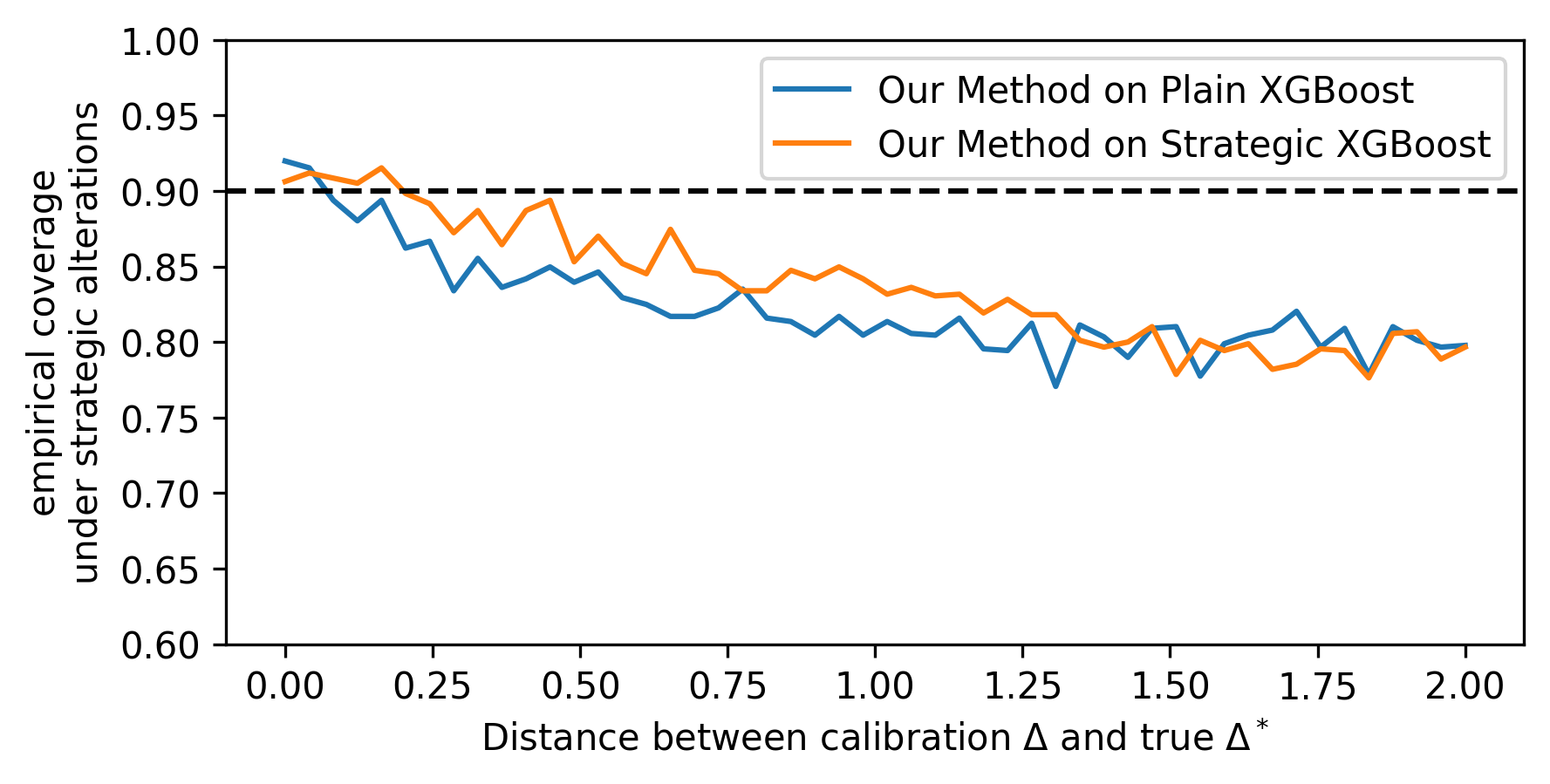}
    \caption{\texttt{academic-dropout}}
\end{figure}

\begin{figure}[H]
    \centering
    \includegraphics[width=0.5\columnwidth]{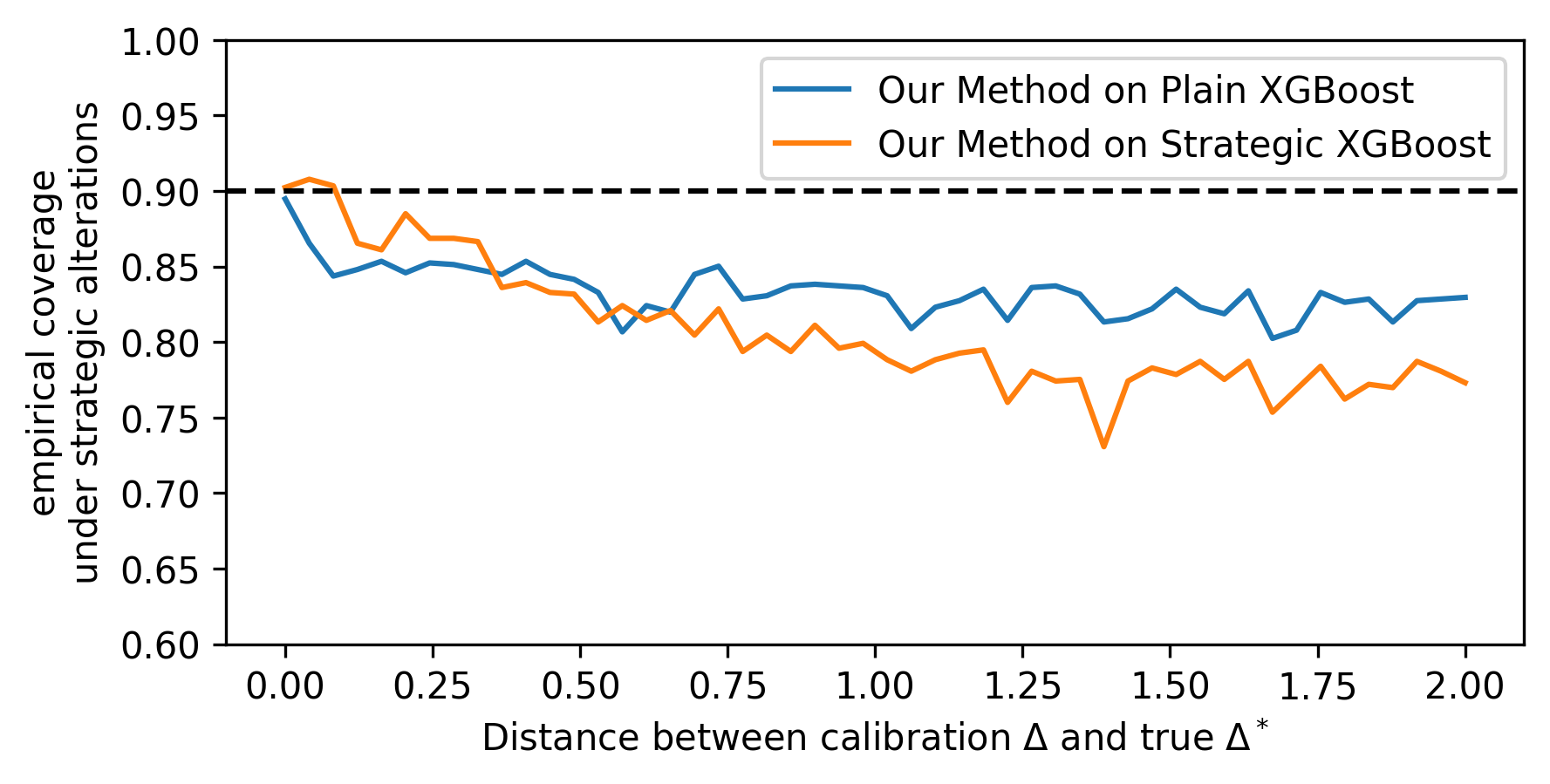}
    \caption{\texttt{spambase}}
\end{figure}

\begin{figure}[H]
    \centering
    \includegraphics[width=0.5\columnwidth]{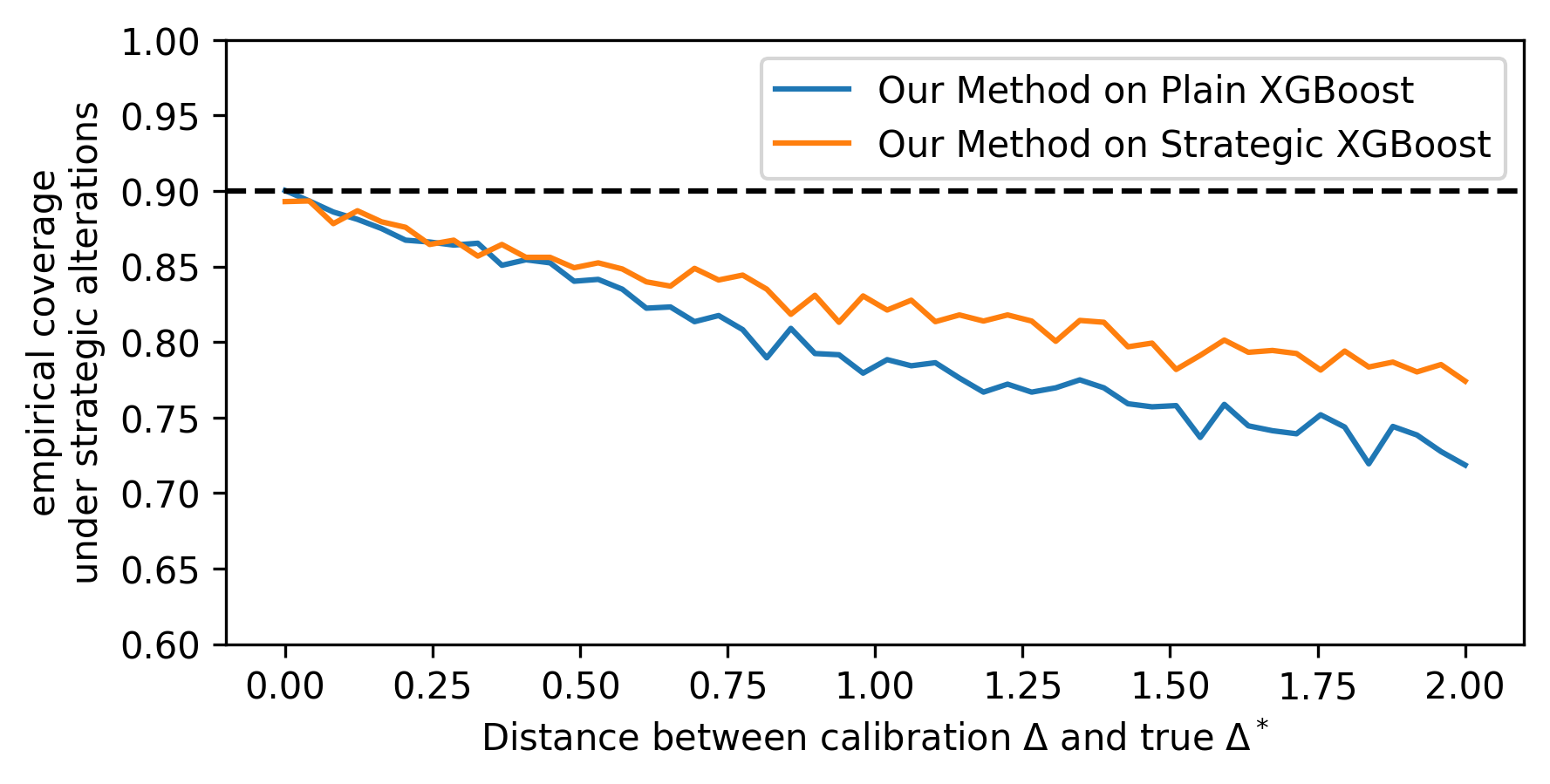}
    \caption{\texttt{shoppers}}
\end{figure}

\begin{figure}[H]
    \centering
    \includegraphics[width=0.5\columnwidth]{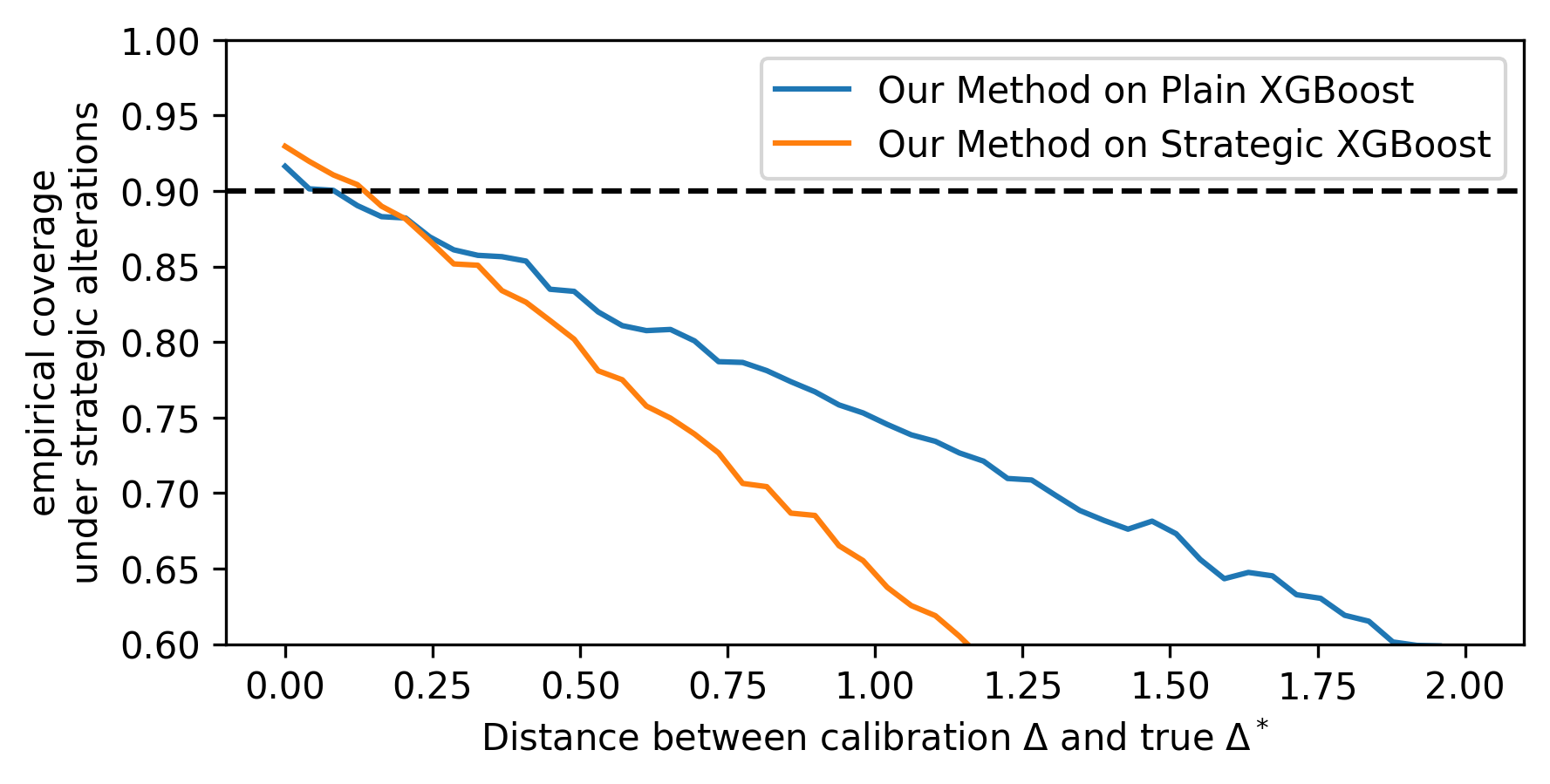}
    \caption{\texttt{news}}
\end{figure}

\begin{figure}[H]
    \centering
    \includegraphics[width=0.5\columnwidth]{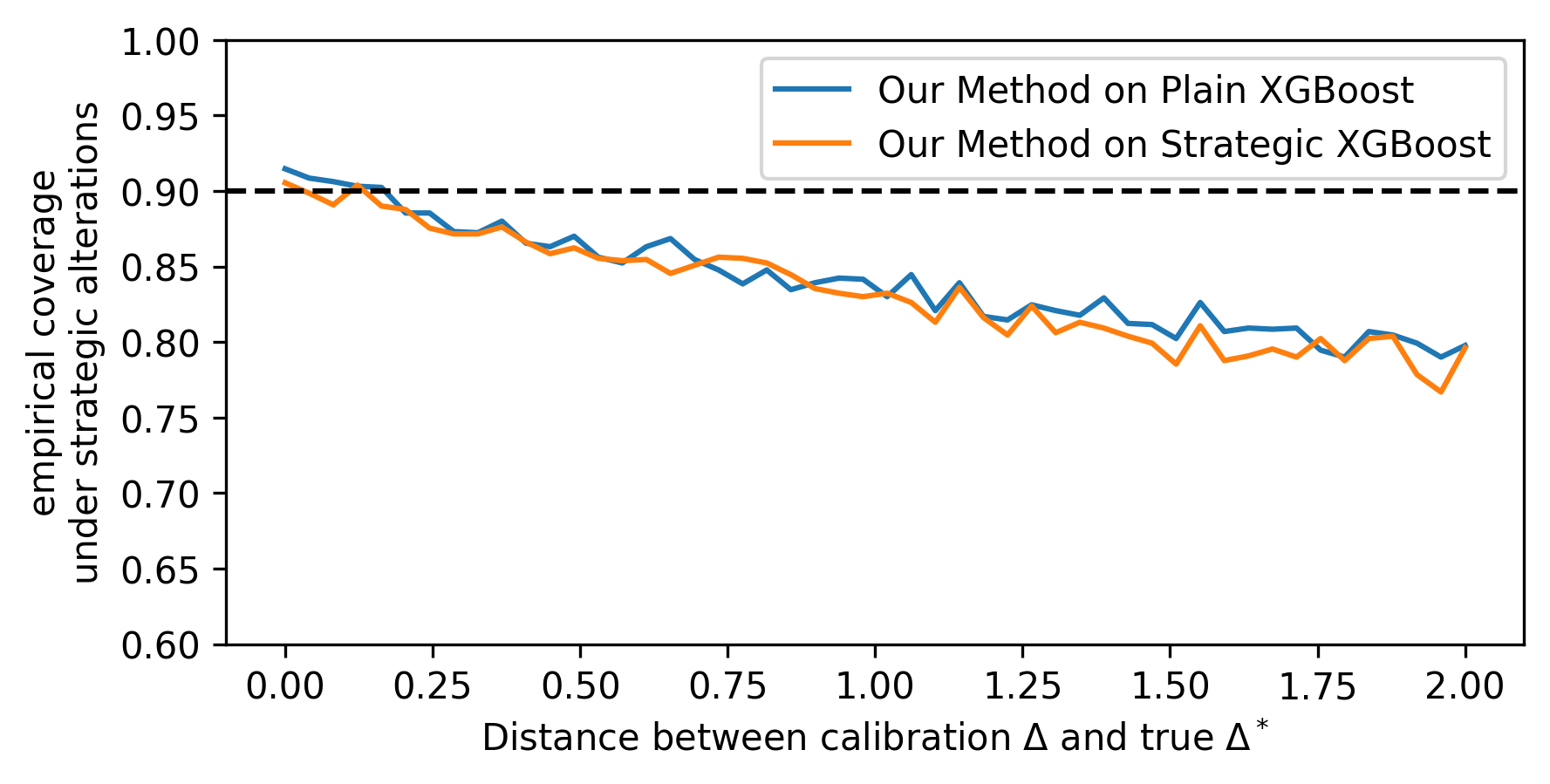}
    \caption{\texttt{wine}}
\end{figure}

\begin{figure}[H]
    \centering
    \includegraphics[width=0.5\columnwidth]{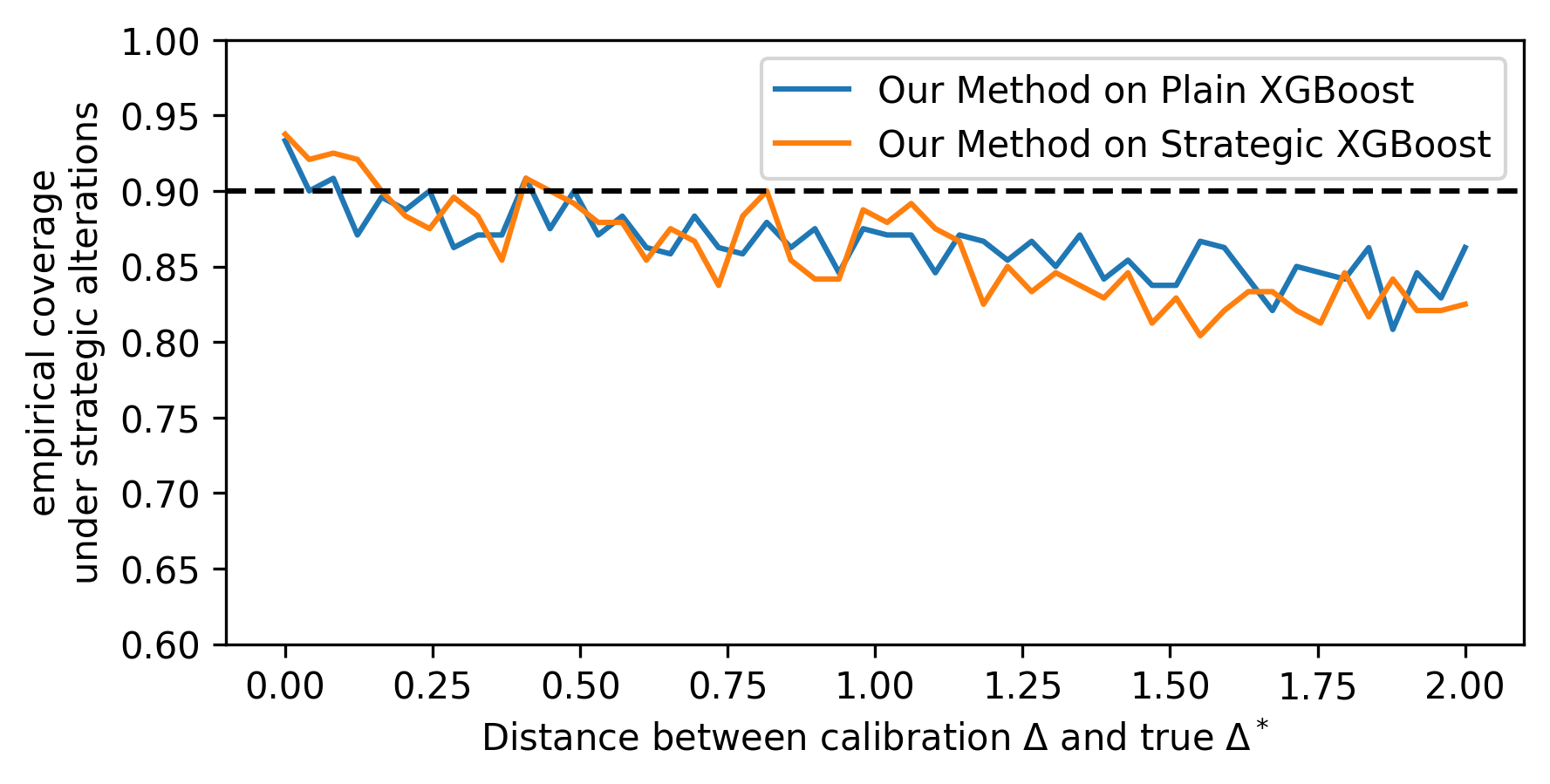}
    \caption{\texttt{productivity}}
\end{figure}

\begin{figure}[H]
    \centering
    \includegraphics[width=0.5\columnwidth]{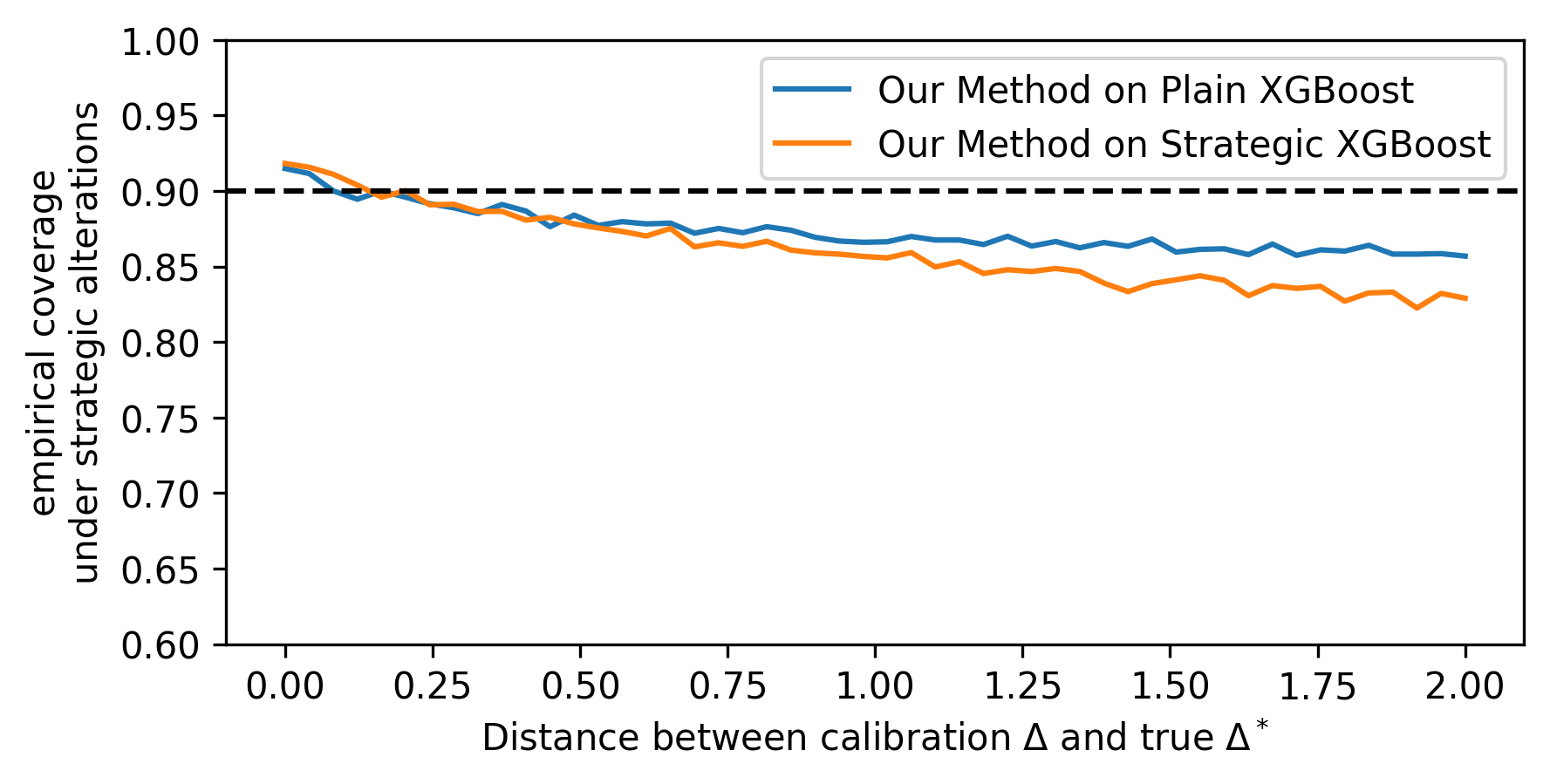}
    \caption{\texttt{taiwan}}
\end{figure}

\begin{figure}[H]
    \centering
    \includegraphics[width=0.5\columnwidth]{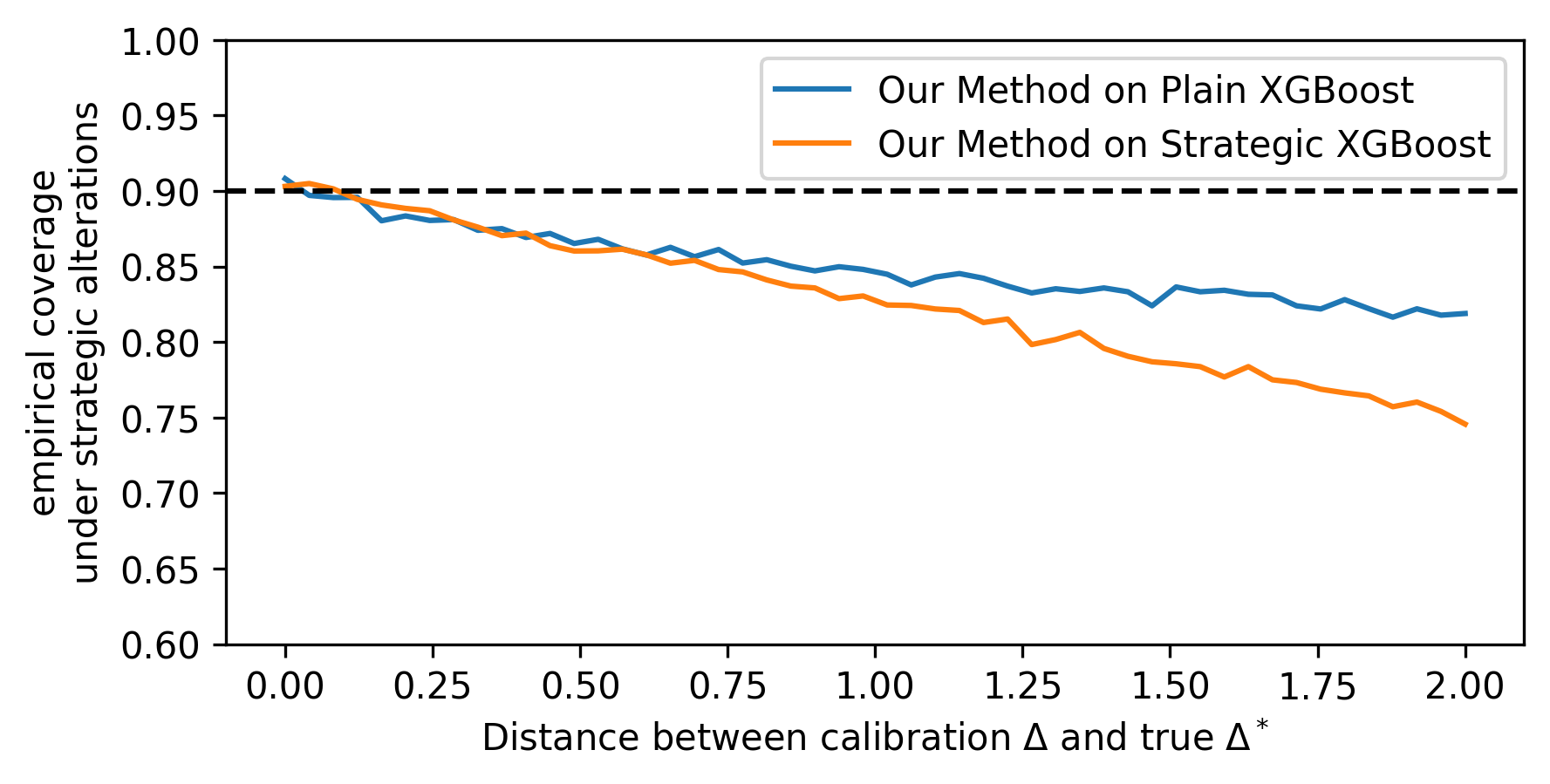}
    \caption{\texttt{bank-marketing}}
\end{figure}

\begin{figure}[H]
    \centering
    \includegraphics[width=0.5\columnwidth]{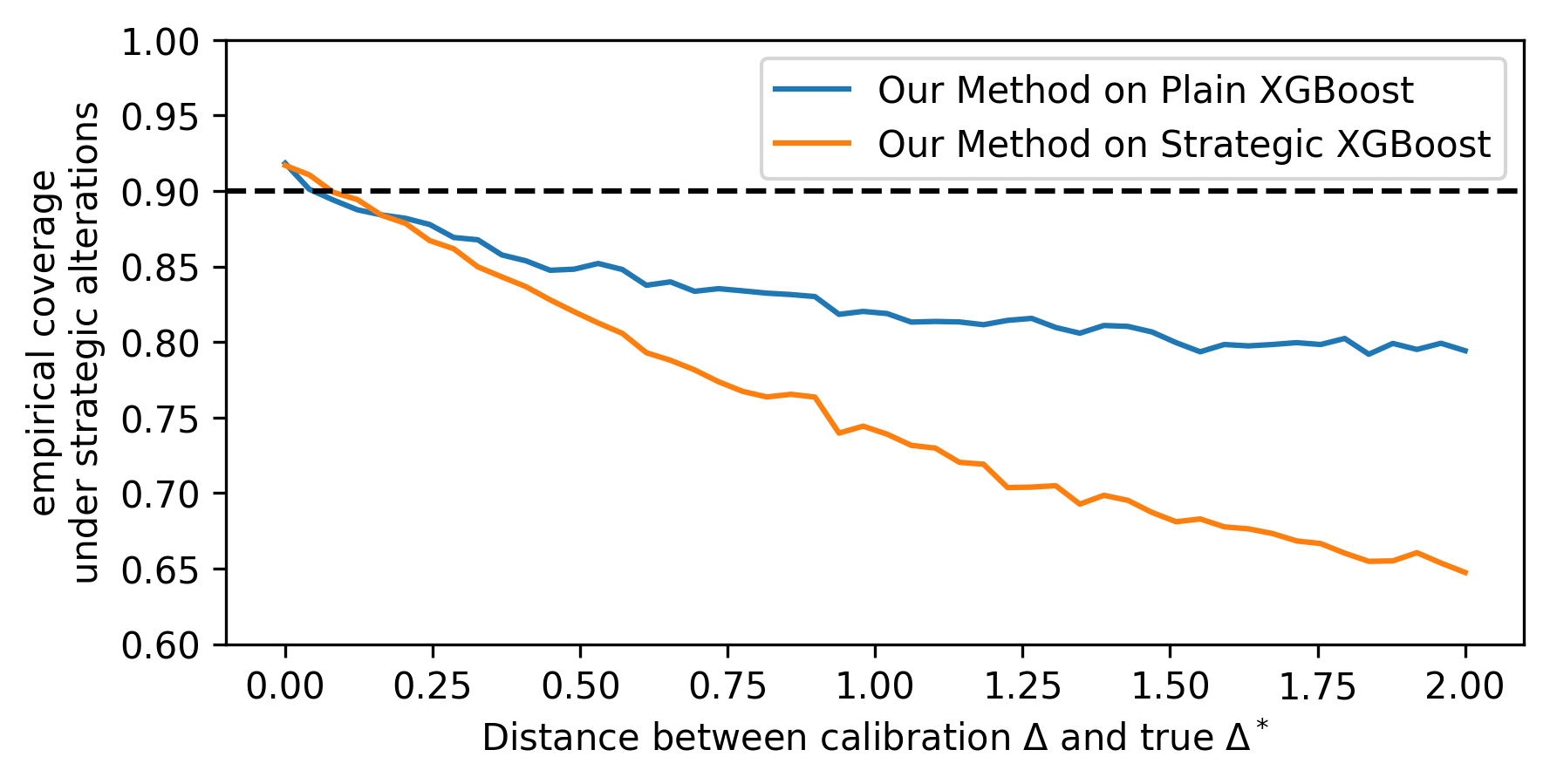}
    \caption{\texttt{census-income}}
\end{figure}

\end{document}